\newtheorem{theorem}{Theorem}[section]
\newtheorem{lemma}[theorem]{Lemma}
\newtheorem{proposition}[theorem]{Proposition}
\newtheorem{corollary}[theorem]{Corollary}
\newtheorem{notation}[theorem]{Notation}
\theoremstyle{definition}
\newtheorem{definition}[theorem]{Definition}
\newtheorem{example}[theorem]{Example}
\theoremstyle{remark}
\newtheorem{remark}[theorem]{Remark}
\numberwithin{equation}{section}
\begin{document}

\newcommand {\AT} {\texttt{\textup{Atoms}}}
\newcommand {\Args} {\texttt{\textup{Args}}}
\newcommand {\Arg} {\texttt{\textup{Arg}}}
\newcommand {\Att} {\texttt{\textup{Att}}}
\newcommand {\Def} {\texttt{\textup{Def}}}
\newcommand {\Con} {\texttt{\textup{Con}}}
\newcommand {\Cons} {\texttt{\textup{Cons}}}
\newcommand {\Sup} {\texttt{\textup{Sup}}}
\newcommand {\Exts} {\texttt{\textup{Exts}}}
\newcommand {\Acc} {\texttt{\textup{Acc}}}
\newcommand {\NotAcc} {\texttt{\textup{NotAcc}}}
\newcommand {\sem} {\texttt{\textup{sem}}}
\newcommand {\Subs} {\texttt{\textup{Subs}}}
\newcommand {\DefBy} {\texttt{\textup{DefBy}}}
\newcommand {\NotDef} {\texttt{\textup{NotDef}}}
\newcommand {\Rep} {\texttt{\textup{Repairs}}}
\newcommand {\post} {\texttt{\textup{Post}}}

\newcommand {\ap} {\texttt{\textup{AP}}}
\newcommand {\fp} {\texttt{\textup{FP}}}
\newcommand {\ad} {\texttt{\textup{Ad}}}
\newcommand {\te} {\texttt{\textup{te}}}
\newcommand {\prof} {\texttt{\textup{P}}}
\newcommand {\lect} {\texttt{\textup{Le}}}
\newcommand {\base} {\texttt{\textup{Base}}}
\newcommand {\subs} {\texttt{\textup{Subs}}}
\newcommand {\cl} {\texttt{\textup{Cl}}}
\newcommand {\set} {\texttt{\textup{Set}}}
\newcommand {\conflict} {\texttt{\textup{Conflicts}}}
\newcommand {\mods} {\texttt{\textup{mods}}}
\newcommand {\ans} {\texttt{\textup{ans}}}
\newcommand {\chase} {\texttt{\textup{chase}}}
\newcommand {\adm} {\texttt{\textup{adm}}}
\newcommand {\grd} {\texttt{\textup{grd}}}
\newcommand {\stb} {\texttt{\textup{stb}}}
\newcommand {\prf} {\texttt{\textup{prf}}}
\newcommand {\cmp} {\texttt{\textup{cmp}}}
\newcommand {\sce} {\texttt{\textup{sc}}}
\newcommand {\cre} {\texttt{\textup{cr}}}
\newcommand {\ar} {\texttt{\textup{AR}}}
\newcommand {\iar} {\texttt{\textup{IAR}}}
\newcommand {\bra} {\texttt{\textup{BR}}}

\newcommand {\emp} {\texttt{\textup{Em}}}
\newcommand {\staff} {\texttt{\textup{StaffCS}}}
\newcommand {\rese} {\texttt{\textup{Re}}}
\newcommand {\ta} {\texttt{\textup{TA}}}
\newcommand {\tenu} {\texttt{\textup{tenured}}}
\newcommand {\teAs} {\texttt{\textup{taOf}}}
\newcommand {\uc} {\texttt{\textup{UC}}}
\newcommand {\gc} {\texttt{\textup{GC}}}
\newcommand {\vi} {\texttt{\textup{v}}}
\newcommand {\kr} {\texttt{\textup{KR}}}
\newcommand {\kd} {\texttt{\textup{KD}}}
\newcommand {\ml} {\texttt{\textup{ML}}}

\newcommand {\TG} {\texttt{\textup{TG}}}
\newcommand {\CO} {\texttt{\textup{C}}}
\newcommand {\ID} {\texttt{\textup{ID}}}
 \newcommand {\x} {\Vec{x}}
 \newcommand {\y} {\Vec{y}}
\newcommand {\D} {\texttt{\textup{D}}}
\newcommand {\C} {\texttt{\textup{C}}}
\newcommand {\tL} {\texttt{\textup{L}}}
\newcommand {\tT} {\texttt{\textup{T}}}
\newcommand {\tS} {\texttt{\textup{S}}}

\newcommand {\body} {\texttt{\textup{body}}}
\newcommand {\head} {\texttt{\textup{head}}}
\newcommand {\fa} {\texttt{\textup{fact}}}
\newcommand {\cla} {\texttt{\textup{claim}}}
\newcommand {\off} {\texttt{\textup{offer}}}
\newcommand {\why} {\texttt{\textup{why}}}
\newcommand {\whyn} {\texttt{\textup{whynot}}}
\newcommand {\cont} {\texttt{\textup{contrary}}}
\newcommand {\cond} {\texttt{\textup{concede}}}
\newcommand {\ret} {\texttt{\textup{retract}}}
\newcommand {\pas} {\texttt{\textup{pass}}}

\newcommand{\cn}{\texttt{\textup{CN}}}
\newcommand{\cnb}{\overline{\cn}}
\newcommand{\pr}{\texttt{\textup{Pr}}}
\newcommand{\lf}{\mathcal{LF}}
\newcommand{\mX}{\mathcal{X}}
\newcommand{\mL}{\mathcal{L}}
\newcommand{\mF}{\mathcal{F}}
\newcommand{\mK}{\mathcal{K}}
\newcommand{\mC}{\mathcal{C}}
\newcommand{\mR}{\mathcal{R}}
\newcommand{\mI}{\mathcal{I}}
\newcommand{\mP}{\mathcal{P}}
\newcommand{\mD}{\mathcal{D}}
\newcommand{\mAF}{\mathcal{AF}}
\newcommand{\datalogPM}{$\text{Datalog}^\pm$}
\newcommand{\mB}{\mathcal{B}}
\newcommand{\mS}{\mathcal{S}}
\newcommand{\mA}{\mathcal{A}}
\newcommand{\mU}{\mathcal{U}}
\newcommand{\mT}{\mathcal{T}}
\newcommand{\mE}{\mathcal{E}}
\newcommand{\mG}{\mathcal{G}}

\newcommand {\rl} {\texttt{\textup{Rule}}}
\newcommand {\child} {\texttt{\textup{Child}}}

\newcommand{\mDE}{\mathcal{DE}}
\newcommand{\mCU}{\mathcal{CU}}
\newcommand{\FR}{\texttt{\textup{FRe}}}

\newcommand {\msc} {\texttt{\textup{MCS}}}
\newcommand {\para} {\texttt{\textup{para}}}
\newcommand {\f} {\texttt{\textup{fa}}}
\newcommand {\nf} {\texttt{\textup{nf}}}
\newcommand {\um} {\texttt{\textup{um}}}
\newcommand {\op} {\texttt{\textup{O}}}
\newcommand {\po} {\texttt{\textup{P}}}
\newcommand {\pl} {\texttt{\textup{pl}}}
\newcommand {\PL} {\texttt{\textup{PL}}}
\newcommand {\argu} {\texttt{\textup{arg}}}
\newcommand {\G} {\texttt{\textup{G}}}
\newcommand {\tg} {\texttt{\textup{ta}}}
\newcommand {\id} {\texttt{\textup{id}}}

\newcommand{\NT}{\ensuremath{\mathsf{N_t}}\xspace}
\newcommand{\skol}{\textsf{sk}}

\title{Dialogue-based Explanations for Logical Reasoning using Structured Argumentation}

\author{Loan Ho}
\email{loanthuyho.cs@gmail.com}
\author{Stefan Schlobach}
\email{k.s.schlobach@vu.nl}
 \address{Vrije University Amsterdam, The Netherlands}

%

%

\keywords{Argumentation, Inconsistency-tolerant semantics, Dialectical proof procedures, Explanation}
\begin{abstract}
  
  The problem of explaining inconsistency-tolerant reasoning in knowledge bases (KBs) is a prominent topic in Artificial Intelligence (AI). While there is some work on this problem, the explanations provided by existing approaches often lack critical information or fail to be expressive enough for non-binary conflicts. In this paper, we identify structural weaknesses of the state-of-the-art and propose a generic argumentation-based approach to address these problems. This approach is defined for logics involving reasoning with maximal consistent subsets and shows how any such logic can be translated to argumentation. Our work provides dialogue models as dialectic-proof procedures to compute and explain a query answer wrt inconsistency-tolerant semantics. This allows us to construct dialectical proof trees as explanations, which are more expressive and arguably more intuitive than existing explanation formalisms.
  
  
  %

\end{abstract}
\maketitle   

\section{Introduction}
This paper addresses the problem of explaining logical reasoning in (inconsistent) KBs. Several approaches have been proposed by~\cite{Lukasiewicz2020,Thomas2022Neg,Meghyn2019,Alrabbaa2022}, which mostly include \emph{set-based explanations} and \emph{proof-based explanations}.
Set-based explanations, which are responsible for the derived answer, are defined as \emph{minimal sets of facts} in the existential rules~\cite{Lukasiewicz2020,Thomas2022Neg} or as \emph{causes} in Description Logics (DLs)~\cite{Meghyn2019}.
 Additionally, the work in \cite{Meghyn2019} provides the notion of \textit{conflicts} that are \emph{minimal sets of assertions} responsible for a KB to be inconsistent. 
Set-based explanations present the necessary premises of entailment and, as such, do not articulate the (often non-obvious) reasoning that connects those premises with the conclusion nor track conflicts. 
Proof-based explanations provide graphical representations to allow users to understand the reasoning progress better~\cite{Alrabbaa2022}. Unfortunately, the research in this area generally focuses on reasoning in \emph{consistent} KBs.

The limit of the above approaches is that they lack the tracking of contradictions, whereas argumentation can address this issue.
Clearly, argumentation offers a potential solution to address inconsistencies. Those are divided into three approaches:
\begin{itemize}
       \item \emph{Argumentation approach based on Deductive logic}: Various works propose instantiations of abstract argumentation (AFs) for  \datalogPM~\cite{ARIOUA201776,Yun2020SetsOA}, Description Logic~\cite{ZhangL13} or Classical Logic~\cite{DAgostinoM18}, focusing on the translation of KBs to argumentation without considering explanations.
       In~\cite{LoanHo2022}, explanations can be viewed as dialectical trees defined abstractly, requiring a deep understanding of formal arguments and trees, making the work impractical for non-experts.
       In~\cite{Yun2020SetsOA,yun2018}, argumentation with collective attacks is proposed to capture non-binary conflicts in \datalogPM, i.e., assuming that every conflict has more than two formulas.

        \item \emph{Sequent-based argumentation}~\cite{ArieliS19} and its extension (\emph{Hypersequent-based argumentation})~\cite{BorgAS17}, using \textit{Propositional Logic}, provide non-monotonic extensions for Gentzen-style proof systems in terms of argumen-tation-based.
        Moreover, the authors conclude by wishing future work to include “the study of more expressive formalisms, like those that are based on first-order logics”~\cite{ArieliS19}.
       
       \item \emph{Rule-based argumentation}: 
       \emph{DeLP/DeLP} with \emph{collective attacks} are introduced for defeasible logic programming~\cite{Alejandro2014,AlsinetBG10}. However, in~\cite{Yun2020SetsOA}, the authors claim that they cannot instantiate DeLP for \datalogPM, since DeLP only considers ground rules.
       In \cite{Prakken2002,ModgilP14}, \emph{ASPIC/ASPIC+} is introduced for defeasible logic. Following~\cite{Amgoud12}, the logical formalism in ASPIC+ is ill-defined, i.e., the contrariness relation is not general enough to consider n-ary constraints. This issue is stated in ~\cite{Yun2020SetsOA} for \datalogPM, namely, the ASPIC+ cannot be directly instantiated with Datalog. The reasons behind this are that Datalog does not have the negation and the contrariness function of ASPIC+ is not general for this language.

        Notable works include assumption-based argumentation (ABA)~\cite{Dung2009} and ABA with collective attack~\cite{DimopoulosD0R0W24}, which are applied for default logic and logic programming. However, ABAs ignore cases of the inferred assumptions conflicting, which is allowed in the existential rules, Description Logic and Logic Programming with Negation as Failure in the Head. We call the ABAs "\emph{flat} ABAs".       
        In~\cite{SCHULZ_TONI_2016}, "flat" ABAs link to Answer Set Programming but only consider a single conflict for each assumption. In~\cite{Rapberger2024,Lehtonen2024},
        "\emph{Non-flat}" ABAs overcome the limits of "flat" ABAs, which allow the inferred assumptions to conflict. However, like ASPIC+, the non-flat ABAs ignore the n-ary constraints case.
        \emph{Contrapositive} ABAs~\cite{HEYNINCK2020103} and its collective attack version~\cite{ArieliH24}, which use \emph{contrapositive propositional logic}, propose extended forms for  'flat' and 'non-flat' ABAs. While~\cite{ArieliH24} mainly focuses on representation (which can be simulated in our setting, see Section~\ref{subsec:relation-framework}), we extend our study to proof procedures in AFs with collective attacks.

\end{itemize}

Argumentation offers dialogue games to determine and explain the acceptance of propositions for classical logic~\cite{Castagna21}, for \datalogPM ~\cite{ARIOUA2017244,Arioua2014FE,Arioua2016,ARIOUA201776}, for logic programming/ default logic~\cite{ThangDH12,Xiuyi14}, and for defeasible logic~\cite{Prakken05}. However, the models have limitations.
In~\cite{ARIOUA2017244,Arioua2014FE}, the dialogue models take place between a domain expert but are only applied to a specific domain (agronomy). 
In~\cite{Arioua2016,ARIOUA201776,Prakken05,Castagna21}, persuasion dialogues (dialectic proof procedures) generate the abstract dispute trees defined abstractly that include arguments and attacks and ignore the internal structure of the argument. These works lack exhaustive explanations, making them insufficient for understanding inference steps and argument structures. The works in~\cite{DUNG2006114,DUNG2007642} provide dialectical proof procedures, while the works~\cite{Xiuyi14,ThangDH12} offer dialogue games (as a distributed mechanism) for "flat" ABAs to determine sentence acceptance under (grounded/ admissible/ ideal) semantics. Although the works in~~\cite{Xiuyi14,ThangDH12} are similar to our idea of using dialogue and tree, these approaches do not generalize to n-ary conflicts.

The existing studies are mostly restricted to (1) specific logic or have limitations in representation aspects, (2) AFs with binary conflicts, and (3) lack exhaustive explanations. 
This paper addresses the limitations by introducing a general framework that provides dialogue models as dialectical proof procedures for acceptance in structured argumentation.
The following is a simple illustration of how our approach works in a university example.

\begin{example}
\label{ex:motivation-ex}
Consider inconsistent knowledge about a university domain, in which we know that: \emph{lecturers} $(\lect)$ and \emph{researchers} $(\rese)$ are \emph{employers} $(\emp)$; \emph{full professors} $(\fp)$ are researchers; everyone who is a teaching assistant $(\teAs)$ of an \emph{undergraduate course} $(\uc)$ is a teaching assistant $(\ta)$; everyone who teaches a course is a lecturer and everyone who teaches a \emph{graduate course} $(\gc)$ is a full professor. 
However, teaching assistants can be neither researchers nor lecturers, which leads to inconsistency. 
We also know that an individual \emph{Victor}  apparently is or was a teaching assistant of the KD course $(\teAs(\vi, \kd))$, and the KD course is an undergraduate course $(\uc(\kd))$. Additionally, \emph{Victor} teaches either the KD course $(\te(\vi, \kd))$ or the KR course $(\te(\vi, \kr))$, where the KR course is a graduate course $(\gc(\kr))$.
The KB $\mK_1$ is modelled as follows:
\begin{align*}
\mF_1 = & \{\teAs(\vi,\kd),\ \te(\vi, \kd),\ \uc(\kd),\ \te(\vi,\kr),\ \gc(\kr) \} \\
\mR_1 = & \{ r_1:\ \lect(x) \rightarrow \emp(x) ,\
  r_2:\ \rese(x) \rightarrow \emp(x),\ \\
& r_3:\ \fp(x) \rightarrow \rese(x) ,\
  r_4:\ \teAs(x,y) \land \uc(y)  \rightarrow \ta(x),\ \\
& r_5:\ \te(x,y) \rightarrow \lect(x),\
  r_6:\ \te(x,y) \land \gc(y)  \rightarrow \fp(x) \} \\
\mC_1 = & \{ c_1:\ \ta(x) \land \rese(x) \rightarrow \bot,\
  c_2:\ \lect(x) \land \ta(x) \rightarrow \bot\}
\end{align*}
When a user asks "\emph{Is Victor a researcher?}", the answer will be "\emph{Yes, but Victor is possibly a researcher}". 
The current method~\cite{Lukasiewicz2020,Thomas2022Neg,Meghyn2019} will provide a set-based explanation consisting of (1) the \emph{cause} $\{\te(\vi,\kr),\ \gc(\kr)\}$ entailing the answer (why the answer is accepted) and (2) the \emph{conflict} $\{\teAs(\vi,\kd), \uc(\kd) \}$ being inconsistent with every cause (why the answer cannot be \textit{accepted}).
The cause, though, does not show a series of reasoning steps to reach $\rese(\vi)$ from the justification $\{\te(\vi,\kr),\ \gc(\kr)\}$.
The conflict still lacks \emph{all relevant information} to explain this result.
Indeed, in the KB, the fact $\te(\vi, \kd)$ deducing $\lect(\vi)$ makes the conflict $\{\teAs(\vi,\kd), \uc(\kd) \}$ deducing $\ta(\vi)$ \emph{uncertain}, due to the constraint that lecturers cannot be teaching assistants. Thus, using the conflict in the explanation is \emph{insufficient} to assert the non-acceptance of the answer. 
It remains unclear why the answer is possible.
Instead, $\te(\vi, \kd)$ deducing $\lect(\vi)$ should be included in the explanation.
Without knowing the relevant information, it is impossible for the user - especially non-experts in logic - to understand why this is the case.

However, using the argumentation approach will provide a dialogical explanation that is more informative and intuitive.
The idea involves a dialogue between a proponent and opponent, where they exchange logical formulas to a dispute agree.
The proponent aims to prove that the argument in question is acceptable, while the opponent exhaustively challenges the proponent’s moves. The dialogue where the proponent wins represents a proof that the argument in question is accepted. The dialogue whose graphical representation is shown in Figure~\ref{fig:tree-user} proceeds as follows:

1. Suppose that the proponent wants to defend their claim $\rese(\vi)$. They can do so by putting forward an argument, say $A_1$, supported by facts $\te(\vi, \kr)$ and $\gc(\kr)$: 
\begin{align*}
     A_1 :\ & \rese(\vi) (\text{by } r_3) \\
          & \fp(\vi) (\text{by } r_6) \\
          & \te(\vi, \kr), \gc(\kr) (\text{by facts})\\
    \emph{Proponent: }  & \emph{I believe that } \vi \emph{ is a researcher because he is a full professor; and given the fact }\\
    & \emph{that he teaches the KR course and KR is a graduate course.}
\end{align*}

2. The opponent challenges the proponent’s argument by attacking the claim $\rese(\vi)$ with an argument $\ta(\vi)$, say $A_2$, supported by facts $\teAs(\vi, \kd)$ and $\uc(\kd)$:
\begin{align*}
    A_2:\ & \ta(\vi) (\text{by } r_4) \\
         & \teAs(\vi, \kd), \uc(\kd) (\text{by facts})\\
  \emph{Opponent: }    &  \vi \emph{ is not possibly a researcher because } \vi \emph{ is a TA given the fact that he is a TA }\\
    & \emph{of the KD course and  KD is an undergraduate course.}
\end{align*}

3. To argue that the opponent's attack is not possible - and to further defend the initial claim $\rese(\vi)$ - the proponent can counter the opponent's argument by providing additional evidence $\lect(\vi)$ supported by a fact $\te(\vi, \kd)$:
%
%
\begin{align*}
     A_3 :\ & \lect(\vi) (\text{by } r_5) \\
           & \te(\vi, \kd)(\text{by facts})\\
     \emph{Proponent: } & \emph{$\vi$ is not possibly a TA because } \vi \emph{ is certainly a lecturer given the fact that he } \\
    & \emph{teaches the KD course.}
\end{align*}

4. The opponent concedes $\rese(\vi)$ since it has no argument to argue the proponent.
\begin{align*}
     \emph{Opponent: } & \emph{I concede  that $\vi$ is a researcher because I have no argument to argue that } \vi \emph{ is } \\
    & \emph{neither a lecturer nor researcher.}
\end{align*}
The proponent's belief $\rese(\vi)$ is defended successfully, namely, $\rese(\vi)$ that is justified by facts $\{\te(\vi, \kr)$, $\gc(\kr)\}$ that be extended to be the defending set $\{\te(\vi, \kr)$, $\gc(\kr)$, $\te(\vi, \kd)\}$ that can counter-attack every attack.

By the same line of reasoning, the opponent can similarly defend their belief in the contrary statement $\ta(\vi)$ based on the defending set~$\{\teAs(\vi, \kd), \uc(\kd)\}$.
Because different agents can hold contrary claims, the acceptance semantics of the answer can be considered \emph{credulous} rather than \emph{sceptical}. In other words, the answer is deemed \emph{possible} rather than \emph{plausible}. Thus, the derived system can conclude that $\vi$ \emph{is possibly a researcher}.
    
\end{example}



The main contributions of this paper are the following:
\begin{itemize}
    \item We propose a \textbf{proof-oriented (logical) argumentation framework with collective attacks} (P-SAF), 
    in which we consider abstract logic to generalize monotonic and non-monotonic logics involving reasoning with maximal consistent subsets, and we show how any such logic can be translated to P-SAFs.
    We also conduct a detailed investigation of how existing argumentation frameworks in the literature can be instantiated as P-SAFs.
    Thus, we demonstrate that the P-SAF framework is sufficiently generic to encode n-ary conflicts and to enable logical reasoning with (inconsistent) KBs.

    \item We introduce a \textbf{novel explanatory dialogue model} viewed as a dialectical proof procedure to compute and explain the \textit{credulous}, \textit{grounded} and \textit{sceptical} acceptances in P-SAFs. The dialogues, in this sense, can be regarded as explanations for the acceptances. As our main theoretical result, we prove the soundness and completeness of the dialogue model wrt argumentation semantics.
    
    
    \item This novel explanatory dialogue model provides \textbf{dialogical explanations} for the acceptance of a given query wrt inconsistency-tolerant semantics, and  \textbf{dialogue trees} as graphical representations of the dialogical explanations. Based on these dialogical explanations, our framework assists in understanding the intermediate steps of a reasoning process and enhancing human communication on logical reasoning with inconsistencies.
    \end{itemize}

\section{Preliminaries}
\label{sec:preliminary}

To motivate our work, we review argumentation approaches using Tarski abstract logic characterized by a consequence operator~\cite{Amgoud2009}. 
However, many logic in argumentation systems, like ABA or ASPIC systems, do not always impose certain axioms, such as the absurdity axiom.
Defining the consequence operator by means of "models" cannot allow users to understand reasoning progresses better, as inference rule steps are implicit. 
These motivate a slight generalization of consequence operators in a proof-theoretic manner, inspired by~\cite{Stephen1975}, with minimal properties.

Most of our discussion applies to abstract logics (monotonic and non-monotonic) which slightly generalize Tarski abstract logic.
Let $\mL$ be a set of \emph{well-formed formulas}, or simply \textit{formulas}, and $X$ be an arbitrary set of formulas in $\mL$. With the help of \emph{inference rules}, new formulas are derived from $X$; these formulas are called \emph{logical consequences} of $X$; a \emph{consequence operator} (called \textit{closure operator}) returns the logical consequences of a set of formulas. 
    
\begin{definition}
     We define a map $\cn : 2^{\mL} \to 2^{\mL}$ such that $\cnb(X) = \bigcup_{n \geq 0}\cn^{n}(X)$ satisfies the axioms: 
     \begin{itemize}
        \item ($A_1$) \textbf{Expansion} $X \subseteq \cnb(X)$.
        \item ($A_2$) \textbf{Idempotence} $\cnb(\cnb(X)) = \cnb(X)$.

    \end{itemize}
\end{definition}
In general, a map $2^{\mL} \to 2^{\mL}$ satisfying these axioms $A_1 -\ A_2$ is called a \emph{consequence operator}. Other properties that consequence operators might have, but that we do not require in this paper, are
\begin{itemize}   
    \item ($A_3$) \textbf{Finiteness} $\cnb(X) \subseteq \bigcup \{ \cnb(Y) \mid Y \subset_{f} X \}$ where the notation $Y \subset_{f} X$ means that $Y$ is a finite proper subset of $X$.
    \item ($A_4$) \textbf{Coherence} $\cnb(\emptyset) \neq \mL$.
    \item ($A_5$) \textbf{Absurdity} $\cnb(\{x\}) = \mL$ for some $x$ in the language $\mL$.
\end{itemize}

Note that finiteness is essential for practical reasoning and is satisfied by any logic that has a decent proof system.

An \emph{abstract logic} includes a pair $\mL$ and a consequence operator $\cn$. Different logics have consequence operators with various properties that can satisfy certain axioms. For instance, the class of Tarskian logics, such as classical logic, is defined by a consequence operator satisfying $A_1 -\ A_5$ while the one of defeasible logic satisfies $A_1 -\ A_3$.

\begin{example} An inference rule $r$ in first-order logic is of the form $\frac{ p_1, \ldots, p_n}{c}$ where its conclusion is $c$ and the premises are $p_1, \ldots, p_n$. $c$ is called a \emph{direct consequence} of $p_1, \ldots, p_n$ by virtue of $r$. If we define $\cn(X)$ as the set of direct consequences of $X \subseteq \mL$, then $\cnb$ coincides with
$\cnb(X) = \{ \alpha \in \mL \mid X \vdash \alpha \}$ and
satisfies the axioms $A_1 -\ A_5$.
\end{example}

 Fix a logic $(\mL, \cn)$ and a set of formulas $X \subseteq \mL$. We say that:
\begin{itemize} 
    \item $X$ is \textit{consistent} wrt $(\mL, \cn)$ iff $\cnb(X) \neq \mL$. It is \textit{inconsistent} otherwise;   
     \item $X$ is a \textit{minimal conflict} of $\mK$ if $X^{\prime}  \subsetneq X$ implies  $X^{\prime}$ is consistent.
    \item A \textit{knowledge base} (KB) is any subset $\mK$ of $\mL$. Formulas in a KB are called \emph{facts}. A knowledge base may be inconsistent. 
   
\end{itemize}

Reasoning in inconsistent KBs $\mK \subseteq \mL$ amounts to:
\begin{enumerate}
    \item Constructing \emph{maximal consistent subsets},
    \item Applying \emph{classical entailment mechanism} on a choice of the maximal consistent subsets. 
\end{enumerate}
Motivated by this idea, we give the following definition.

\begin{definition} Let $\mK$ be a KB and $X \subseteq \mK$ be a set of formulas. $X$ is a \emph{maximal (for set-inclusion) consistent subsets} of $\mK$ iff 
\begin{itemize}
    \item $X$ is consistent,
    \item there is no $X^{\prime}$ such that $X \subset X^{\prime}$ and $X^{\prime}$ is consistent.
\end{itemize}
 We denote the set of all maximal consistent subsets by $\msc(\mK)$.
\end{definition}

Inconsistency-tolerant semantics allow us to determine different types of entailments.
\begin{definition}
Let $\mK$ be a KB. A formula $\phi \in \mL$ is entailed in
\begin{itemize}
    \item  \emph{some maximal consistent subset}  iff for some $\Delta \in \msc(\mK)$, $\phi \in \cnb(\Delta)$;

    \item the \emph{intersection of all maximal consistent subsets} iff for $\Psi = \bigcap \{\Delta \mid \Delta \in \msc(\mK)\}$, $\phi \in \cnb(\Psi)$;

    \item   \emph{all maximal consistent subsets} iff for all $\Delta \in \msc(\mK)$, $\phi \in \cnb(\Delta)$.
\end{itemize}  
\end{definition}

Informally, \textit{some maximal consistent subset semantics} refers to \textbf{possible answers}, \textit{all maximal consistent subsets semantics} to \textbf{plausible answers}, and the \emph{intersection of all maximal consistent subsets semantics} to \textbf{surest answers}.

In the following subsections, we illustrate the generality of the above definition by providing instantiations for propositional logic, defeasible logic, \datalogPM.
Table~\ref{tab:properties} summarizes properties holding for consequence operators of the instantiations.

\begin{table} 
\centering
 \begin{tabular}{c l l l l l}
   \toprule
   Axioms & $\cn_c$  & $\cn_{p}$ & $\cn_{d}$ & $\cn_{co}$ & $\cn_{da}$ \\
   \midrule
   $A_1$ & $\times$ & $\times$ & $\times$ & $\times$   &  $\times$  \\
   $A_2$ & $\times$ & $\times$ & $\times$  & $\times$  & $\times$ \\
   $A_3$ & $ \times $ & $\times$ & $\times$ & $\times$ &   \\
   $A_4$ & $\times$ & $\times$ &  &  &    \\
   $A_5$ & $\times$ & $\times$ &  &  &  \\
   \bottomrule
 \end{tabular}
 \caption{Properties of consequence operators of the instantiations}
 \label{tab:properties} 
\end{table}
%

\subsection{Classical Logic}
We assume familiarity with classical logic. A logical language for classical logic $\mL$ is a set of well-formed formulas. Let us define $\cn_c: 2^{\mL_c} \to 2^{\mL_c}$ as follows:  
For $X \subseteq \mL_c$, a formula $x \in \mL_c$ satisfies $x \in \cn_c(X)$ iff the inference rule $\frac{y}{x}$ is applied to $X$ such that $y \in X$. Define $\cnb_c(X) = \bigcup_{n \geq 0}\cn_{c}^{n}(X)$. In particular, $\cnb_c$ is a consequence operator satisfying $A_1 -\ A_5$. Examples of classical logic are propositional logic and first-order logic. Next, we will consider propositional logic as being given by our abstract notions. Since first-order logic can be similarly simulated, we do not consider here in detail.

We next present \emph{propositional logic} as a special case of classical logic.
Let $A$  be a set of propositional atoms. Any atoms $a \in A$ is a well-formed formula wrt. $A$. If $\phi$ and $\alpha$ are well-formed formulas wrt. $A$ then $\neg \phi$, $\phi \wedge \alpha$, $\phi \vee \alpha$ are well-formulas wrt. $A$ (we also assume that the usual abbreviations $\supset$, $\leftrightarrow$ are defined accordingly). Then $\mL_{p}$ is the set of well-formed formulas wrt. $A$.

Let~$\cn_{p}: 2^ {\mL_{p}} \to 2^ {\mL_p}$ be defined as follows: for $X \subseteq \mL_p$, an element~$x\in \mL_p$
satisfies $x\in \cn_p(X)$ iff there are~$y_1,\ldots,y_j \in X$
such that~$x$ can be obtained from~$y_1,\ldots,y_j$ by the application of a single inference rule of propositional logic.

\begin{example} 
\label{ex:pro-logic}
Consider the propositional atoms $A_1 =\{x, y\}$ and the knowledge base $\mK_1 = \{x, y, x \supset \neg y\} \subseteq \mL_p$. Consider a set $\{x, x \supset \neg y\} \subseteq \mK_1$ .
If the inference rule (modus ponens) $\frac{ A, A \supset B}{B}$ is applied to this set, then $\cn_{p}(\{x, x \supset \neg y\}) = \{x, x \supset \neg y, \neg y\}$.
\end{example}
Consider $\cnb_p(X) = \bigcup_{n \geq 0}\cn_{p}^{n}(X)$.  For instance, $\cnb_{p}(\mK_1) = \{x, y, x \supset \neg y, \neg y, x \wedge y, \ldots\}$.
Since propositional logic is coherent and complete, then~$x\in \cnb_p(X) = \{x \mid X \models x \}$ where $\models$ is the entailment relation, i.e., $\phi \models \alpha$ if all models of $\phi$ are models of $\alpha$ in the propositional semantics. In particular,~$\cnb_{p}$ is a consequence operator  satisfying $A_1 -\ A_5$. The propositional logic can be defined as $(\mL_{p} , \cn_{p})$. 

It follows immediately 

\begin{lemma}
    $(\mL_{p} , \cn_{p})$ is an abstract logic.
\end{lemma}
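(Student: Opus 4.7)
The plan is to verify the two axioms $A_1$ and $A_2$ directly from the construction $\cnb_p(X) = \bigcup_{n \geq 0} \cn_p^n(X)$, reading $\cn_p^0$ as the identity so that $\cn_p^0(X) = X$. Since the paper's notion of an abstract logic requires only these two axioms of $\cnb$ (properties $A_3$--$A_5$ are extras that Table~\ref{tab:properties} records but are not needed here), the whole task reduces to bookkeeping about iterated single-step rule applications, and no substantial appeal to soundness, completeness, or the semantics of $\models$ is necessary.

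For $A_1$ (Expansion), I would simply note that $X = \cn_p^0(X)$ is one of the sets appearing in the union defining $\cnb_p(X)$, so $X \subseteq \cnb_p(X)$ is immediate. This step is routine and uses nothing specific to propositional logic.

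For $A_2$ (Idempotence), one inclusion $\cnb_p(X) \subseteq \cnb_p(\cnb_p(X))$ is just $A_1$ applied to the set $\cnb_p(X)$. The substantive direction is $\cnb_p(\cnb_p(X)) \subseteq \cnb_p(X)$. I would first establish $\cn_p(\cnb_p(X)) \subseteq \cnb_p(X)$: any $x \in \cn_p(\cnb_p(X))$ arises, by the definition of $\cn_p$, from finitely many premises $y_1,\ldots,y_j \in \cnb_p(X)$ via a single propositional inference rule; picking indices $n_i$ with $y_i \in \cn_p^{n_i}(X)$ and setting $n = \max_i n_i$ puts all premises in $\cn_p^n(X)$, whence $x \in \cn_p^{n+1}(X) \subseteq \cnb_p(X)$. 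A straightforward induction on $k$ then lifts this to $\cn_p^k(\cnb_p(X)) \subseteq \cnb_p(X)$ for every $k \geq 0$, and taking the union over $k$ yields the desired containment.

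The only non-clerical step is the collapse of the double iteration in the idempotence argument, which hinges on the fact that each propositional inference rule has finitely many premises, so that a maximum $n = \max_i n_i$ exists. This is the one place where care is needed, and it is precisely what would fail for a genuinely infinitary rule system; everything else is straightforward unwinding of the definitions.
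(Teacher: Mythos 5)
Your proof is correct, but it takes a different route from the paper. The paper does not verify the axioms syntactically: it first invokes the coherence and completeness of propositional logic to identify $\cnb_p(X)$ with the semantic closure $\{x \mid X \models x\}$, asserts on that basis that $\cnb_p$ satisfies all of $A_1$--$A_5$, and then records the lemma as an immediate consequence. You instead work directly with the definition $\cnb_p(X) = \bigcup_{n \geq 0}\cn_p^n(X)$ and check only $A_1$ and $A_2$, which is indeed all that the paper's definition of an abstract logic requires; your key step, collapsing $\cn_p(\cnb_p(X))$ into $\cnb_p(X)$ by taking the maximum of finitely many iteration indices (together with the implicit monotonicity of $\cn_p$ needed for the induction on $k$), is sound and is exactly the point where finitary rules matter. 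What each approach buys: your argument is more elementary and self-contained, makes no appeal to a completeness theorem, and transfers verbatim to the paper's other finitary instantiations ($\cn_d$, $\cn_{co}$, $\cn_{da}$); the paper's semantic detour is heavier machinery for this lemma alone, but it delivers the additional properties $A_3$--$A_5$ recorded in Table~\ref{tab:properties}, which your argument deliberately does not address.
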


\begin{example} [Continue Example~\ref{ex:pro-logic}] Recall $\mK_1$. The KB admits a MCS: $\{x, y, x \supset \neg y \}$.
    
\end{example}


\subsection{Defeasible Logic}
\label{subsec:defeasible-logic}

Let $(\mL_d, \cn_d)$ be a defeasible logic such as used in defeasible logic programming~\cite{Alejandro2014}, assumption-based argumentation (ABA)~\cite{Dung2009}, ASPIC/ ASPIC+ systems~\cite{Prakken2002,ModgilP14}.
The language for defeasible logic $\mL_d$ includes a set of (strict and defeasible) rules and a set of literals. The rules is the form of $x_1, \ldots x_i \rightarrow_{s} x_{i+1}$ ($x_1, \ldots x_i \rightarrow_{d} x_{i+1}$) where $x_1, \ldots x_i, x_{i+1}$ are literals and $\rightarrow_{s}$ (denote strict rules) and $\rightarrow_{d}$ (denotes defeasible rules) are implication symbols.
\begin{definition}
Define $\cn_d: 2^{\mL_d} \to 2^{\mL_d}$ as follows: for $X \subseteq \mL_d$, a formula $x \in \mL_d$ satisfies $x \in \cn_d(X)$ iff at least of the following properties is true:
\begin{enumerate}
    \item $x$ is a literal in $X$,
    \item there is $(y_1,\ldots,y_j)\rightarrow_s x \in X$, or $(y_1,\ldots,y_j)\rightarrow_d x \in X$ st. ~$\{ y_1,\ldots,y_j \} \subseteq X$.
\end{enumerate}
Define $\cnb_d(X) = \bigcup_{n \geq 0}\cn_{d}^{n}(X)$.
\end{definition}

\begin{remark}  One can describe $\cnb_d$ explicitly.
We have $x \in \cnb_{d}(X)$ iff there exists a finite \emph{sequence} of literals $x_1, \ldots, x_n$ such that
\begin{enumerate}
    \item $x$ is $x_n$, and
    \item for each $x_i \in \{x_1, \ldots, x_n \}$,
    \begin{itemize}
        \item  there is $ y_1 , \ldots , y_j \rightarrow_{s} x_i \in X$, or $ y_1 , \ldots , y_j \rightarrow_{d} x_i \in X$, such that $\{ y_1 , \ldots , y_j \} \subseteq \{x_1, \ldots , x_{i-1} \}$,
        \item or $x_i$ is a literal in $X$.
    \end{itemize}
\end{enumerate}

Note that if $x \in \cn_{d}^{n}(X)$, the above sequence $x_1, \ldots, x_n$ might have length~$m\neq n$: intuitively~$n$ is the depth of the proof tree while~$m$ is the number of nodes.
\end{remark}







\begin{example}
\label{ex:de-logic}
Consider the KB $\mK_2 = \{x, x \rightarrow_s y, t \rightarrow_d z \} \subseteq \mL_d$. $\cnb_{d}(\mK_2) = \{x, y\}$ where the sequence of literals in the derivation is $x, y$. The KB admits a MCS: $\{x, x \rightarrow_s y, t \rightarrow_d z \}$
\end{example}


\begin{remark}
\label{re:aspic}
For ASPIC/ASPIC+ systems~\cite{Prakken2002,ModgilP14}, Prakken claimed that strict and defeasible rules can be considered in two ways: (1) they encode information of the knowledge base, in which case they are part of the logical language $\mL_d$, (2) they represent inference rules, in which case they are part of the consequence operator.
These ways can encoded by consequence operators as in~\cite{Amgoud12}. Our definition of $\cnb_{d}$ can align with the later interpretation as done in~\cite{Amgoud12}.
In particular, if we consider $X$ being a set of literals of $\mL_d$ instead of being a set of literals and rules as above, the definitions of $\cn_d$ and $\cnb_d$ still hold for this case. 
Thus, the defeasible logic of ASPIC/ASPIC+ can be represented by the logic $(\mL_d, \cn_d)$ in our settings.
\end{remark}

\begin{proposition} [~\cite{Amgoud12}]
    $\cnb_d$ satisfies $A_1 -\ A_3$.
\end{proposition}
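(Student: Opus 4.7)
The plan is to verify axioms $A_1$, $A_2$, $A_3$ in turn, relying on the explicit characterization of $\cnb_d$ in terms of finite derivation sequences given in the preceding remark. Throughout, I would adopt the convention of that remark: $X$ is taken to range over sets of literals, while strict and defeasible rules are regarded as part of the consequence machinery. This convention is essentially forced on us, since the definition of $\cn_d$ never returns a rule; if a rule $r \in X$ were permitted, expansion would already fail as $r \notin \cn_d(X)$.

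Expansion ($A_1$) is immediate: every literal $x \in X$ satisfies clause~(1) of the definition of $\cn_d$, so $X \subseteq \cn_d(X) \subseteq \cnb_d(X)$, with no further work required.

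For idempotence ($A_2$), the inclusion $\cnb_d(X) \subseteq \cnb_d(\cnb_d(X))$ follows directly from $A_1$ applied to $\cnb_d(X)$. The substantive direction is $\cnb_d(\cnb_d(X)) \subseteq \cnb_d(X)$. Given $x \in \cnb_d(\cnb_d(X))$, I would take a witnessing sequence $x_1, \ldots, x_n = x$ in which each $x_i$ is either (a) obtained by applying a rule to earlier elements, or (b) a literal lying in $\cnb_d(X)$. For each $x_i$ of type~(b), fix an underlying derivation of $x_i$ from $X$ and splice it into the main sequence in place of $x_i$. The reindexed, concatenated sequence is a valid derivation of $x$ from $X$, yielding $x \in \cnb_d(X)$. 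The only technicality is to verify, after splicing, that every rule application still references only elements occurring strictly earlier in the sequence; this bookkeeping is the main (but routine) piece of work in the proof.

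For finiteness ($A_3$), given $x \in \cnb_d(X)$ with a witnessing derivation $x_1, \ldots, x_n = x$, the set $Y \subseteq X$ of base literals actually appearing in the derivation is finite, and the same sequence remains a valid derivation of $x$ from $Y$, so $x \in \cnb_d(Y)$. The delicate point is the phrasing "finite \emph{proper} subset" in $A_3$: when $X$ is itself finite and the derivation consumes every literal of $X$, we only obtain $Y = X$ rather than $Y \subsetneq X$. I would flag this as a minor discrepancy with the standard Tarskian convention (which requires only $Y$ finite with $Y \subseteq X$) and argue under the standard reading.
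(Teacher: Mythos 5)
The paper offers no proof of this proposition at all: it is imported directly from \cite{Amgoud12}, so there is no in-paper argument to compare yours against. Judged on its own terms, your verification is correct and is the standard one: expansion, idempotence by splicing a derivation of each intermediate literal from $X$ into the outer derivation, and finiteness by extracting the finitely many base literals actually consumed by a witnessing sequence. Two remarks, neither of which is a gap. First, your stated reason for restricting $X$ to sets of literals is slightly misaimed: since $\cnb_d(X) = \bigcup_{n \geq 0}\cn_d^{n}(X)$ includes the $n=0$ term, $X \subseteq \cnb_d(X)$ holds even when $X$ contains rules, so expansion does not fail merely because $r \notin \cn_d(X)$. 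What genuinely forces the convention is the paper's explicit sequence characterization of $\cnb_d$ (and the worked example where $\cnb_d(\{x,\ x \rightarrow_s y,\ t \rightarrow_d z\}) = \{x, y\}$ omits the rules), under which $\cnb_d(X)$ contains only literals and $A_1$ would indeed fail for rule-containing $X$; the restriction you adopt is exactly the one the paper itself sanctions in its remark on ASPIC/ASPIC+. Second, your observation about $A_3$ is a genuine catch: with $Y \subset_{f} X$ read as ``finite \emph{proper} subset,'' the axiom is false for finite $X$ all of whose elements are needed (e.g.\ $X = \{a\}$ would require $a \in \cnb_d(\emptyset)$), so the proposition can only be intended under the usual ``finite subset'' reading, as you say.
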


It follows immediately

\begin{lemma}
    $(\mL_d, \cn_d)$ is an abstract logic.
\end{lemma}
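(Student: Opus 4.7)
The plan is to derive the lemma as an immediate corollary of the immediately preceding proposition. By the definition given earlier, an abstract logic is a pair consisting of a language and a consequence operator, and a consequence operator is any map $2^{\mL} \to 2^{\mL}$ whose associated closure satisfies the expansion axiom $A_1$ and the idempotence axiom $A_2$. Since the preceding proposition (cited to~\cite{Amgoud12}) already establishes that $\cnb_d$ satisfies $A_1 - A_3$, it a fortiori satisfies $A_1$ and $A_2$, so $\cn_d$ is a consequence operator in the sense of the paper, and $(\mL_d, \cn_d)$ is an abstract logic by definition. The entire proof therefore amounts to a single "by the preceding proposition" sentence, possibly followed by a pointer to the definition of abstract logic.

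For completeness I would include a short self-contained justification of $A_1$ and $A_2$, in case the reader is not willing to take the external citation on faith. Expansion is immediate: since clause (1) in the definition of $\cn_d$ puts every literal of $X$ into $\cn_d(X)$, we have $X \subseteq \cn_d(X) \subseteq \bigcup_{n \geq 0} \cn_d^{n}(X) = \cnb_d(X)$. For idempotence, the inclusion $\cnb_d(X) \subseteq \cnb_d(\cnb_d(X))$ is just $A_1$ applied to $\cnb_d(X)$, so the content is the reverse inclusion. Here I would invoke the derivation-sequence characterisation given in the remark following the definition of $\cn_d$: if $x \in \cnb_d(\cnb_d(X))$ then $x$ admits a finite derivation sequence $x_1,\ldots,x_n = x$ from $\cnb_d(X)$, and each literal used as a premise from $\cnb_d(X)$ itself admits a finite derivation sequence from $X$. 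Splicing these derivations together (each premise derivation pasted in before its first use) produces a single finite derivation of $x$ from $X$, witnessing $x \in \cnb_d(X)$.

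The only mildly non-routine step is the splicing argument for idempotence, but since rules in $\mL_d$ have finite bodies and derivations have finite length, the concatenation is straightforward bookkeeping and no structural obstacle arises. No properties beyond $A_1$ and $A_2$ are needed, so finiteness $A_3$ is not used here, and the absence of $A_4, A_5$ (which defeasible logic does not in general satisfy, cf.\ Table~\ref{tab:properties}) is irrelevant because abstract logics in this paper are not required to be coherent or to admit absurdity.
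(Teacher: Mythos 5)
Your proposal matches the paper's own treatment: the paper gives no proof beyond the words ``It follows immediately'' placed after the proposition (cited to Amgoud) that $\cnb_d$ satisfies $A_1$--$A_3$, which is exactly your one-line derivation from the preceding proposition plus the definition of abstract logic. Your optional self-contained verification of $A_1$ and $A_2$ goes beyond what the paper records and the splicing argument for idempotence is sound, though note one wrinkle (which the paper itself inherits): clause (1) of the definition of $\cn_d$ only reproduces the \emph{literals} of $X$, so the inclusion $X \subseteq \cn_d(X)$ is immediate only when $X$ is a set of literals --- for $X$ containing rules, as in Example~\ref{ex:de-logic} where $\cnb_d(\mK_2)=\{x,y\}$ omits the rules of $\mK_2$, expansion holds only under the reading of Remark~\ref{re:aspic} in which rules are treated as part of the consequence operator rather than of $X$.
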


In~\cite{CaminadaA07}, proposals for argumentation using defeasible logic were criticized for violating the postulates that they proposed for acceptable argumentation. One solution is to introduce \emph{contraposition} into the reasoning of the underlying logic. 
This solution can be seen as another representation of defeasible logic.
We introduce contraposition by defining a consequence operator as follows: 

Consider $\mL_{co}$ containing a set of literal and a set of (strict and defeasible) rules $\mR_{s}$ ($\mR_{d})$. For this case represent inference rules, namely, they are part of a consequence operator.  For $\Delta \subseteq \mL_{co}$,  $\texttt{Contrapositives}(\Delta)$ is the set of contrapositives formed from the rules in $\Delta$. For instance, a strict rule $s$ is a contraposition of the rule $\phi_1, \ldots, \phi_n \rightarrow_{s} \alpha \in \mR_s$ iff $s = \phi_1, \ldots, \phi_{i-1}, \neg \alpha, \phi_{i+1}, \ldots, \phi_n \rightarrow_s \neg \phi_i$ for $ 1 \leq i \leq n$.

\begin{definition}
\label{def:cn-contrapositive}
Define $\cn_{co} : 2^{\mL_{co}} \to 2^{\mL_{co}}$ as follows: for a set of literals $X \subseteq \mL_{co}$, a formula $x \in \mL_{co}$ satisfies $x \in \cn_{co}(X)$ iff at least of the following properties is true:
\begin{enumerate}
    \item $x$ is a literal in $X$,
    \item there is $(y_1,\ldots,y_j)\rightarrow_s x \in \mR_{s} \cup \texttt{ \textup{Contrapositives}}(\mR_s)$, or $(y_1,\ldots,y_j)\rightarrow_d x \in \mR_{d}  \cup \texttt{\textup{Contrapositives}}(\mR_d)$ such that~$\{ y_1,\ldots,y_j \} \subseteq X$.
\end{enumerate}
Define $\cnb_{co}(X) = \bigcup_{n \geq 0}\cn_{co}^{n}(X)$.
\end{definition}

\begin{remark}
Similarly, one can represent $\cnb_{co}$ as follows: $x \in \cnb_{co}(X)$ iff there exists a \emph{sequence} of literals $x_1, \ldots, x_n$ such that
\begin{enumerate}
    \item $x$ is $x_n$, and
    \item for each $x_i \in \{x_1, \ldots, x_n \}$,
    \begin{itemize}
        \item  there is $ y_1 , \ldots , y_j \rightarrow_{s} x_i \in \mR_s \cup \texttt{\textup{Contrapositives}}(\mR_s)$, or $ y_1 , \ldots , y_j \rightarrow_{d} x_i \in \mR_d \cup \texttt{\textup{Contrapositives}}(\mR_d)$, such that $\{ y_1 , \ldots , y_j \} \subseteq \{x_1, \ldots , x_{i-1} \}$,
        \item or $x_i$ is a literal in $X$.
    \end{itemize}
\end{enumerate}
\end{remark}

\begin{proposition} 
    $(\mL_{co}, \cn_{co})$ is an abstract logic. $\cnb_{co}$ satisfies $A_1 -\ A_3$.
\end{proposition}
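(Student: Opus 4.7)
The plan is to observe that $\cn_{co}$ differs from $\cn_d$ only in that its rule set is enlarged by the contrapositives of the strict and defeasible rules. Since contrapositives are still finitary rules of the same shape (a finite list of literal premises yielding a single literal conclusion), the arguments already used to establish $A_1$--$A_3$ for $\cnb_d$ (cited from~\cite{Amgoud12}) transfer essentially verbatim. Accordingly, I would reuse the sequence characterization given in the remark after Definition~\ref{def:cn-contrapositive}: a literal $x$ lies in $\cnb_{co}(X)$ iff there is a finite derivation sequence $x_1, \ldots, x_n = x$ of literals in which every step is either a literal in $X$ or is obtained by a rule from $\mR_s \cup \mR_d \cup \texttt{Contrapositives}(\mR_s \cup \mR_d)$ applied to earlier sequence members.

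For $A_1$, I would simply note that clause~(1) of Definition~\ref{def:cn-contrapositive} puts every literal $x \in X$ into $\cn_{co}(X)$, so $X \subseteq \cn_{co}(X) \subseteq \cnb_{co}(X)$. For $A_2$, the inclusion $\cnb_{co}(X) \subseteq \cnb_{co}(\cnb_{co}(X))$ is immediate from $A_1$ applied to $\cnb_{co}(X)$. For the converse, I would take any $x \in \cnb_{co}(\cnb_{co}(X))$ with derivation $x_1, \ldots, x_n = x$ from $\cnb_{co}(X)$, and for each $x_i$ that appears as a ``literal in $\cnb_{co}(X)$'' justification, splice in its underlying derivation from $X$ (which exists because $x_i \in \cnb_{co}(X)$). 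Concatenating these sub-derivations in order and preserving the rule applications of the outer sequence produces a single derivation of $x$ from $X$, yielding $x \in \cnb_{co}(X)$.

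For $A_3$, given $x \in \cnb_{co}(X)$ with derivation sequence $x_1, \ldots, x_n = x$, I would collect into a set $Y$ exactly the literals of $X$ that are cited by clause~(1) in this sequence. The set $Y$ is finite because the sequence is finite, and $Y \subseteq X$; the same sequence then witnesses $x \in \cnb_{co}(Y)$, proving finiteness in the sense of being witnessed by a finite subset of the premises.

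I do not expect a genuine obstacle: the content of the proof is identical to the $\cn_d$ case once one checks that adjoining contrapositives preserves the finitary premise/conclusion shape that the sequence argument relies on. The only minor care point is the convention of $\subset_f$ as ``finite proper subset'' in $A_3$; I would handle this either by noting the standard Tarskian reading (a finite, not necessarily proper, subset) or, if $X$ is finite and every proper subset is insufficient, by verifying that $X$ itself is already finite and so $A_3$ holds trivially.
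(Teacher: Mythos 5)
Your proof is correct and takes essentially the same approach as the paper, which offers no separate argument for this proposition but implicitly relies on exactly your observation: adjoining contrapositives only enlarges the rule set while preserving the finitary premise/conclusion shape, so the sequence-based argument establishing $A_1$--$A_3$ for $\cnb_{d}$ (cited from Amgoud) transfers verbatim. Your care point about $\subset_{f}$ being defined as ``finite \emph{proper} subset'' is well taken; the intended reading of $A_3$ is the standard compactness condition (witnessed by a finite, not necessarily proper, subset), and your handling of it is appropriate.
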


\begin{example}
    Consider $\mK_3 = \{q, \neg r, p \wedge q \rightarrow_{d} r, \neg p \rightarrow_s u\}$, $\texttt{\textup{Contrapositives}}(\mK_3) = \{\neg r \wedge q \rightarrow_{d} \neg p, \neg r \wedge p \rightarrow_{d} \neg q, \neg u \rightarrow_s p\}$. Then $\cnb_{co}(\mK_3) = \{q, \neg r, \neg p, u\}$  where the sequence of literals in the derivation is $q, \neg r, \neg p, u$. The KB admits MCSs: $\{q, \neg r, \neg p \rightarrow_s u \}$ and $\{q, p \wedge q \rightarrow_{d} r, \neg p \rightarrow_s u\} $.
\end{example}

\subsection{\datalogPM}

We consider  \datalogPM~\cite{CALI201257}, and shall use it to illustrate our demonstrations through the paper.

We assume a set \NT of \emph{terms} which contain variables, constants and function terms. An atom is of the form $P(\Vec{t})$, with $P$ a predicate name and $\Vec{t}$ a vector of terms, which is \emph{ground} if it contains no variables. 
A \emph{database} is a finite set of ground atoms (called \emph{facts}).
A \emph{tuple-generating dependency} (TGD) $\sigma$ is a first-order formula of the form $\forall \x \forall \y \phi(\x, \y) \rightarrow \exists \Vec{z} \psi(\x, \Vec{z})$, where $\phi(\x, \y)$ and $\psi(\x, \Vec{z})$ 
are non-empty conjunctions of atoms.
We leave out the universal quantification, and  refer to  $\phi(\x, \y)$ and $\psi(\x, \Vec{z})$ as the \emph{body} ad \emph{head} of $\sigma$.
A \emph{negative constraint} (NC) $\delta$ is a rule of the form $\forall \x$ $\phi(\x)\rightarrow \bot$
where $\phi(\x)$ is a conjunction of atoms. We may leave out the universal restriction.
A language for \datalogPM $\mL_{da}$ includes a set of facts and a set of TGDs and NCs. A knowledge base $\mK$ of  $\mL_{da}$ is now a tuple $(\mF, \mR, \mC)$ where a database $\mF$, a set $\mR$ of TGDs and a set $\mC$ of NCs.

Define $\cn_{da} : 2^{\mL_{da}} \to 2^{\mL_{da}}$ as follows: Let $X$ be a set of facts of $\mL_{da}$, an element $x \in \mL_{da}$ satisfies $x \in \cn_{da}(X)$ iff there are $y_1, \ldots, y_j \in X$ s.t. $x$ can be obtained from $y_1, \ldots, y_j$ by the application of a single inference rule. Note that we treat such TGDs and NCs as inference rules.

Consider $\cnb_{da}(X) = \bigcup_{n \geq 0}\cn_{da}^{n}(X)$. Similar to proposition logic, ~$x \in \cnb_{da}(X) = \{x \mid X \models x \}$ where $\models$ is the entailment of first-order formulas, i.e., $X \models x$ holds iff every model of all elements in $X$ is also a model of $x$. 
$\cnb_{da}$ satisfies the properties $A_1, A_2$.
Note that the finiteness property ($A_3$) still holds for some fragments of \datalogPM, such as \emph{guarded}, \emph{weakly guarded} \datalogPM. 

It follows immediately

\begin{lemma}
    $(\mL_{da}, \cn_{da})$ is an abstract logic.
\end{lemma}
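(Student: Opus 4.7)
The plan is to verify the two axioms that define an abstract logic, namely $A_1$ (Expansion) and $A_2$ (Idempotence), for the pair $(\mL_{da}, \cn_{da})$. The excerpt essentially states that $\cnb_{da}$ satisfies these properties; my task is to justify this explicitly, leveraging the identification $\cnb_{da}(X) = \{x \mid X \models x\}$ that was already pointed out.

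For $A_1$, I would argue directly from the definition of the iterated closure. Setting $\cn_{da}^{0}(X) = X$ (the standard convention, matching the identity inference step), we have $X = \cn_{da}^{0}(X) \subseteq \bigcup_{n \geq 0} \cn_{da}^{n}(X) = \cnb_{da}(X)$. Equivalently, I could observe that every fact $x \in X$ is trivially a first-order consequence of $X$, hence $x \in \cnb_{da}(X)$.

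For $A_2$, I would use the reduction to classical first-order entailment. The inclusion $\cnb_{da}(X) \subseteq \cnb_{da}(\cnb_{da}(X))$ is immediate from $A_1$ applied to $\cnb_{da}(X)$. For the reverse inclusion, suppose $x \in \cnb_{da}(\cnb_{da}(X))$, i.e. $\cnb_{da}(X) \models x$. Since each $y \in \cnb_{da}(X)$ satisfies $X \models y$, every first-order model of $X$ is a model of $\cnb_{da}(X)$, and hence of $x$. Therefore $X \models x$ and so $x \in \cnb_{da}(X)$. Alternatively, one can give a purely syntactic argument: a derivation of $x$ from $\cnb_{da}(X)$ uses finitely many premises $y_1,\dots,y_k \in \cnb_{da}(X)$, each of which admits a finite derivation from $X$ by some $\cn_{da}^{n_i}$; concatenating these derivations yields a finite derivation of $x$ from $X$, witnessing $x \in \cnb_{da}(X)$.

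The only potentially subtle point is making the syntactic-derivation argument precise in the \datalogPM{} setting, because TGDs with existentially quantified head variables introduce fresh nulls (as in the chase). The cleanest route, which I expect to take, is to sidestep this by appealing to the semantic characterization $\cnb_{da}(X) = \{x \mid X \models x\}$ already recorded in the excerpt, so that $A_2$ reduces to the standard transitivity of $\models$ in first-order logic. With $A_1$ and $A_2$ established, $\cnb_{da}$ is a consequence operator in the sense of the definition, and therefore $(\mL_{da}, \cn_{da})$ is an abstract logic.
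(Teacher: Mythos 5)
Your proposal is correct and follows essentially the same route as the paper, which simply notes (by analogy with propositional logic) that $\cnb_{da}(X) = \{x \mid X \models x\}$ and that $\cnb_{da}$ therefore satisfies $A_1$ and $A_2$, whence the lemma "follows immediately." You merely spell out the verification of the two axioms that the paper leaves implicit, and your decision to lean on the semantic characterization rather than the syntactic concatenation argument (which is indeed delicate for TGDs with existential head variables) matches the paper's own stance.
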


\begin{example} [Continue Example~\ref{ex:motivation-ex}]
\label{ex:motivation} Recall $\mK_1$.
The KB admits MSCs (called \emph{repairs} in \datalogPM):
\begin{align*}
        & \mB_1 =  \{ \teAs(\vi,\kd), \uc(\kd) \}  \quad 
          \mB_3 = \{\teAs(\vi,\kd), \te(\vi,\kr), \te(\vi, \kd), \gc(\kr) \}\\
        & \mB_2 = \{\te(\vi,\kr), \gc(\kr), \te(\vi, \kd) \}  \quad
        \mB_4 = \{\teAs(\vi,\kd), \te(\vi,\kr), \te(\vi, \kd), \uc(\kd) \} \\
       & \mB_5 = \{\gc(\kr) , \te(\vi,\kr), \te(\vi, \kd), \uc(\kd) \}
         \quad
        \mB_6 = \{\uc(\kd), \teAs(\vi, \kd), \gc(\kr) \}
    \end{align*}
Consider $q_1 = \rese(\vi)$. We have that $\vi$ is a \emph{possible answer} for $q_1$ since $q_1$ is entailed in some repairs, such as $\mB_2,\ \mB_3,\ \mB_5$.
\end{example}

\section{Proof-oriented (Logical) Argumentations}
\label{sec:proof-arg}
In this section, we present \emph{proof-oriented (logical) argumentations} (P-SAFs) and their ingredients. We also provide insights into the connections between our framework and state-of-the-art argumentation frameworks. We then show the close relations of reasoning with P-SAFs to
reasoning with MCSs. 

\subsection{Arguments, Collective Attacks and Proof-oriented Argumentations}

 \textit{Logical arguments} (\textit{arguments} for short) built from a KB may be defined in different ways. For instance, arguments are represented by the notion of \emph{sequents}~\cite{ArieliS19}, \emph{proof}~\cite{SCHULZ_TONI_2016,Dung2009}, a pair of $(\Gamma,\ \psi)$ where $\Gamma$ is the
\textit{support}, or \textit{premises}, or \textit{assumptions} of the argument, and $\psi$ is the \textit{claim}, or \textit{conclusion}, of the argument~\cite{LoanHo2022,ARIOUA2017244}. To improve explanations in terms of representation and understanding, we choose the form of \emph{proof} to represent arguments. 
The proof is in the form of a tree.

\begin{definition}
\label{def:ab-arg}
A formula $\phi \in \mL$ is \emph{tree-derivable} from a set of \emph{fact-premises} $H \subseteq \mK$
if there is a tree such that
\begin{itemize}
    \item the root holds $\phi$;
    \item $H$ is the set of formulas held by leaves;
    \item for every inner node $N$, if $N$ holds the formula $\beta_0$, then its successors hold $n$ formulas $\beta_1, \ldots , \beta_n$ such that $\beta_0 \in \cn(\{\beta_1, \ldots , \beta_n\})$.
\end{itemize}
If such a tree exists (it might not be unique), we call $A : H \Rightarrow \phi$
an \emph{argument} with the \emph{support set} $\Sup(A) = H$ and the \emph{conclusion} $\Con(A) = \phi$. We denote the set of arguments induced from $ \mK$ by $\Arg_{\mK}$.


\end{definition}

\begin{remark}
 By Definition~\ref{def:ab-arg} it follows that $ H \Rightarrow \phi$ is an argument iff $\phi \in \cnb(H)$.   
\end{remark}

Note that an individual argument can be represented by several different trees (with the same root and leaves). We assume these trees represent the same arguments; otherwise, we could have infinitely many arguments with the same support set and conclusion.

Intuitively, a tree represents a possible derivation of the formula at its root and the fact-premise made at its leaves.
The leaves of the tree, constituting the fact-premise, belong to $H = \cn^0 (H)$.
If a node $\beta$ has children nodes
$\beta_{a_1} \in \cn^{i_1} (H)$, \ldots, $\beta_{a_k} \in \cn^{i_k} (H)$,
then $\beta \in \cn^{i+1}(X)$ where $i=\max\{i_1,\ldots,i_k\}$ because by the extension property $\cn^{i_1} (H),\ldots,\cn^{i_k} (H)\subseteq \cn^{i}(H)$.
The root $\phi$, constituting the conclusion, belongs to $\cn^n(H)$,
where $n$ is the longest path from leaf to root.
Note that, by the extension property,
if $\beta \in \cn^i(H)$, then also $\beta \in \cn^{i+1}(H)$, $\beta \in \cn^{i+2}(H)$, \ldots.
The idea is to have $i$ in $\cn^i(H)$ as small as possible (we don't want to argue
longer than necessary).


Some proposals for logic-based argumentation stipulate additionally that the argument's support is consistent and/or that none of its subsets entails the argument's conclusion (see~\cite{Hunter2010}).
However, such restrictions, i.e., minimality and consistency, are not substantial (although required for some specific logics).
In some proposals, the requirement that the support of an argument is consistent may be irrelevant for some logics, especially when consistency is defined by satisfiability.
For instance, in Priest’s three-valued logic~\cite{Priest89} or Belnap’s four-valued logic~\cite{Belnap1977}, every set of formulas in the language of $\{ \neg, \vee, \wedge\}$ is satisfiable.
In frameworks in which the supports of arguments are represented only by literals (atomic formulas or their negation),  arguments like $A = \{a, b\} \Rightarrow a \vee b$ are excluded since their supports are not minimal, although one may consider$\{a, b\}$ a stronger support for $a \vee b$ than, say, $\{a\}$, 
since the set $\{a, b\}$ logically implies every minimal support of $a \vee b$.
To keep our framework as general as possible, we do not consider the extra restrictions for our definition of arguments (See~\cite{Hunter2010,ArieliS19} for further justifications of this choice).


We present instantiations to show the generality of Definition~\ref{def:ab-arg} for generating arguments in argumentation systems in the literature.

\begin{itemize}
\item We start with \emph{deductive argumentation} that uses classical logic. In~~\cite{BesnardH01}, arguments as pairs of premises and conclusions can be simulated in our settings, and for which $H \Rightarrow \phi$ is an argument (in the form of tree-derivations), where $H \subseteq \mL_c$ and $\phi \in \mL_c$ iff $\phi \in \cnb_{c}(H)$, $H$ is minimal (i.e., there is no $H^{\prime} \subset H$ such that $\phi \in \cnb_{c}(H^{\prime})$) and $H$ is consistent. 
For example, we use the propositional logic in Example~\ref{ex:pro-logic}, and the following is an argument in propositional logic $A : \{x, x \supset \neg y \} \Rightarrow \neg y$. Tree-representation of $A$ is shown in Figure~\ref{fig:propositional-DeLP-sequent} (left).
Similarly, since most Description Logics (DLs), such as $ALC$, DL-Lite families, Horn DL, etc.,  are decidable fragments of first-order logic, it is straightforward to apply Definition~\ref{def:ab-arg} to encode arguments of the framework using the DL $ALC$ in~\cite{ZhangL13}.

\item We consider defeasible logic approaches to argumentation, such as~\cite{Dung2009,DimopoulosD0R0W24,Rapberger2024,Lehtonen2024,Alejandro2014}. 
For \emph{defeasible logic programming}~\cite{Alejandro2014}, $H \Rightarrow \phi$ is an argument (in the form of tree-derivations) iff $\phi \in \cnb_d(H)$ and there is no $H^{\prime} \subset H$ such that $\phi \in \cnb_d(H^{\prime})$ and it is not the case that there is $\alpha$ such that $\alpha \in \cnb_d(H)$ and $\neg \alpha \in \cnb_d(H)$ (i.e. $H$ is a minimal consistent set entailing $\phi$).

For "\emph{flat}"-\emph{ABA}~\cite{Dung2009,DimopoulosD0R0W24}, assume that $\cn_d$ ignores differences between various implication symbols in the knowledge base, and for which $H \Rightarrow \phi$ is an argument iff $\phi \in \cnb_d(H)$ where $H \subseteq \mL_{d}$. In this case, the argument, from the support $H$ to the conclusion $\phi$,
can be described as tree-derivations by $\cn_d$.
Note that the minimality and consistency requirements are dropped. 
Similarly, in \emph{"non-flat"}-\emph{ABAs}~\cite{Rapberger2024,Lehtonen2024}, arguments as tree-derivations can be simulated in our setting.

Note, in~\cite{Alejandro2014} only the defeasible rules are represented in the support of the argument, and in~~\cite{Dung2009,DimopoulosD0R0W24,Rapberger2024,Lehtonen2024} only the literals are represented in the support of the argument, but in both cases it is a trivial change (as we do here) to represent both the rules and literals used in the derivation in the support of the argument.

\begin{example}
For $\mK_5 = \{a, \neg b, a \rightarrow_s \neg c ,\ \neg b \wedge \neg c \rightarrow_{d} s,\ s \rightarrow_{s} t,\ a \wedge t \rightarrow_{d} u \}$, the following is an argument in defeasible logic programming $B : \{a, \neg b, a \rightarrow_s \neg c , \neg b \wedge \neg c \rightarrow_{d} s \} \Rightarrow s$ with the sequences of literals $a, \neg c , \neg b, s$. Tree-representations of the arguments are shown in Figure~\ref{fig:propositional-DeLP-sequent} (middle).

For $\mK_6 = \{p, \neg q, s, p \rightarrow \neg r, \neg q \wedge \neg r \wedge s \rightarrow t, t \wedge p \rightarrow u, v \}$, the following is an argument in ABA $C : \{p, \neg q, s, p \rightarrow \neg r, \neg q \wedge \neg r \wedge s \rightarrow t \} \Rightarrow t$.

\end{example}

\item We translate ASPIC/ ASPIC+~\cite{Prakken2002,ModgilP14} into our work as follows:

We have considered the underlying logic of ASPIC/ ASPIC+ as being given by $\cnb_{d}$ (see Remark~\ref{re:aspic}) and $\mL_d$ including the set of literals and strict/ defeasible rules.

We recall argument of the form $A_1, \ldots, A_n \rightarrow_s / \rightarrow_d \phi$ 
in these systems as follows:
\begin{enumerate}
    \item Rules of the form $\rightarrow_s / \rightarrow_d \alpha$ , are arguments with conclusion $\alpha$.

    \item  Let $r$ be a strict/defeasible rule of the form $\beta_1, \ldots, \beta_n \rightarrow_s / \rightarrow_d \phi$, $n \geq 0$.
    Further suppose that $A_1, \ldots, A_n$, $n \geq 0$, are arguments with conclusions $\beta_1, \ldots, \beta_n$ respectively.
    Then $A_1, \ldots, A_n \rightarrow_s / \rightarrow_d \phi$ is an argument with conclusion $\phi$ and last rule $r$.
    
    \item Every argument is constructed by applying finitely many times the above two steps.
\end{enumerate}

The arguments of the form $A_1, \ldots, A_n \rightarrow_s / \rightarrow_d \phi$ can be viewed as tree-derivations in the sense of Definition~\ref{def:ab-arg}, in which the conclusion of the argument is $\phi$; 
the support $H$ of the argument is the set of leaves that are rules of the form $\rightarrow_s / \rightarrow_d \alpha_i$ such that $\alpha_i \in \cn^{0}_{d}(H)$. In this view, the root of the tree is labelled by $\phi$ such $\phi \in \cn^{n}_{d}(H)$; the children $\beta_i$, $i = 1, \ldots, n$, of the root are the roots of subtrees $A_1, \ldots, A_n$; if $\phi \in \cn^{n}_{d}(H)$, then $\beta_i \in \cn^{n-1}_{d}(H)$.
Since $\cnb_{d}(H) = \bigcup_{n}\cn^{n}_{d}(H) $, it follows that $\phi \in \cnb_{d}(H)$.
Note that if $n = 0$, the tree consists of just the root that is the rule of the form $\rightarrow_s / \rightarrow_d \phi$.


\item In argumentation framework for \datalogPM~\cite{ARIOUA201776,Amgoud12}, arguments, viewed as pairs of the premises $H$ (i.e., the set of facts) and the conclusion $\phi$ (i.e., the derived fact), can be represented as tree-derivations in our definitions as follows:  For a consistent set $H \subseteq \mF$ and $\phi \in \mL_{da}$, $H \Rightarrow \phi$ is an argument in the sense of Definition~\ref{def:ab-arg} iff $\phi \in \cnb_{da}(H)$, in which $\phi$ is the root of the tree; $H$ are the leaves.

\begin{example}
    Let us continue Example~\ref{ex:motivation}, the following is an argument in the framework using \datalogPM~$A_7 : \{ \teAs(\vi, \kd),\ \uc(\kd) \} \Rightarrow \ta(\vi)$. By Definition~\ref{def:ab-arg}, the argument can be viewed as a proof tree with the root labelled by $\ta(\vi)$ and the leaves labelled $\teAs(\vi, \kd),\ \uc(\kd)$.
\end{example}
\end{itemize}

\begin{figure}
\begin{tikzpicture}
    \node (dt) at (0,0) {\includegraphics[scale=0.7]{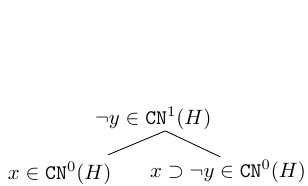}};
    \node (d1) at (5, 0) {\includegraphics[scale=0.7]{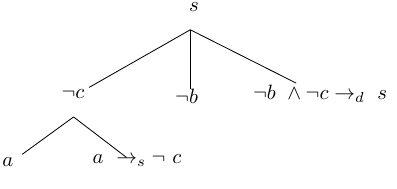}};
     \node (d1) at (10, 0) {\includegraphics[scale=0.7]{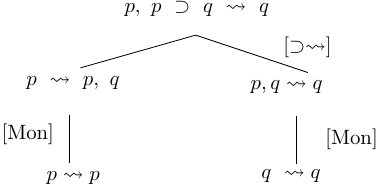}};
\end{tikzpicture}
\caption{
Tree-representation for arguments wrt logics.
}
\label{fig:propositional-DeLP-sequent}
\end{figure}


As shown in examples of~\cite{Yun2020SetsOA,Amgoud12,ArieliH24}, binary attacks, used in the literature~\cite{ArieliS19,ARIOUA201776,LoanHo2022,ZhangL13,Castagna21}, are not enough expressive to capture cases in which n-ary conflicts may arise.
To overcome this limit, some argumentation frameworks introduced the notion of \emph{collective attacks} to better capture non-binary conflicts, and so improve the decision making process in various conflicting situations. To ensure the generality of our framework, we introduce collective attacks.

\begin{definition} [Collective Attacks]
\label{def:ab-att} 
Let $A: \Gamma \Rightarrow \alpha$ be an argument and $\mX \subseteq \Arg_{\mK}$ be a set of arguments such that $\bigcup_{X \in \mX}\Sup(X)$ is consistent. We say that
    \begin{itemize}
        \item $\mX$ \emph{undercut-attacks} $A$ iff there is $\Gamma^{\prime} \subseteq \Gamma$ s.t $\bigcup_{X \in \mX} \{ \Con(X)\} \cup \Gamma^{\prime}$ is inconsistent.
        \item $\mX$ \emph{rebuttal-attacks} $A$ iff $\bigcup_{X \in \mX} \{ \Con(X)\} \cup \{\alpha \}$ is inconsistent.
  
    \end{itemize}
We can say that $\mX$ \emph{attacks} $A$ for short. We use $\Att_{\mK} \subseteq  2^{\Arg_{\mK}} \times \Arg_{\mK}$ to denote \emph{the set of attacks} induced from $ \mK$.
\end{definition}







Note that deductive argumentation can capture n-ary conflicts. However, as discussed in~\cite{Yun2020SetsOA,yun2018}, it argued that the argumentation framework using \datalogPM, an instance of deductive argumentation, may generate a large number of arguments and attacks when using the definition of deductive arguments, as in~\cite{ARIOUA201776}. To address this problem, some redundant arguments are dropped, as discussed in~\cite{yun2018}, or arguments are re-defined as those in ASPIC+, as seen in~\cite{Yun2020SetsOA}. Then the attack relation must be redesigned to preserve all conflicts. In particular, n-ary attacks are allowed where arguments can jointly attack an argument. We will show this issue in the following example.  


\begin{example}
Consider $\mK_2 = ( \mF_2, \mR_2, \mC_2)$ where
\begin{align*}
	\mR_2  = & \emptyset,\\
	\mC_2  = & \{ A(x) \land B(x) \land C(x) \rightarrow \bot \}, \\
    \mF_2 =  &\{A(a), B(a), C(a) \}.
   \end{align*} 
The deductive argumentation approach~\cite{ARIOUA201776} uses six arguments
\begin{align*}
& C_2: ( \{B(a) \}, B(a)), C_3: (\{ C(a) \}, C(a) ), C_4: ( \{A(a), B(a) \}, A(a) \land B(a) ), \\
& C_1: ( \{A(a)\}, A(a)), C_5: ( \{ A(a), C(a) \}, A(a) \land C(a)), C_6: ( \{B(a), C(a) \}, B(a) \land C(a))    
\end{align*}
to obtains the preferred extensions: $\{C_1, C_2, C_4\}$,
$\{C_1, C_3, C_5\}$, $\{C_2, C_3, C_6\}$.

In contrast, our approach uses three arguments $B_1 : \{ A(a)\} \Rightarrow A(a)$, $B_2 : \{ B(a)\} \Rightarrow B(a)$, $B_3 : \{ C(a)\} \Rightarrow C(a)$  with collective attacks, such as $\{B_1, B_2 \}$ attacks $B_3$, etc., to obtain extensions $\{B_1, B_2\}, \{B_1, B_3\}, \{B_2, B_3\}$.

\end{example}

\begin{remark}
\label{re:compare-attacks}
Similar to structured argumentation, such as deductive argumentation for propositional logic~\cite{BesnardH01}, DLs~\cite{ZhangL13}, \datalogPM~\cite{ARIOUA201776},
DeLP systems~\cite{Alejandro2014},  ASPIC systems~\cite{Prakken2002}
and sequent-based argumentation~\cite{HEYNINCK2020103,ArieliH24}, attacks in our framework are defined between individual arguments.
In contrast, in ABA systems~\cite{Dung2009,DimopoulosD0R0W24,Rapberger2024,Lehtonen2024}, attacks are defined between sets of assumptions.
However, in these ABA systems, the arguments generated from a set of assumptions are \emph{tree-derivations} (both notions are used interchangeably), which can be instantiated by Definition~\ref{def:ab-arg}, see above.
Thus, the attacks defined on assumptions are equivalent to the attacks defined on the level of arguments.
\end{remark}

\begin{remark}
    Note that the definition of collective attacks holds if we only consider ASPIC+ without preferences~\cite{ModgilP14}. We leave the case of preferences for future work.
\end{remark}

We introduce \emph{proof-oriented argumentation} (P-SAF) as an instantiation of SAFs~\cite{Nielsen2007}. Our framework is comparable to the one of~\cite{loanho_2024} in that both are applied to abstract logic. However, arguments in our setting differ from those in~\cite{loanho_2024} in that we represent arguments in the form of a tree.

\begin{definition} \label{def:ab-af} Let $ \mK$ be a KB, the corresponding \emph{proof-oriented (logical) argumentation (P-SAF)} $\mAF_{\mK}$ is the pair $(\Arg_{\mK}, \Att_{\mK})$ where $\Arg_{\mK}$ is the set of arguments induced from $ \mK$ and $\Att_{\mK}$ is the set of attacks.
\end{definition}


In the next subsections, we show that the existing argumentation frameworks are instances of logic-associated argumentation frameworks.

\subsection{Translating the Existing Argumentation Frameworks to P-SAFs}
\label{subsec:relation-framework}

We have already shown that the existing frameworks (deductive argumentation~\cite{BesnardH01,ZhangL13,ARIOUA201776}, DeLP systems~\cite{Alejandro2014},  ASPIC systems~\cite{Prakken2002}, ASPIC+ without preferences~\cite{ModgilP14}, ABA systems~\cite{Dung2009,DimopoulosD0R0W24,Rapberger2024,Lehtonen2024}) can be seen as instances of our settings. Now we show how sequent-based argumentation~\cite{ArieliS19,BorgAS17} and contrapositive ABAs~\cite{HEYNINCK2020103,ArieliH24} fit in our framework.

\begin{itemize}

\item Sequent-based argumentation~\cite{ArieliS19}, using propositional logic, represents arguments as \emph{sequents}.
The construction of arguments from simpler arguments is done by the inference rules of the \emph{sequent calculus}.
\emph{Attack rules} are represented as \emph{sequent elimination rules}. 
The ingredients of sequent-based argumentation may be simulated in our setting:

We start with a logic $(\mL_s, \cn_s)$. $\mL_s$ is a propositional language having a set of atomic formulas $\AT(\mL_s)$.  
If $\phi$ and $\alpha$ are formulas wrt.$\AT(\mL_s)$ then $\neg \phi$, $\phi \wedge \alpha$ are formulas wrt. $\AT(\mL_s)$. 
We assume that the implication $\supset$ and $\leftrightarrow$ are defined accordingly.
Propositional logic can be modelled by using sequents~\cite{ArieliS19}.
A sequent is a formula in the language $\mL_s$ of propositional logic enriched by the addition of a new symbol $\leadsto$.
We call such sequent the \emph{s-formula} of $\mL_s$ to avoid ambiguity.
In particular, for a formula $p \in \mL_s$ the \emph{axiom} $p \leadsto p$ are a s-formula in $\mL_s$. 
In general, for any set of formulas $\Psi \subseteq \mL_s$ and $\phi \in \mL_s$,  the sequents $\Psi \leadsto \phi$ are s-formulas of $\mL_s$. 


Define $\cn_s$ as follows: 
For a set of formulas $X \subseteq \mL_s$, a formula $\phi \in \mL_s$ satisfies $\phi \in \cn_s(X)$ iff an inference rule
$\frac{\Psi_1 \leadsto \phi_1 \ldots \Psi_n \leadsto \phi_n}{\Psi \leadsto \phi}$, where the sequents $\Psi \leadsto \phi$ and $\Psi_i \leadsto \phi_i$ ($i = 1, \ldots, n$) are s-formulas of $\mL_s$, is applied to $X$ such that $\Psi_1, \ldots, \Psi_n$ are subsets of $X$.
We here consider the inference rules as \emph{structural rules} and \emph{logical rules} in~\cite{ArieliS19}.
Then we define $\cnb_s(X) = \bigcup_{n \geq 0}\cn_{s}^{n}(X)$.


Let us define arguments in the sense of Definition~\ref{def:ab-arg}: For a set of formulas $H \subseteq \mL_s$, $H \Rightarrow \phi$ is an argument iff $\phi \in \cnb_s(H)$. In this case, the argument, from the premise $H$ to the conclusion $\phi$, can be described by a sequence of applications of inference rules.
Such sequence is naturally organized in the shape of a tree by $\cn_s$.  Each step up the tree corresponds to an application of an inference rule. The root of the tree is the final sequent (the conclusion), and the leaves are the axioms or initial sequents.


We show how the attack rules can be described in terms of corresponding attack relations in Definition~\ref{def:ab-att}. The attack rule has the form of 
$\frac{\Psi_1 \leadsto \phi_1 , \ldots ,\Psi_n \leadsto \phi_n}{\Psi_n \not \leadsto \phi_n}$, in which
the first sequent in the attack rule’s prerequisites is the “attacking” sequent, the last sequent in the attack rule’s prerequisites is the “attacked” sequent, 
and the other prerequisites are the conditions for the attack. According to the discussion above, these sequents $\Psi_i \leadsto \phi_i$, ($i = 1, \ldots, n$), can be viewed as arguments $A_i : \Psi_i \Rightarrow \phi_i$ in the sense of Definition~\ref{def:ab-arg} where $\phi_i \in \cnb_s (\Psi_i)$. Then, in this view, the first sequent $\Psi_1 \leadsto \phi_1$ is the attacking argument $A_1$, the last sequent $\Psi_n \leadsto \phi_n$ is the attacked argument $A_n$, and the conclusions of the attack rule are the eliminations of the attacked arguments, meaning that $A_n$ is removed since $A_1$ attacks $A_n$ in the sense of Definition~\ref{def:ab-att}.

\begin{example} [Continue Example~\ref{ex:pro-logic}] Consider $\mK = \{x, x \supset y, \neg y \} \subseteq \mL_s$.
The following is an argument in propositional logic $A : \{x, x \supset y \} \Rightarrow  y$, $B : \{ \neg y \} \Rightarrow \neg y$. $A$ attacks $B$ since $\{ x, x \supset y, \neg y \}$ is inconsistent, i.e., $\cnb_s( \{ x, x \supset y, \neg y \}) = \mL_s$. Tree-representations of the arguments are shown in Figure~\ref{fig:propositional-DeLP-sequent}(Right), in which $[Mon]$ and $[\supset,\leadsto]$ are the names of inference rules.
\end{example}

\item  Contrapositive ABA ~\cite{HEYNINCK2020103,ArieliH24} may be based on propositional logic and \emph{strict} and \emph{candidate (defeasible) assumptions} consists of \emph{arbitrary} formulas in the language of that logic. Attacks are defined between sets of assumptions, i.e., defeasible assumptions may be attacked in the presence of a counter defeasible information.
Our P-SAF framework using logic $(\mL_{co}, \cn_{co})$ can simulate contrapositive ABAs as follows:
 
Assume that an implication connective $\supset$ is deductive (i.e., it is a $\vdash$-implication in contrapositive ABAs) and converting such implications $\supset$ (i.e., $\phi_1 \wedge \cdots \wedge \phi_n \supset \psi$) to rules of the form $\phi_1, \ldots, \phi_n \rightarrow \psi$ in $\mL_{co}$. Here we ignore the distinction between defeasible and strict rules.
With this assumption, the rules in $\mL_{co}$ can be treated as $\vdash$-\emph{implication}, i.e.,$\{ \phi_1, \ldots, \phi_n \rightarrow \psi \in \mL_{co} \mid \phi_1, \ldots, \phi_n \vdash \psi \}$; their contrapositions treated as $\vdash$-\emph{contrapositive}, i.e., $\{ \phi_1, \ldots, \phi_{i-1}, \neg \psi, \phi_{i+1}, \dots \phi_n \rightarrow \neg \phi_i \mid \phi_1, \ldots, \phi_{i-1}, \neg \psi, \phi_{i+1}, \dots \phi_n \vdash \neg \phi_i \}$ \footnote{ See definitions of $\vdash$-implication and $\vdash$-contrapositive in~~\cite{HEYNINCK2020103,ArieliH24}}.
This translation views the contrapositive ABA as a special case of the traditional definition of ABA~\cite{Dung2009}; also the traditional ABA can be simulated in our P-SAF using $(\mL_{co}, \cn_{co})$. Thus, the results and concepts of P-SAFs can apply to the contrapositive ABAs.
Indeed, first, $\mL_{co}$ includes the strict and candidate assumptions
~\footnote{We abuse the term “strict and candidate assumptions” and refer them as "literals".}
and the set of rules.
These rules as reasoning patterns are used in $\cn_{co}$ as defined in Definition~\ref{def:cn-contrapositive}.
Second, by Definition~\ref{def:ab-arg}, for $H \subseteq \mL_{co}$ be a set of assumptions and $\phi \in \mL_{co}$ , $H \Rightarrow \phi$ is an argument iff $\phi \in \cnb_{co}(H)$. Third, the attacks defined on assumptions in traditional ABA are equivalent to those in our S-PAFs (see Remark~\ref{re:compare-attacks} for further explanation).

Note that contrapositive ABAs in~\cite{ArieliH24} (with collective attacks) are analogous to those in~\cite{HEYNINCK2020103}, except they drop the requirement that any set of candidate assumptions contributing to the attacks must be \emph{close}. Similarly, our P-SAF framework, which uses $\cnb_{co}$ in the definition of inconsistency for our attacks,
does not impose this additional requirement.
\end{itemize}

\subsection{Acceptability of P-SAFs and Relations to Reasoning with MSCs}

Semantics of P-SAFs are now defined as in the definition of semantics for SAFs~\cite{Nielsen2007}. These semantics consist of \textit{admissible}, \textit{complete}, \textit{stable}, \textit{preferred} and \textit{grounded semantics}. 

Given a P-SAF $\mAF_{\mK} = (\Arg_{\mK}, \Att_{\mK})$ and $\mS\subseteq \Arg_{\mK}$. $\mS$ \emph{attacks} $\mX$ iff $ \exists A \in \mX$ s.t. $\mS$ attacks $A$.  $\mS$ \emph{defends} $A$ if for each $\mX \subseteq \Arg_{\mK}$ s.t. $\mX$ attacks $A$, some $\mS^{\prime} \subseteq \mS$ attacks $\mX$. An \textit{extension} $\mS$ is called

\begin{itemize}

    \item \emph{conflict-free} if it does not attack itself;

    \item \emph{admissible} $(\adm)$ if it is conflict-free and defends itself.

    \item \emph{complete} $(\cmp)$  if it is  admissible containing all arguments  it defends.

    \item \emph{preferred} $(\prf)$ if it is an inclusion-maximal admissible extension. 

    \item \emph{stable} $(\stb)$ if it is conflict-free and attacks every argument not in it.

    \item \emph{grounded} $(\grd)$ if it is an inclusion-minimal complete extension.
    
\end{itemize}
Note that this implies that each grounded or preferred extension of a P-SAF is an admissible one, the grounded extension is contained in all other extensions.

Let $\Exts_{\sem}( \mAF_{\mK})$ denote \emph{the set of all extensions} of $ \mAF_{\mK}$ under the semantics $\sem \in \{\adm,\ \stb,\ \prf,\ \grd \}$.
Let us define \emph{acceptability} in P-SAFs.
\begin{definition}
  \label{def:accept}
  Let $\mAF_{\mK}$ be the corresponding P-SAF of a KB $\mK$ and $\sem \in \{\adm, \stb, \prf \}$. A formula $\phi \in \mL$ is 
   \begin{itemize}

    \item \emph{credulously} accepted under $\sem$ iff for some~$\mE \in \Exts_{\sem}( \mAF_{\mK})$, $\phi \in \Cons(\mE)$.

    \item \emph{groundedly} accepted under $\grd$ iff for some $\mE \in \Exts_{\grd}( \mAF_{\mK})$, $\phi \in \Cons(\mE)$.

    \item \emph{sceptically} accepted under $\sem$ iff for all $\mE \in \Exts_{\sem}( \mAF_{\mK})$, $\phi \in \Cons(\mE)$.
  \end{itemize}
  \end{definition}

Next, we show the relation to reasoning with maximal consistent subsets in inconsistent KBs.
Proposition~\ref{pro:psaf-link} shows a relation between extensions of P-SAFs and MSCs of KBs.

\begin{proposition}
\label{pro:psaf-link}
 Let $\mAF_{\mK}$ be the corresponding P-SAF of a KB $\mK$. Then,

 \begin{itemize}
     \item the maximal consistent subset of $ \mK$ coincides with the stable/ preferred extension of $\mAF_{\mK}$;

     \item the intersection of the maximal consistent subsets of $ \mK$ coincides with the grounded extension of $\mAF_{\mK}$.
 \end{itemize}
\end{proposition}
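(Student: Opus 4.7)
The plan is to establish a bijection between $\msc(\mK)$ and the stable extensions of $\mAF_\mK$, then derive the grounded case from it. Define $\Args(\Delta) := \{A \in \Arg_\mK \mid \Sup(A) \subseteq \Delta\}$ for consistent $\Delta \subseteq \mK$, and $\base(\mE) := \bigcup_{A \in \mE}\Sup(A)$ for $\mE \subseteq \Arg_\mK$. I would organize the proof into three steps: (i) these two maps restrict to mutually inverse bijections between $\msc(\mK)$ and the stable extensions of $\mAF_\mK$; (ii) stable and preferred extensions coincide in this setting; and (iii) the grounded extension equals $\Args\bigl(\bigcap_{\Delta \in \msc(\mK)}\Delta\bigr)$.

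For (i), if $\Delta \in \msc(\mK)$, conflict-freeness of $\Args(\Delta)$ follows because any attack on $A \in \Args(\Delta)$ by $\mX \subseteq \Args(\Delta)$ would, via Definition~\ref{def:ab-att}, produce a set contained in $\cnb(\Delta)$ whose $\cnb$-closure equals $\mL$, contradicting consistency of $\Delta$ via idempotence. For stability, any $B \notin \Args(\Delta)$ contains some $f \in \Sup(B) \setminus \Delta$, so maximality of $\Delta$ yields a minimal conflict $\Delta_0 \cup \{f\}$ with $\Delta_0 \subseteq \Delta$ consistent, and the fact-arguments $\{\{g\} \Rightarrow g \mid g \in \Delta_0\} \subseteq \Args(\Delta)$ undercut-attack $B$ via $\Gamma' = \{f\}$. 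Conversely, for $\mE$ stable, the key observation is that $\{f\} \Rightarrow f \in \mE$ for every $f \in \base(\mE)$, since otherwise an attack by $\mE$ on this fact-argument would propagate (choosing $\Gamma' = \{f\}$) to any $A \in \mE$ with $f \in \Sup(A)$, breaking conflict-freeness. Consistency of $\base(\mE)$ then follows by exhibiting a rebuttal inside $\mE$ from any minimal inconsistent subset, and maximality follows because any $f \in \mK \setminus \base(\mE)$ is attacked by $\mE$, forcing $\cnb(\base(\mE) \cup \{f\}) = \mL$. The inverse identities $\base(\Args(\Delta)) = \Delta$ and $\Args(\base(\mE)) = \mE$ then follow by routine closure arguments.

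For (ii), given a preferred $\mE$, an analogous admissibility argument shows $\mE \cup \{\{f\} \Rightarrow f\}$ is admissible for every $f \in \base(\mE)$, so by maximality $\{f\} \Rightarrow f \in \mE$ and hence $\base(\mE)$ is consistent; extending to some MCS $\Delta$ by Zorn gives $\mE \subseteq \Args(\Delta)$ with $\Args(\Delta)$ admissible, so maximality of $\mE$ forces $\mE = \Args(\Delta)$, making $\mE$ stable. For (iii), write $\mI := \bigcap_{\Delta \in \msc(\mK)}\Delta$, which is consistent as a subset of any MCS. Every $A \in \Args(\mI)$ is unattacked: any attacker $\mX$ has $\bigcup_{X \in \mX}\Sup(X)$ consistent, hence extendable to some MCS $\Delta^\star$, but $\Sup(A) \subseteq \mI \subseteq \Delta^\star$ means the inconsistency witnessing the attack lies in $\cnb(\Delta^\star)$ and contradicts its consistency. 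Thus $\Args(\mI) \subseteq \mG$; the reverse inclusion $\mG \subseteq \bigcap_{\Delta}\Args(\Delta) = \Args(\mI)$ holds since $\mG$ is contained in every complete extension. The principal obstacle is a tacit reliance on monotonicity of $\cnb$ (that $X \subseteq Y$ implies $\cnb(X) \subseteq \cnb(Y)$), which is not formally among axioms $A_1$--$A_5$ of Section~\ref{sec:preliminary} but holds in every instantiation discussed; a careful write-up should promote monotonicity to a standing hypothesis or verify each closure step directly in the relevant instantiations.
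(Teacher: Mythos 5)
Your argument is correct and, for the first bullet, follows essentially the same route as the paper's (very terse) proof: the paper argues the cycle ``every preferred extension is the set of arguments generated from an MCS, every such set is stable, and every stable extension is preferred,'' which is exactly your steps (i)--(ii) repackaged as a pair of mutually inverse maps $\Args(\cdot)$ and $\base(\cdot)$. Where you genuinely diverge is the grounded case: the paper derives it from a lemma stating that when no argument is rejected under preferred semantics the grounded extension equals the intersection of all preferred extensions, and then invokes the first bullet; you instead show directly that every argument supported by $\mI=\bigcap_{\Delta\in\msc(\mK)}\Delta$ is unattacked (hence lies in the grounded extension) and that the grounded extension is contained in every complete, hence every preferred, extension, giving the reverse inclusion via $\bigcap_{\Delta}\Args(\Delta)=\Args(\mI)$. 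Your route is more self-contained and sidesteps the ``no rejected arguments'' hypothesis, which in this framework amounts to every argument having consistent support --- something Definition~\ref{def:ab-arg} does not guarantee and the paper leaves implicit. Your closing caveat about monotonicity of $\cnb$ is well taken: it is used repeatedly (e.g.\ to propagate inconsistency of a subset of $\cnb(\Delta)$ to $\Delta$ itself) yet is not among axioms $A_1$--$A_5$, although it holds in every instantiation of Section~\ref{sec:preliminary}. Two further points deserve the same explicit treatment in a full write-up: extracting a minimal conflict from $\Delta\cup\{f\}$ in your stability step needs finiteness ($A_3$) or some comparable well-foundedness, and arguments with inconsistent supports must be shown to be attacked by every $\Args(\Delta)$ for stability to go through.
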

\begin{proof} 
The idea of the proof is to show that every preferred extension is the set of arguments generated from a MCS, that every such set of arguments is a stable extension, and that every stable extension is preferred.
The proof of the second statement follows the lemma saying that if there are no rejected arguments under preferred semantics, then the grounded extension is equal to the intersection of all preferred extensions. By the proof of the first statement, every preferred extension is a maximal consistent subset. Thus the second statement is proved.
\end{proof}
\begin{remark}
In general, the grounded extension is contained in the intersection of all maximal consistent subsets.
\end{remark}
The main result of this section, Theorem \ref{thm:ab-link}, which follows from Proposition \ref{pro:psaf-link} generalises results from previous works. 

\begin{theorem}
\label{thm:ab-link}
Let $\mAF_{\mK}$ be the corresponding P-SAF of a KB $\mK$, $\phi \in \mL$ a formula and $\sem \in \{\adm, \stb, \prf \}$. Then, $\phi$ is entailed in
\begin{itemize}
    \item some maximal consistent subset iff $\phi$ is \emph{credulously accepted} under $\sem$.
       
    \item all maximal consistent subsets iff $\phi$ is \emph{sceptically accepted} under $\sem$.
    
    \item the intersection of all maximal consistent subsets iff $\phi$ is \emph{groundedly accepted} under $\grd$.
   
\end{itemize}
\end{theorem}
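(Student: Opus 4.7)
The plan is to derive Theorem~\ref{thm:ab-link} from Proposition~\ref{pro:psaf-link} by establishing a bridging identity that relates, at the level of formulas, the consequence closure of a maximal consistent subset to the set of conclusions of its corresponding extension.

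First, I would prove the bridging identity: for every MCS $\Delta$ of $\mK$ and its corresponding preferred (equivalently stable) extension $\mE$ given by Proposition~\ref{pro:psaf-link}, $\Cons(\mE) = \cnb(\Delta)$. The inclusion $\Cons(\mE) \subseteq \cnb(\Delta)$ follows from Definition~\ref{def:ab-arg} and the fact that every $A \in \mE$ satisfies $\Sup(A) \subseteq \Delta$, combined with monotonicity of $\cnb$ (which follows from $A_1$ and $A_2$). For the reverse inclusion, given $\phi \in \cnb(\Delta)$, unwind a tree-derivation to exhibit an argument $A : H \Rightarrow \phi$ with $H \subseteq \Delta$; by construction this $A$ lies in $\mE$.

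Second, I would translate each bullet through this identity. The credulous clause unwinds to: $\phi$ is entailed in some MCS iff $\phi \in \cnb(\Delta)$ for some MCS $\Delta$, iff by the bridging identity and Proposition~\ref{pro:psaf-link} $\phi \in \Cons(\mE)$ for some preferred/stable extension $\mE$, that is, credulous acceptance under $\sem \in \{\stb, \prf\}$. To handle $\sem = \adm$, I would invoke the standard argumentation fact that every admissible set extends to a preferred one and every preferred extension is admissible, so credulous acceptance under $\adm$ coincides with that under $\prf$. The sceptical case is the universal analogue (the intersection of $\cnb(\Delta)$ over all MCS $\Delta$ equals the intersection of $\Cons(\mE)$ over all preferred/stable $\mE$), and the grounded case follows from applying the bridging identity to the second clause of Proposition~\ref{pro:psaf-link}, which identifies the grounded extension's support with $\bigcap_{\Delta \in \msc(\mK)} \Delta$.

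The main obstacle is the $\supseteq$ direction of the bridging identity: I must verify that $\Arg_{\Delta} := \{A \in \Arg_{\mK} : \Sup(A) \subseteq \Delta\}$ is precisely the preferred extension associated with $\Delta$ by Proposition~\ref{pro:psaf-link}. Conflict-freeness uses consistency of $\Delta$: any collective attack internal to $\Arg_{\Delta}$ would, by Definition~\ref{def:ab-att}, force $\Delta$ itself to be inconsistent. Maximality exploits the MCS property: for any $A \notin \Arg_{\Delta}$, the set $\Delta \cup \Sup(A)$ is inconsistent, which yields a collective attack from $\Arg_{\Delta}$ onto $A$. Once this identification is secured, the remainder of the theorem is a direct translation through Proposition~\ref{pro:psaf-link}.
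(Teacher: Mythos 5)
Your proposal follows essentially the same route as the paper: the paper gives no separate argument for Theorem~\ref{thm:ab-link} beyond "follows from Proposition~\ref{pro:psaf-link}", and the proof sketch of that proposition is exactly your third paragraph (each MCS $\Delta$ generates the set $\Args(\Delta)$, which is conflict-free by consistency of $\Delta$, stable by maximality of $\Delta$, hence preferred; conversely every preferred extension arises this way). Your bridging identity $\Cons(\Args(\Delta)) = \cnb(\Delta)$ is the step the paper leaves entirely implicit, and making it explicit is the right thing to do, since the theorem is about formulas while the proposition is about extensions.

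One justification in your argument is wrong as stated: monotonicity of $\cnb$ does \emph{not} follow from $A_1$ (expansion) and $A_2$ (idempotence). A map can satisfy both and still fail $X \subseteq Y \Rightarrow \cnb(X) \subseteq \cnb(Y)$ (e.g.\ $\cnb(\{a\}) = \{a,c\}$ and $\cnb(\{a,b\}) = \{a,b\}$ is expansive and idempotent but not monotone). Monotonicity is genuinely needed both for the inclusion $\Cons(\Args(\Delta)) \subseteq \cnb(\Delta)$ and for your conflict-freeness step (an inconsistent subset of $\cnb(\Delta)$ forces $\cnb(\Delta) = \mL$ only via monotonicity plus idempotence), so you should either add it as a standing assumption or observe that it holds for every $\cn$ defined, as in all of the paper's instantiations, by applicability of inference rules to subsets of $X$. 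With that repaired, the remaining steps check out, and your restriction of the sceptical clause to $\prf$ and $\stb$ (rather than $\adm$, where the empty admissible set trivializes sceptical acceptance) is in fact more careful than the theorem statement itself.
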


To argue the quality of P-SAF, it can be shown that it satisfies the rationality postulates introduced in~~\cite{DAgostinoM18,AmgoudB13}. 

\begin{definition}
Let $\mAF_{\mK}$ be the corresponding P-SAF of a KB $\mK$. Wrt. $\sem \in \{\adm, \stb, \prf, \grd\}$, $\mAF_{\mK}$ is 
    \begin{enumerate} 
	\item \emph{closed under $\cnb$} iff for all $\mE \in \Exts_{\sem}(\mAF_{\mK})$, $\Cons(\mE) = \cnb(\Cons(\mE))$;
	\item \emph{consistent} iff for all $\mE \in \Exts_{\sem}(\mAF_{\mK})$, $\Cons(\mE)$ is consistent;
    \end{enumerate}
\end{definition}

 \begin{proposition}
 \label{pro:postulate} Wrt. to any semantics in $\{\adm ,\stb$, $\prf,\grd\}$, $\mAF_{\mK}$ satisfies consistency, closure.
\end{proposition}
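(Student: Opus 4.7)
The plan is to leverage Proposition~\ref{pro:psaf-link}, which ties extensions of $\mAF_\mK$ to maximal consistent subsets of $\mK$, and combine this with the expansion ($A_1$) and idempotence ($A_2$) axioms of $\cnb$. I would first dispatch the preferred and stable cases together, then the grounded case by the same template, and finally the admissible case by embedding.

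First I would treat $\sem \in \{\stb, \prf\}$. By Proposition~\ref{pro:psaf-link}, each such $\mE$ coincides with the set of arguments whose supports sit inside some $M \in \msc(\mK)$, so $\Sup(\mE) \subseteq M$ and, since every argument $A \in \mE$ satisfies $\Con(A) \in \cnb(\Sup(A)) \subseteq \cnb(M)$, we get $\Cons(\mE) \subseteq \cnb(M)$. For \textbf{consistency}, $M$ is consistent, i.e.\ $\cnb(M) \neq \mL$, and then $\cnb(\Cons(\mE)) \subseteq \cnb(\cnb(M)) = \cnb(M) \neq \mL$ by idempotence, so $\Cons(\mE)$ is consistent. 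For \textbf{closure}, the inclusion $\Cons(\mE) \subseteq \cnb(\Cons(\mE))$ is just $A_1$; conversely, if $\phi \in \cnb(\Cons(\mE)) \subseteq \cnb(\cnb(M)) = \cnb(M)$, then by the tree-derivation formulation of Definition~\ref{def:ab-arg} there is an argument $A : H \Rightarrow \phi$ with $H \subseteq M$, and the correspondence of Proposition~\ref{pro:psaf-link} forces $A \in \mE$, whence $\phi \in \Cons(\mE)$.

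For $\grd$, I would run the identical argument with $\Psi = \bigcap_{M \in \msc(\mK)} M$ in place of $M$: by Proposition~\ref{pro:psaf-link} the grounded extension is exactly the set of arguments supported by $\Psi$, and $\Psi$ is consistent because it is contained in every MCS, so the consistency and closure proofs carry over verbatim via $\cnb(\Psi) \supseteq \Cons(\grd\text{-}\mE)$ and idempotence. For $\adm$, the key observation is that every admissible extension $\mE$ is contained in some preferred extension $\mE^\prime$ and hence $\Sup(\mE) \subseteq \Sup(\mE^\prime) \subseteq M$ for some $M \in \msc(\mK)$; consistency then follows exactly as before, and for closure I would again rely on $\cnb(\Cons(\mE)) \subseteq \cnb(\Sup(\mE))$ together with the fact that any argument constructible from a subset of $\Sup(\mE)$ is not attacked by any set conflict-free with $\mE$, so its inclusion is compatible with admissibility.

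The step I expect to be most delicate is the closure direction: one has to manufacture, for each $\phi \in \cnb(\Cons(\mE))$, a concrete argument whose support lies inside the associated MCS and then argue that this argument already belongs to $\mE$. This is clean for $\stb/\prf/\grd$ because the Proposition~\ref{pro:psaf-link} correspondence gives membership for free, but for $\adm$ it requires checking that adjoining the newly-built argument preserves conflict-freeness and defense; the argument should go through because any collective attack on the new argument would, by Definition~\ref{def:ab-att}, translate into an attack on the supporting arguments of $\mE$ it was built from, which $\mE$ already defends.
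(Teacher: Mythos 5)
The paper itself gives no argument for this proposition beyond deferring to Proposition~2 of the cited workshop paper, so there is nothing concrete to match your proof against; your route through Proposition~\ref{pro:psaf-link} is the natural reconstruction, and for $\stb$, $\prf$ and $\grd$ it essentially works. One caveat even there: you repeatedly use monotonicity of $\cnb$ (in $\cnb(\Cons(\mE)) \subseteq \cnb(\cnb(M))$, and in ``$\Psi$ is consistent because it is contained in every MCS''), but the paper only requires the axioms $A_1$ (expansion) and $A_2$ (idempotence) and explicitly intends to cover non-monotonic consequence operators. For every instantiation in the paper $\cnb$ happens to be monotonic, but in the stated generality this step needs either an explicit monotonicity hypothesis or a different argument.

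The genuine gap is the closure half for admissible semantics. Having produced, for $\phi \in \cnb(\Cons(\mE))$, an argument $B : H \Rightarrow \phi$ with $H \subseteq \Sup(\mE)$, you argue that adjoining $B$ to $\mE$ preserves conflict-freeness and defence. That shows $\mE \cup \{B\}$ is admissible; it does not show $B \in \mE$, which is what $\Cons(\mE) = \cnb(\Cons(\mE))$ requires. Admissible sets are not upward closed under ``arguments they could accept'': if $A : \{a\} \Rightarrow a$ is unattacked and $b \in \cnb(\{a\}) \setminus \{a\}$, then $\{A\}$ is admissible but $\Cons(\{A\}) = \{a\} \neq \cnb(\{a\})$, so the closure postulate fails for this extension. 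For $\stb$ and $\prf$ your argument closes this hole precisely because Proposition~\ref{pro:psaf-link} identifies the extension with \emph{all} arguments generated by the MCS, and similarly for $\grd$ via $\Psi$; no such identification is available for an arbitrary admissible set. So either the closure claim must be restricted to the complete-based semantics (which is how this postulate is usually stated in the literature), or some maximality assumption on $\mE$ has to be imported --- as written, your proof of the $\adm$ case cannot be completed because the statement it targets is false for arbitrary admissible extensions.
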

The proof of Proposition~\ref{pro:postulate} is analogous to those of Proposition 2 in~\cite{loanho_2024}. Because of this similarity, they are not included in the appendix.

\begin{example} [Continue Example~\ref{ex:motivation}]
\label{ex:KB-arg}
Recall $\mK_1$. Table~\ref{tab:arg} shows the supports and conclusions of all arguments induced from $\mK$. 
The corresponding P-SAF admits $\stb$ ($\prf$) extensions:
$\Exts_{\stb / \prf}(\mAF_1) = \{\mE_1, \ldots, \mE_6\}$, where
$\mE_1 = \Args(\{ \teAs(\vi,\kd), \uc(\kd)\})$ \footnote{Fix $\mF^{\prime} \subseteq \mF$, $\Args(\mF^{\prime})$ is the set of \emph{arguments generated by} $\mF^{\prime}$} $ = \{A_5, A_6, A_2\}$,
and $\mE_2, \ldots, \mE_6$ are obtained in an analogous way. It can be seen that the extensions correspond to the repairs of the KBs (by Theorem~\ref{thm:ab-link}). 

Reconsider $q_1 = \rese(\vi)$. We have that $q_1$ is credulously accepted under $\stb$ ($\prf$) extensions. In other words, $\vi$ is a \emph{possible answer} for $q_1$.  
\end{example}

\begin{table}\vspace{-6mm} 
\centering
  \caption{Supports and conclusions of arguments}
  \begin{tabular}{|c|l|l|}
    \hline
    \textbf{Argument} & \textbf{$\Sup(A_i)$} & \textbf{$\Con(A_i)$} \\
    \hline
   $A_0$ & $\{\te(\vi, \kr)\}$ & $\te(\vi, \kr)$ \\
   $A_9$ & $\{\gc(\kr)\}$ & $\gc(\kr)$ \\
   $A_7$ & $\{ \gc(\kr), \te(\vi, \kr) \}$ & $\fp(\vi)$ \\
   $A_1$ & $\{ \gc(\kr), \te(\vi, \kr) \}$ & $\rese(\vi)$\\
   $A_4$ & $\{\te(\vi, \kd)\}$ & $\te(\vi, \kd)$ \\
   $A_5$ & $\{\teAs(\vi, \kd)\}$ & $\teAs(\vi, \kd)$\\
   $A_6$ & $\{ \uc(\kd) \}$ & $\uc(\kd)$\\
   $A_2$ & $\{ \teAs(\vi, \kd),\ \uc(\kd) \}$ & $\ta(\vi)$\\
   $A_3$ & $\{\te(\vi, \kd)\}$ & $\lect(\vi)$ \\
   $A_{8}$ & $\{\te(\vi, \kr)\}$ & $\lect(\vi)$ \\
   $A_{10}$ & $\{\te(\vi, \kr)\}$ & $\emp(\vi)$ \\
   $A_{11}$ & $\{\te(\vi, \kd)\}$ & $\emp(\vi)$ \\
   $A_{12}$ & $\{ \gc(\kr), \te(\vi, \kr)\}$ & $\emp(\vi)$ \\
    \hline
  \end{tabular}
  \label{tab:arg} 
\end{table}   

In this section, we have translated from KBs into P-SAFs. Consequently, the acceptance of a formula $\phi$ of $\mL$ corresponds to the acceptance of a set of arguments $\mA$ for $\phi$. When we say "a set of arguments $\mA$ for $\phi$", it means simply that for each argument in $\mA$, its consequence is $\phi$.
We next introduce a novel notion of \textit{explanatory dialogue} ("\emph{dialogue}" for short) viewed as a \emph{dialectical proof procedure} in Section~\ref{sec:model-exp-dia}. Section 5 will show how to use a dialogue model to determine and explain the acceptance of $\phi$ wrt argumentation semantics.

\section{Explanatory Dialogue Models}
\label{sec:model-exp-dia}

Inspired by~\cite{prakken_2006,Prakken05}, we develop a \textit{novel explanatory dialogue model} of P-SAF by examining the dispute process involving the exchange of arguments (represented as formulas in KBs) between two agents. The novel explanatory dialogue model can show how to determine and explain the acceptance of a formula wrt argumentation semantics.



\subsection{Basic Notions}
\textbf{Concepts} of a novel dialogue model for P-SAFs include \textbf{utterances, dialogues} and \textbf{concrete dialogue trees} ("\textbf{dialogue tree}" for short).
In this model, a topic language $\mL_{t}$ is abstract logic $(\mL, \cn)$; dialogues are sequences of utterances between two agents $a_1$ and $a_2$ sharing a common language $\mL_{c}$. Utterances are defined as follows:

\begin{definition} [Utterances]
An \emph{utterance} of agents $a_i,\ i \in \{1,2\}$ has the form $u = (a_i, \TG, \CO, \ID)$, where:
\begin{itemize}
    \item $\ID \in \mathbb{N}$ is the \emph{identifier} of the utterance,

    \item $\TG$ is the \emph{target} of the utterance and we impose that $\TG < \ID$,
    \item $\CO \in \mL_c$ (the \emph{content}) is one of the following forms: Fix $\phi \in \mL$ and $\Delta \subseteq \mL$.
    
    \begin{itemize}
         \item $\cla(\phi)$: The agent asserts that $\phi$ is the case,
        
         \item $\off(\Delta, \phi)$: The agent advances \emph{grounds} $\Delta$ for $\phi$ uttered by the previously advanced utterances such that $\phi \in \cn(\Delta)$,
    
        \item $\cont(\Delta,\ \phi)$: The agent advances the formulas $\Delta$ that are contrary to $\phi$ uttered by the previously advanced utterance,
        \item $\cond(\phi)$: The agent gives up debating and admits that $\phi$ is the case,

         \item $\fa(\phi)$: The agent asserts that $\phi$ is a fact in $\mK$.

         \item $\kappa$: The agent does not have or wants to contribute information at that point in the dialogue.

    \end{itemize}  
\end{itemize}
We denote by $\mU$ the set of all utterances. 
\end{definition}

To determine which utterances agents can make to construct a dialogue, we define a notion of \emph{legal move}, similarly to communication protocols. For any two utterances $u_i,\ u_j \in \mU$, $u_i \neq u_j$, we say that:
\begin{itemize}
    \item $u_i$ is the \emph{target utterance} of $u_j$ iff the target of $u_j$ is the identifier of $u_i$, i.e., $u_i = (\_, \_, \CO_i, \ID)$ and $u_j = (\_, \ID, \CO_j, \_)$;

    \item $u_j$ is the \emph{legal move} after $u_i$ iff $u_i$ is the target utterance of $u_j$ and one of the following cases in Table~\ref{tab:legal-moves} holds.
    \end{itemize}

    \begin{table}\vspace{-6mm}
    \centering
        \caption{Locutions and responses}
   \label{tab:legal-moves}
    \begin{tabular}{|l|l|}
    \hline
    Locution $u_i$ &  Available responses $u_j$ \\
    \hline
    $\CO_i = \cla(\phi)$ & (1) $\CO_j = \off(\_ , \phi)$ if $\phi \in \cn(\{ \_ \})$, \\
                         & (2) $\CO_j =  \fa(\phi)$ if $\phi \in \mK$, \\
                         & (3) $\CO_j =  \cont(\_,\ \phi)$ where $\{\_, \phi \}$ is inconsistent; \\
    \hline
    $\CO_i = \fa(\phi)$ & $\CO_j = \cont(\_ , \phi)$ where $\{ \_, \phi \}$ is inconsistent; \\
    \hline
    $\CO_i = \off(\Delta, \phi)$ & (1) $\CO_j =  \cont(\_,\ \phi)$ where $\{\_, \phi \}$ is inconsistent, \\
     with $\phi \in \cn(\Delta)$ & (2) $\CO_j =  \cont(\_,\ \Delta)$ where $\{\_ \} \cup \Delta$ is inconsistent, \\
                                                         & (3) $\CO_j =  \off(\_, \beta_i)$ with $\beta_i \in \Delta$ and $\beta_j \in \cn(\{\_ \})$ \\
    \hline
    $\CO_i = \cont(\beta, \_)$ & (1) $\CO_j =  \cont(\_, \beta)$ where $\{ \_, \beta \}$ is inconsistent \\
                              & (2) $\CO_j =  \off(\_, \beta)$ with $\beta \in \cn(\{\_\})$. \\
    \hline
    \end{tabular}
\end{table}
     
An utterance is a legal move after another if any of the following cases happens: (1) it with content $\off$ contributes to expanding an argument; (2) it with content $\fa$ identifies a fact in support of an argument; (3) it with content $\cont$ starts the construction of a counter-argument. An utterance can be from the same agent or not.

\subsection{Dialogue Trees, Dialogues and Focused Sub-dialogues}
\label{sec:con-DT}

In essence, a dialogue is a sequence of utterances $u_1, \ldots, u_n$, each of which transforms the dialogue from one state to another.
To keep track of information disclosed in dialogues for P-SAFs, we define \emph{dialogue trees} constructed as the dialogue progresses.  These are subsequently used to determine \emph{successful dialogues} w.r.t argumentation semantics. 

A dialogue tree represents a dispute progress between a proponent and an opponent who take turns exchanging arguments in the form of formulas of a KB.
The proponent starts the dispute with their arguments and must defend against all of the opponent's attacks to win.
Informally, in a dialogue tree, the formula of each node represents an argument's conclusion or elements of the argument's support. 
A node is annotated \emph{unmarked} if its formula is only mentioned in the claim, but without any further examination, \emph{marked-non-fact} if its formula is the logical consequence of previous uttered formulas, and \emph{marked-fact} if its formula has been explicitly uttered as a fact in $\mK$.
A node is labelled $\po$ $(\op)$ if it is (directly or indirectly) for (against, respectively) the claim of the dialogue. The $\ID$ is used to identify the node’s corresponding utterance in the dialogue.
The nodes are connected in two cases: (1) they belong to the same argument, and (2) they form collective attacks between arguments. 
 We formally define dialogue trees and dialogues.

\begin{definition}
\label{def:dia-tree-DLAF}
Given a sequence of utterances $\delta = u_1, \ldots, u_n$, the \textbf{dialogue tree} $\mT (\delta)$ drawn from $\delta$ is a tree whose \emph{nodes} are tuples $(\tS,\ [\tT,\ \tL,\ \ID])$, where:
    \begin{itemize}
        \item $\tS$ is a formula in $\mL$,
        \item $\tT$ is either $\um$ (unmarked), $\nf$ (marked-non-fact), $\f$ (marked-fact),
        \item $\tL$ is either $\po$ or $\op$,
        \item $\ID$ is the identifier of the utterance $u_i$;
    \end{itemize}

and $\mT(\delta)$ is $\mT^{n}$ in the sequence $\mT^{1}, \ldots, \mT^{n}$ constructed inductively from $\delta$, as follows:
 \begin{enumerate}
     \item $\mT^{1}$ contains a single node: $(\phi, [\um ,\ \po,\ \id_1])$ where $\id_1$ is the identifier of the utterance $u_1 = (\_, \_, \cla(\phi), \id_1)$;

     \item  Let $u_{i+1} = (\_,\ \tg,\ \CO ,\ \id)$ be the utterance in $\delta$; $\mT^i$ be the $i$-th tree with the utterance $(\_,\ \_,\ \CO_{\tg},\ \tg)$ as the target utterance of $u_{i+1}$.
     Then $\mT^{i+1}$ is obtained from $\mT^i$ by $u_{i+1}$, if one of the following conditions holds: $(\tL, \tL_{\tg} \in \{\po, \op\}, \tL \neq \tL_{\tg})$:
    
     \begin{enumerate} [a)]
          \item If $\CO = \off(\Delta,\ \alpha)$ with $\Delta = \{\beta_1, \ldots, \beta_m \}$ and $\alpha \in \cn(\Delta)$,  then $\mT^{i+1}$ is obtained: 
       
        \begin{itemize}
            \item For all $\beta_j \in \Delta$, new nodes $(\beta_j, [\tT,\ \tL,\ \id])$ are added to the node $(\alpha, [\_,\ \tL ,\ \tg])$ of $\mT^i$. Here $\tT = \f$ if $\beta_j \in \mK$, otherwise $\tT = \nf$;

            \item  The node $(\alpha, [\_,\ \tL ,\ \tg])$ is replaced by $(\alpha, [\nf,\ \tL ,\ \tg])$;
        \end{itemize}

         \item If $\CO = \fa(\alpha)$ then $\mT^{i+1}$  is $\mT^i$ with the node $(\alpha,\ [\_,\ \tL,\ \tg])$ replaced by $(\alpha,\ [\f,\ \tL,\ \id])$;

         \item        
         If $\CO = \cont(\Delta, \eta)$ where $\Delta = \{\beta_1, \ldots, \beta_m \}$ and $\Delta \cup \{\eta \}$ is inconsistent, then $\mT^{i+1}$ is obtained by adding
         new nodes $(\beta_j, [\tT ,\ \tL ,\ \id])$, $(\tT = \f$ if $\beta_j \in \mK$, otherwise $\tT = \nf )$, as children of the node $(\eta, [\tT_{\tg} ,\ \tL_{\tg},\ \tg])$ of $\mT^i$, where $\tT_{\tg} \in \{ \f,\ \nf \}$.
         
     \end{enumerate}
 \end{enumerate}

 For such dialogue tree $\mT(\delta)$, the nodes labelled by $\po$ (resp., $\op$) are called the \emph{proponent nodes} (resp., \emph{opponent nodes}).
 We call the sequence $u_1, \ldots, u_n$ a \textbf{dialogue} $D(\phi)$ for $\phi$ where $\phi$ is the formula of the root in $\mT(\delta)$.
 \end{definition}

 %

 This dialogue tree can be seen as a concrete representation of an \emph{abstract dialogue tree} defined in~\cite{loanho_2024}. 
 Here, the nodes represent formulas and the edges display either the monotonic inference steps used to construct arguments or the attack relations between arguments. A group of nodes in a dialogue tree with the same label $\po$ (or $\op$) corresponds to the proponent (or opponent) argument in the abstract dialogue tree.


\begin{definition} [Focused sub-dialogues]
\label{def:focused-sub-dia}
$\delta^{\prime}$ is called a \emph{focused sub-dialogue} of a dialogue $\delta$  iff it is a dialogue for $\phi$ and, for all utterances $u \in \delta^{\prime}$, $u \in \delta$. We say that $\delta$ is the \emph{full-dialogue} of $\delta^{\prime}$ and $\mT(\delta^{\prime})$ drawn from $\delta^{\prime}$  is the sub-tree of $\mT(\delta)$.
 
\end{definition}

If there are no utterances for both proponents and opponents in a dialogue tree from a dialogue $\delta$, then $\delta$ is called \emph {terminated}.
Note that a dialogue can be "incomplete", which means that it ends before the utterances related to determining success are claimed. To prevent this from happening we assume that dialogues are \emph{complete}, i.e. that there are no "unsaid" utterances (with the content $\fa$, $\off$ or $\cont$) in such dialogue that would bring important arguments to determine success. This assumption will ease the proof of soundness result later. 

 \begin{example} [Continue Example~\ref{ex:KB-arg}]
\label{ex:tab-dia}
When users received the answer "$(\vi)$ \emph{is possible researcher}", they would like to know "\emph{Why is this the case?}". The system will explain to the users through the natural language dispute agreement that the agent $a_1$ is persuading $a_2$ to agree that $\vi$ is a researcher. This dispute agreement is formally modelled by an explanatory dialogue $D(\rese(\vi)) = \delta$ as in Figure~\ref{tab:dia}.

\begin{figure} \vspace{-8mm}
\centering
    \includegraphics [scale = 0.85]{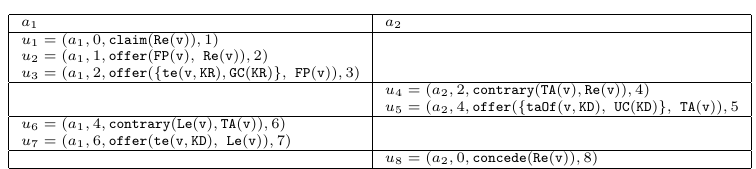}\vspace{-3mm}
    \caption{\scriptsize Given $\mL_t$ is $\mK_1$, a dialogue $D(\rese(\vi))$ $= u_1, \ldots, u_9 $~for $q_1 = \rese(\vi)$}
        \label{tab:dia}
\end{figure}

Figure~\ref{fig:construct-tree} illustrates how to fully construct a dialogue tree $\mT(\delta)$  from $D(\rese(\vi)) = \delta$. 
To avoid confusing users, after the construction processing, we display the final dialogue tree $\mT(\delta)$ with necessary labels, such as formulas, $\po$ and $\op$, in  Figure~\ref{fig:tree-user}.
The line indicates that children conflict with their parents. The dotted line indicates that children are implied from their parents by inference rules.
From this tree, the system provides a dialogical explanation in natural language as shown in Example~\ref{ex:motivation-ex}.
\end{example}

\begin{figure}  \vspace{-8mm}
\centering   
\includegraphics[scale = 0.6]{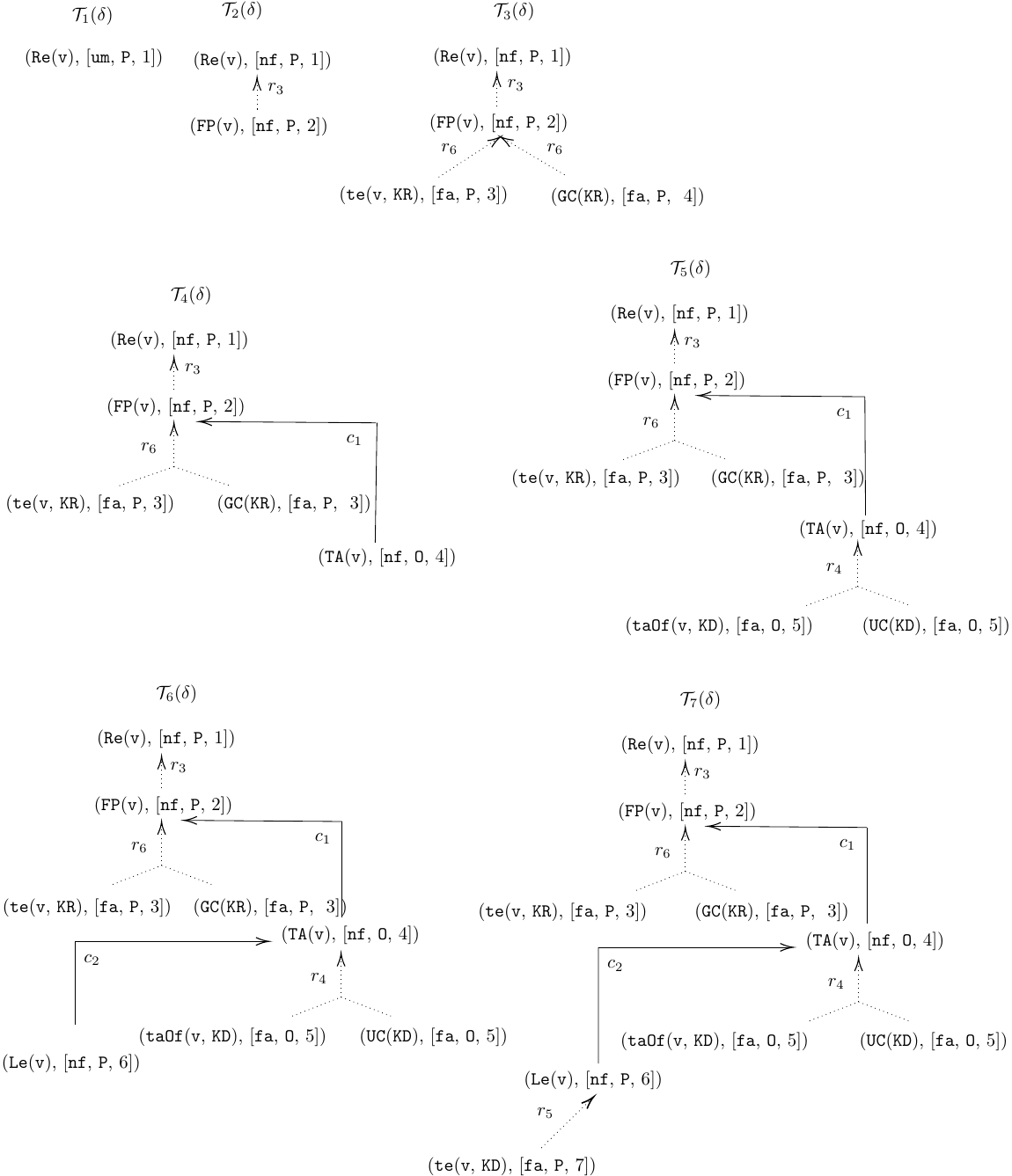}
\caption{Construction of the dialogue tree $\mT(\delta) = \mT_{7}(\delta)$ drawn from $D(\rese(\vi))$.}
\label{fig:construct-tree}
\end{figure}

\begin{figure}  \vspace{-8mm}
\centering   
\includegraphics[scale = 0.55]{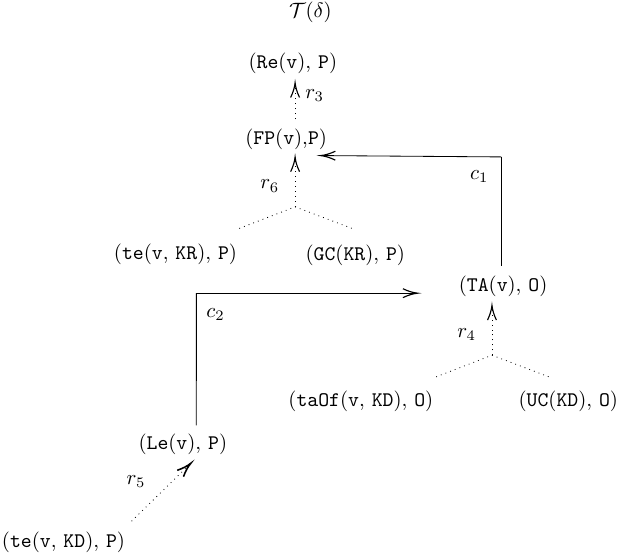}
\caption{A final version of the dialogue tree $\mT(\delta)$ is displayed for users}
\label{fig:tree-user}
\end{figure}

\subsection{Focused Dialogue Trees}

To determine and explain the arguments of acceptability (wrt argumentation semantics) by using dialogues/ dialogue trees, we present a notion of \emph{focused dialogue trees} that will be needed for the following sections. 
This concept is useful because it allows us to show a \emph{correspondence principle} between dialogue trees and \emph{abstract dialogue trees} defined in~\cite{loanho_2024}~\footnote{
We reproduce the notion of abstract dialogue trees and introduce the correspondence principle in Appendix~\ref{app:pre}.
Here we briefly describe the concept of abstract dialogue trees: an abstract dialogue tree is a tree where nodes are labeled with arguments, and edges represent attacks between arguments. }.
By the correspondence principle, we can utilize the results from~\cite{loanho_2024} to obtain the important results in Section~\ref{sec:soundness} and~\ref{sec:completeness}.

Observe that a dialogue $\delta$ can be seen as a collection of several (independent) focused sub-dialogues $\delta_1, \ldots, \delta_n$. The dialogue tree $\mT(\delta_i)$ drawn from the focused sub-dialogue $\delta_i$ is a subtree of $\mT(\delta)$ and corresponds to the abstract dialogue tree (defined in~\cite{loanho_2024}) (for an argument for $\phi$). Each such subtree of $\mT(\delta)$ has the following properties: (1) $\phi$ is supported by a single proponent argument; (2) An opponent argument is attacked by either a single proponent argument or a set of collective proponent arguments; (3) A proponent argument can be attacked by either multiple single opponent arguments or sets of collective opponent arguments. We call a tree with these properties the \emph{focused dialogue tree}.

\begin{definition} [Focused dialogue trees]
\label{def:t-focused}
A dialogue tree $\mT(\delta)$ is \emph{focused} iff
\begin{enumerate}
    \item all the immediate children of the root node have the same identifier (that is, are part of a single utterance);
    
    \item all the children labelled~$\po$ of each potential argument labelled~$\op$
        have the same identifier (that is, are part of a single utterance)
\end{enumerate}
\end{definition}

In the above definition, we call child of a potential argument a node that
is child of any of the nodes of the potential argument.

\begin{remark}
Focused dialogue trees and their relation to abstract dialogue trees are crucial for proving the important results in Section~\ref{sec:soundness} and~\ref{sec:completeness}. We refer to Appendix~\ref{app:proof-soundness} for details.    
\end{remark}

\begin{example} Consider a query $q_3 = A(a) $ to a KB $\mK_3 = (\mR_3, \mC_3, \mF_3)$ where 
\begin{align*}
    \mR_3 = & \{r_1: C(x) \land B(x) \rightarrow A(x),\ r_2: D(x) \rightarrow A(x) \} \\
    \mC_3 = & \{ D(x) \land C(x) \rightarrow \bot ,\ E(x) \land C(x) \rightarrow \bot \} \\
    \mF_3 = & \{B(a) , C(a), D(a), E(a) \}
\end{align*}
  Figure~\ref{fig:non-foc-tree} (Left) shows a non-focused dialogue tree drawn for a dialogue $D(A(a)) = \delta$.  Figure~\ref{fig:non-foc-tree}(Right) shows a focused dialogue tree $\mT(\delta_1)$ drawn for a sub-dialogue $\delta_1$ of $\delta$. This tree is the sub-tree of $\mT(\delta)$.
\end{example}

\begin{figure}
\centering
\begin{tikzpicture}
    \node (dt) at (0,0) {\includegraphics[scale=0.5]{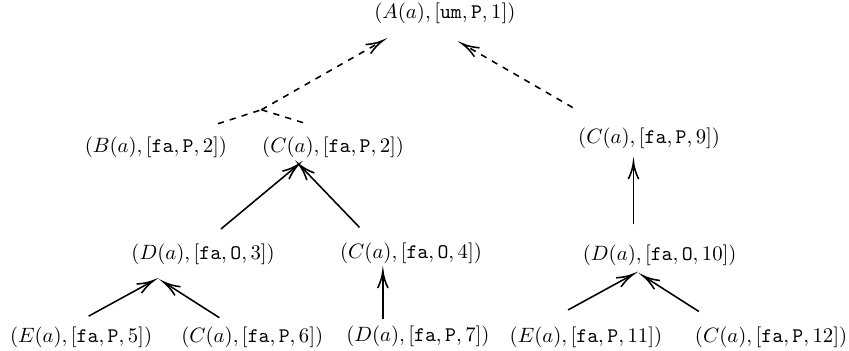}};
    \node (d1) at (6, 0) {\includegraphics[scale=0.55]{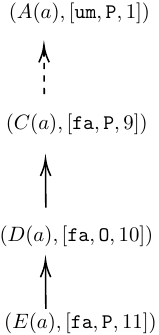}};
\end{tikzpicture}
\caption{
Left: A non-focused dialogue tree.
Right: A focused dialogue tree $\mT(\delta_1)$.
}
\label{fig:non-foc-tree}
\end{figure}

\section{Results of the Paper}
In this section, we study how to use a novel explanatory dialogue model to determine and explain the acceptance of a formula $\phi$ wrt argumentation semantics.

Intuitively, a \textit{successful dialogue} for formula $\phi$ wrt argumentation semantics is a \textit{dialectical proof procedure} for $\phi$. To argue for the usefulness of the dialogue model, we will study \emph{winning conditions} ("conditions" for short) for a successful dialogue to be \textit{sound} and \textit{complete} wrt argumentation semantics. To do so, we use dialogue trees. When the agent decides what to utter or whether a terminated dialogue is \emph{successful}, it needs to consider the current dialogue tree and ensure that its new utterances will keep the tree fulfilling desired \textit{properties}. 
Thus, the dialogue tree drawn from a dialogue can be seen as \emph{commitment store}~\cite{prakken_2006} holding information disclosed and used in the dialogue. Successful dialogues, in this sense, can be regarded as explanations for the acceptance of a formula.

Before continuing, we present preliminary notions/results to prove the soundness and completeness results.

\subsection{Notions for Soundness and Completeness Results}
Let us introduce notions that will be useful in the next sections. 
These notions include: \textbf{potential argument} obtained from a dialogue tree, \textbf{collective attacks} against a potential argument in a dialogue tree, and \textbf{P-SAF} drawn from a dialogue tree. 

A \emph{potential argument} is an argument obtained from a dialogue tree.


\begin{definition} \label{def:arg-t} A \emph{potential argument} obtained from a dialogue tree $\mT(\delta)$ is a \emph{sub-tree} $\mT^{s}$ of  $\mT(\delta)$ such that:
\begin{itemize}
    \item all nodes in $\mT^{s}$ have the same label (either $\po$ or $\op$);
    \item if there is an utterance $(\_ , \_ , \off(\Delta, \alpha), \id)$
        and a node $(\beta_i, [\_ , \tL , \id])$ in $\mT^{s}$ with $\beta_i \in \Delta$,
        then all the nodes
    $(\beta_1, [\_ , \tL , \id]), \ldots, (\beta_m, [\_ , \tL , \id])$ are in $\mT^{s}$
    \item for every node $(\alpha, [\nf, \tL, \_])$ in $\mT^{s}$,
        all its immediate children in $\mT^{s}$ have the same identifier (they belong to a single utterance).
\end{itemize}  
The formula $\phi$ in the root of $\mT^{s}$ is the \emph{conclusion}. The set of the formulas $H$ held by the descended nodes in $\mT^{s}$, i.e., $H = \{\beta \mid (\beta, [\f,\ \_,\ \_]) \text{ is a node in } \mT^{s}\}$, is the \emph{support} of $\mT^{s}$. 
A potential argument obtained from a dialogue tree is a \emph{proponent (opponent) argument} if its nodes are labelled $\po$ ($\op$, respectively). 
\end{definition}

To shorten notation, we use the term "an argument for $\phi$" instead of the term "an argument with the conclusion $\phi$".

\begin{example} [Continue Example~\ref{ex:tab-dia}]
Figure~\ref{fig:comple-tree} shows two potential arguments obtained from $\mT(\delta)$.
\end{example}

Potential arguments correspond to the conventional P-SAF arguments.

\begin{lemma}
\label{lem:potential-arg}
A potential argument $\mT^{s}$ corresponds to an argument for $\phi$ supported by $H$ as in conventional P-SAF (in Definition~\ref{def:ab-arg}).
\end{lemma}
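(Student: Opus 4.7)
The plan is to exhibit the sub-tree $\mT^{s}$ itself, after forgetting the auxiliary labels $[\tT,\tL,\id]$ and retaining only the formula $\tS$ at each node, as a witness for the tree-derivation condition in Definition~\ref{def:ab-arg}. In other words, I would argue that the shape of $\mT^{s}$ already is a derivation tree for $\phi$ from $H$ in the sense of Definition~\ref{def:ab-arg}; no further combinatorial work is required beyond checking the three bullets of that definition against the construction rules for dialogue trees (Definition~\ref{def:dia-tree-DLAF}) and the legal-move table (Table~\ref{tab:legal-moves}).

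The verification proceeds by three steps, matching the three bullets of Definition~\ref{def:ab-arg}. First, the root of $\mT^{s}$ holds $\phi$ by the definition of potential argument itself (Definition~\ref{def:arg-t}), so the root condition is immediate. Second, I have to identify the leaves of $\mT^{s}$ with $H$. Inspection of Definition~\ref{def:dia-tree-DLAF} shows that the only construction step that adds children to a node is an $\off(\Delta,\alpha)$ utterance, which simultaneously relabels the unfolded node as $\nf$; conversely, a $\fa(\alpha)$ utterance only relabels the node as $\f$ without adding children. Together with the closure clause of Definition~\ref{def:arg-t} (if one element of $\Delta$ appears as a child in $\mT^{s}$ then all of $\Delta$ does), this implies that every $\nf$-node of $\mT^{s}$ is internal and every $\f$-node of $\mT^{s}$ is a leaf. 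Hence the set of leaf-formulas coincides with $H=\{\beta\mid(\beta,[\f,\_,\_])\in\mT^{s}\}$. Since $\fa(\beta)$ is a legal move only when $\beta\in\mK$, we also get $H\subseteq\mK$. Third, for the inner-node condition, take any inner node $(\beta_{0},[\nf,\tL,\_])$ of $\mT^{s}$ with immediate children $\beta_{1},\dots,\beta_{n}$. By Definition~\ref{def:arg-t} all these children share a single identifier, so they were introduced by a single utterance of the form $\off(\Delta,\beta_{0})$; the closure clause forces $\{\beta_{1},\dots,\beta_{n}\}=\Delta$, and the legality of an $\off$-move in Table~\ref{tab:legal-moves} demands $\beta_{0}\in\cn(\Delta)=\cn(\{\beta_{1},\dots,\beta_{n}\})$, which is exactly the condition required by Definition~\ref{def:ab-arg}.

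The main obstacle I anticipate is the potential presence of $\um$-nodes surviving inside $\mT^{s}$: a node that is still unmarked is neither a fact in $\mK$ nor the endpoint of an inference step, so it would break both the leaf clause and the inner-node clause. I would dispatch this using the completeness assumption on dialogues stated just before Example~\ref{ex:tab-dia}, namely that dialogues contain no ``unsaid'' $\fa$, $\off$ or $\cont$ utterances, together with the observation from Definition~\ref{def:dia-tree-DLAF} that any $\um$-node can be legally reacted to by a further move; completeness therefore forces every node reachable in $\mT^{s}$ to be eventually relabelled as $\f$ or $\nf$. A minor additional check is that the labels $\po$/$\op$ are homogeneous across $\mT^{s}$ (again by Definition~\ref{def:arg-t}), so the construction does not mix proponent- and opponent-contributed formulas inside a single potential argument; consequently the legal-move analysis above applies uniformly whether $\mT^{s}$ is a proponent or an opponent argument.
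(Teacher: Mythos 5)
Your proposal is correct and takes essentially the same route as the paper, which simply observes that dropping the tag $\tT$ and the identifier $\ID$ from each node of $\mT^{s}$ yields a tree-derivation in the sense of Definition~\ref{def:ab-arg} and declares the lemma trivially true. You carry out the verification of the three bullets (root, leaves-as-support via the $\off$/$\fa$ construction rules, and the $\cn$-condition at inner nodes via the legal-move table) in considerably more detail than the paper does, but the underlying idea is identical.
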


\begin{proof} This lemma is trivially true as a node in a potential argument can be mapped to a node in a conventional P-SAF argument (in Definition~\ref{def:ab-arg}) by dropping the tag $\tT$ and the identifier $\ID$.
\end{proof}

We introduce \emph{collective attacks} against a potential argument, or a sub-tree, in a dialogue tree. This states that a potential argument is \emph{attacked} when there exist nodes within the tree that are children of the argument. Formally:

\begin{definition}
Let $\mT(\delta)$ be a dialogue tree and $\mT^{s}$ be a potential argument obtained from $\mT(\delta)$. $\mT^{s}$ is \emph{attacked} iff there is a node $N = (\tL, [\tT, \_, \_])$ in $\mT^{s}$, with $\tL \in \{\po, \op\}$ and $\tT \in \{\f, \nf\}$, such that $N$ has children $M_1, \ldots, M_k$ labelled by $\tL^{\prime} \in \{\po, \op\}\setminus\{\tL\}$ in $\mT(\delta)$ and the children have the same identifier.
    
    We say that the sub-trees rooted at $M_j$ ($1 \leq j \leq k$) \emph{attacks} $\mT^{s}$.

\end{definition}

\begin{definition} (A) \emph{P-SAF drawn from} $\mT(\delta)$ is $ \mAF_ \delta = (\Arg_{\delta}, \Att_{\delta})$, where 
\begin{itemize}
    \item $\Arg_{\delta}$ is the set of potential arguments obtained from $\mT(\delta)$;
    \item $\Att_{\delta}$ contains the attacks between the potential arguments.
\end{itemize}
\end{definition}
Since $\mT(\delta)$ is drawn from $\delta$, we can say $\mAF_ \delta$ drawn from $\delta$ instead.

 As in~\cite{DUNG2006114}, two useful concepts that are used for our soundness result in the next sections are the \emph{defence set} and the \emph{culprits} of a dialogue tree. 
 \begin{definition}
 Given a dialogue tree $\mT(\delta)$, 
 \begin{itemize}
 \item The \emph{defence set} $\mDE(\mT(\delta))$ is the set of  all facts $\alpha$ in proponent nodes of the form $N = (\alpha,[ \f, \po, \_])$ such that $N$ is in a potential argument;

\item The \emph{culprits} $\mCU(\mT(\delta))$ is the set of facts $\beta$ in opponent nodes $N = (\beta, [\f, \op, \_])$ such that $N$ has the child node $N^{\prime} = (\_,[ \_, \po, \_])$ and $N$ and $N^{\prime}$ are in potential arguments.
\end{itemize}
\end{definition}

\begin{example}
Figure~\ref{fig:comple-tree} (Left) gives the focused dialogue tree drawn from the dialogue $D(\rese(\vi))$ in Example~\ref{ex:tab-dia}. The defence set is  $\{\te(\vi, \kr), \gc(\kr), \te(\vi, \kd)\}$; the culprits are $\{\teAs(\vi, \kd), \uc(\kd)\}$.
\end{example}

\begin{figure}
\centering
\begin{tikzpicture}
    \node (dt) at (0,0) {\includegraphics[scale=0.6]{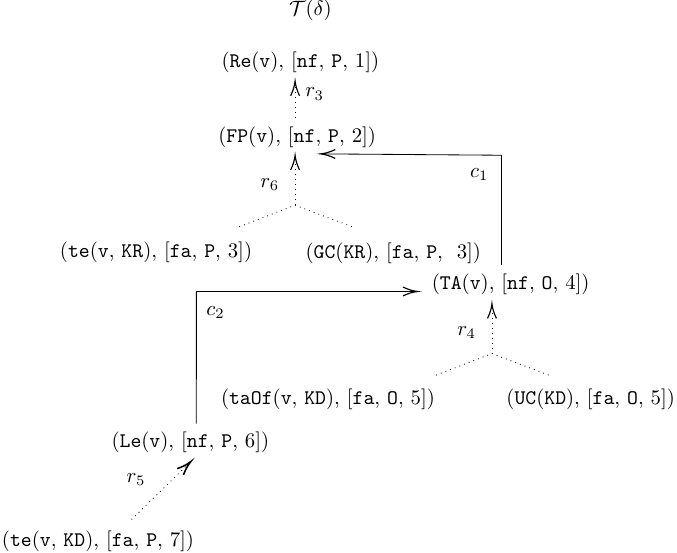}};
    \node (d1) at (7,1.5) {\includegraphics[scale=0.65]{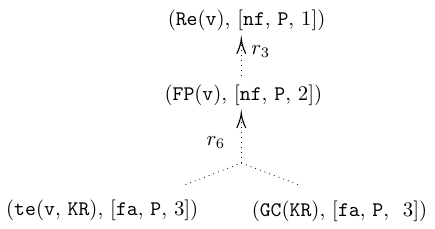}};
    \node (d2) at (7,-2) {\includegraphics[scale=0.65]{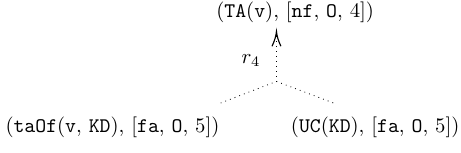}};
\end{tikzpicture}
\caption{
Left:
A focused dialogue tree $\mT(\delta)$ drawn from $D(\rese(\vi))$ in Table~\ref{tab:dia}.
Right: Some potential argument obtained from $\mT(\delta)$.
}
\label{fig:comple-tree}
\end{figure}

\subsection{Soundness Results}
\label{sec:soundness}

\subsubsection{Computing credulous acceptance}
\label{sec:credulously-success}

We present winning conditions for a \textit{credulously successful dialogue} to prove whether a formula is credulously accepted under admissible/ preferred/ stable semantics. 

Let us sketch the idea of a dialectical proof procedure for computing the credulous acceptance as follows:
Assume that a (dispute) dialogue between an agent $a_1$ and $a_2$ in which $a_1$ persuades $a_2$ about its belief "$\phi$ is accepted". Two agents take alternating turns in exchanging their arguments in the form of formulas. When the (dispute) dialogue progresses, we are increasingly building, starting from the root $\phi$, a dialogue tree. Each node of such tree, labelled with either $\po$ or $\op$, corresponds to an utterance played by the agent. The credulous acceptance of $\phi$ is proven if $\po$ can win the game by ending the dialogue in its favour according to a “\textit{last-word}” principle. 

To facilitate our idea, we introduce the properties of a dialogue tree:\textit{ patient, last-word, defensive and non-redundant.}


Firstly, we restrict dialogue trees to be \emph{patient}. This means that agents wait until a potential argument has been fully constructed before beginning to attack it. Formally: A dialogue tree $\mT(\delta)$ is \emph{patient} iff for all nodes $N = (\_, [\f,\_,\_])$ in $\mT(\delta)$, $N$ is in (the support of) a potential argument obtained from $\mT(\delta)$.
Through this paper, the term "dialogue trees" refers to \emph{patient dialogue trees}.

We now present the "last-word" principle to specify a winning condition for the proponent. In a dialogue tree, $\po$ wins if either $\po$ finishes the dialogue tree with the un-attacked facts (Item 1), or any attacks used by $\op$ have been attacked with valid counter attacks (Item 2). Formally:

\begin{definition} A focused dialogue tree $\mT(\delta)$ is \emph{last-word} iff

    \begin{enumerate}
     \item for all leaf nodes $N$ in $\mT(\delta)$, $N$  is the form of $(\_, [\f, \po, \_])$, and
     

     \item if a node $N$ is of the form $(\_, [\tT, \op, \_])$ with $\tT \in \{\f, \nf\}$, then $N$ is in a potential argument and $N$ is properly attacked.   
    \end{enumerate}
\end{definition}
In the above definition, we say that a node $N$ of a potential argument is attacked, meaning that $N$ has children labelled by $\po$ with the same identifier.

 The definition of "last-word" incorporates the requirement that a set of potential arguments $\mS$ (supported by the defence set) attacks every attack against $\mS$. However, it does not include the requirement that $\mS$ does not attack itself. This requirement is incorporated in the definition of \emph{defensive dialogue trees}. 

\begin{definition} 
\label{def:defensive-tree}
A focused dialogue tree $\mT(\delta)$ is \emph{defensive} iff it is
\begin{itemize}
    \item last-word, and
    \item no formulas $\Delta$ in opponent nodes belong to $\mDE(\mT(\delta))$ such that $\Delta \cup \mDE(\mT(\delta))$ is inconsistent.
\end{itemize}
\end{definition}


In admissible dialogue trees, nodes labelled $\po$ and $\op$ within potential arguments can have common facts when considering potential arguments that attack or defend others.
However, potential arguments with nodes sharing common facts cannot attack proponent potential arguments whose facts are in the defence set.
Let us show this in the following example.

\begin{example}
\label{ex:a-succ}
Consider a query $q_4 = A(a) $ to a KB $\mK_4 = (\mR_4, \mC_4, \mF_4)$ where 
\begin{align*}
    \mR_4 = &\emptyset \\
    \mC_4 = & \{c_1 : A(x) \land \ B(x) \land C(x) \rightarrow \bot \} \\
    \mF_4 = & \{A(a), B(a) , C(a) \}
\end{align*}
Consider the focused dialogue tree $\mT(\delta_i)$ (see Figure~\ref{fig:tree-ex} (Left)) drawn from the focused sub-dialogue $\delta_i$ of a dialogue $D(A(a)) = \delta$. The defence set $\mDE(\mT(\delta_i)) = \{A(a), C(a)\}$; the culprits $\mCU(\mT(\delta_i)) = \{B(a), C(a)\}$.
We have $\mDE(\mT(\delta_i)) \cap \mCU(\mT(\delta_i)) = \{C(a)\}$. It can seen that $\{C(a)\} \cup \mDE(\mT(\delta_i))$ is inconsistent. In other words, there exists a potential argument, say $A$, such that $\{C(a)\}$ is the support of $A$, and $A$ cannot attack any proponent argument supported by $\mDE(\mT(\delta_i))$. Clearly, $\mDE(\mT(\delta_i))$ and $\mCU(\mT(\delta_i))$ have the common formula, but the set of arguments supported by $\mDE(\mT(\delta_i))$ does not attack itself.
\end{example}

\begin{figure}
\centering
\begin{tikzpicture}
    \node (dt) at (0,0) {\includegraphics[scale=0.6]{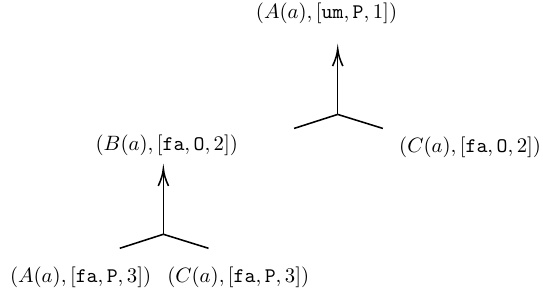}};
    \node (d1) at (5, 0) {\includegraphics[scale=0.55]{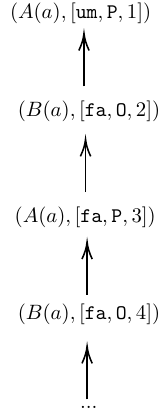}};
\end{tikzpicture}
\caption{
Left: A focused dialogue tree $\mT(\delta_i)$.
Right: An infinite dialogue tree.
}
\label{fig:tree-ex}
\end{figure}


From the above observation, it follows immediately that.

\begin{lemma}
    Let $\mT(\delta)$ be a defensive dialogue tree. The set of proponent arguments (supported by $\mDE(\mT(\delta))$) does not attack itself in the P-SAF drawn from $\delta$.
\end{lemma}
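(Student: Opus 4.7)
The plan is to argue by contradiction. Suppose that the set $\mP$ of proponent arguments (the potential arguments of $\mT(\delta)$ whose supports lie inside $\mDE(\mT(\delta))$) attacks itself in the P-SAF drawn from $\delta$. Then by Definition~\ref{def:ab-att} there exist $\mX \subseteq \mP$ and $A \in \mP$ such that $\mX$ collectively attacks $A$. By Lemma~\ref{lem:potential-arg}, every $X \in \mX \cup \{A\}$ corresponds to a conventional argument with $\Sup(X) \subseteq \mDE(\mT(\delta))$ and $\Con(X) \in \cnb(\Sup(X))$. Reusing the derivation tree witnessing $\Con(X) \in \cnb(\Sup(X))$ over the larger fact base $\mDE(\mT(\delta))$ (Definition~\ref{def:ab-arg}), each $\Con(X)$ also lies in $\cnb(\mDE(\mT(\delta)))$.

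Next I would unpack the two forms of attack. In the undercut case, there is $\Gamma' \subseteq \Sup(A) \subseteq \mDE(\mT(\delta))$ such that $\bigcup_{X \in \mX}\{\Con(X)\} \cup \Gamma'$ is inconsistent; in the rebuttal case, $\bigcup_{X \in \mX}\{\Con(X)\} \cup \{\Con(A)\}$ is inconsistent. In both situations, the offending set is contained in $\cnb(\mDE(\mT(\delta)))$. Applying the idempotence axiom $A_2$, we have $\cnb(\cnb(\mDE(\mT(\delta)))) = \cnb(\mDE(\mT(\delta)))$, so an inconsistent subset of $\cnb(\mDE(\mT(\delta)))$ forces $\cnb(\mDE(\mT(\delta))) = \mL$. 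Hence $\mDE(\mT(\delta))$ is itself inconsistent.

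The contradiction then comes from the defensive clause. The second clause of Definition~\ref{def:defensive-tree} rules out any set $\Delta$ of formulas drawn from opponent nodes such that $\Delta \cup \mDE(\mT(\delta))$ is inconsistent; taking $\Delta = \emptyset$ yields consistency of $\mDE(\mT(\delta))$ outright, contradicting the previous paragraph. Hence no self-attack of $\mP$ can occur.

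The main obstacle I anticipate is pinning down the intended strength of the defensive clause. If the lemma is meant to cover self-attacks in which some opponent-originating formulas also appear incidentally in the attacking set, then I would additionally invoke the last-word clause: every opponent fact used in an attack is itself counter-attacked by a proponent argument, so any purported self-attack of $\mP$ still traces back to an inconsistency lying entirely inside $\cnb(\mDE(\mT(\delta)))$, which the defensive clause again forbids.
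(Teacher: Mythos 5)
The paper gives no explicit proof of this lemma: it is asserted to ``follow immediately'' from the observation in Example~\ref{ex:a-succ}, and the only place the reasoning is spelled out is step~(c) of the proof of Theorem~\ref{thm:adm} in Appendix~\ref{app:proof-soundness}, where the second clause of Definition~\ref{def:defensive-tree} is read as saying that no potential argument supported by opponent-node facts lying in $\mDE(\mT(\delta))$ can attack an argument supported by $\mDE(\mT(\delta))$. Your strategy is different and in principle cleaner: reduce ``no self-attack'' to ``any attack among arguments supported by $\mDE(\mT(\delta))$ would place an inconsistent set inside $\cnb(\mDE(\mT(\delta)))$, so $\mDE(\mT(\delta))$ would be inconsistent.'' The first half of that reduction is fine modulo one caveat: lifting $\Con(X)\in\cnb(\Sup(X))$ to $\Con(X)\in\cnb(\mDE(\mT(\delta)))$, and concluding $\cnb(\mDE(\mT(\delta)))=\mL$ from an inconsistent subset of $\cnb(\mDE(\mT(\delta)))$, both use monotonicity of $\cnb$, which is not among the stated axioms $A_1$--$A_2$; it holds in every instantiation the paper considers, but you should flag it as an additional assumption rather than cite only idempotence.

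The genuine gap is the last step, where you obtain consistency of $\mDE(\mT(\delta))$ by instantiating the defensive clause with $\Delta=\emptyset$. That clause quantifies over sets of formulas that actually occur in opponent nodes and belong to $\mDE(\mT(\delta))$; in the paper's own use of it (Example~\ref{ex:a-succ} and step~(c) of the proof of Theorem~\ref{thm:adm}) $\Delta$ is a nonempty set of uttered opponent facts. Under that reading the clause is vacuous whenever no opponent fact lies in $\mDE(\mT(\delta))$, and your contradiction never arrives: with a ternary constraint $a\wedge b\wedge c\rightarrow\bot$ and $\mDE(\mT(\delta))=\{a,b,c\}$ where none of $a$, $b$, $c$ occurs in an opponent node, the clause holds vacuously while $\{\{a\}\Rightarrow a,\ \{b\}\Rightarrow b\}$ collectively attacks $\{c\}\Rightarrow c$. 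So consistency of the defence set is exactly the content you need, and the $\Delta=\emptyset$ reading is doing all of the work. You must either justify that this degenerate instantiation is the intended meaning of Definition~\ref{def:defensive-tree}, or supply a separate argument for consistency of $\mDE(\mT(\delta))$; your fallback via the last-word clause does not close this, since last-word only guarantees that opponent arguments are counter-attacked, not that the facts used across all counter-attacks are jointly consistent.
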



Consider the following dialogue to see why the "non-redundant" property is necessary.

\begin{example}
\label{ex:infinite-credulous}
Consider a query $q_5 = A(a)$ to a KB $\mK_5 = (\mR_5, \mC_5, \mF_5)$ where
\begin{align*}
    \mR_5 = &\emptyset \\
    \mC_5 = & \{ A(x) \land \ B(x) \rightarrow \bot \} \\
    \mF_5 = & \{A(a), B(a) \}
\end{align*}
Initially, an argument $A_1$ asserts that "$A(a)$ is accepted" where $A(a)$ is at the $\po$ node. $A_1$ is attacked by $A_2$ by using $B(a)$ that is at the $\op$ node. $A_1$ counter-attacks $A_2$ by using $A(a)$, then $A_2$ again attacks $A_1$ by using $B(a)$, ad infinitum (see Figure~\ref{fig:tree-ex} (Right)). Hence $\po$ cannot win.
%
%
Since the grounded extension is empty, $A(a)$ is not groundedly accepted in the P-SAF, thus $\po$ should not win under the grounded semantics. Since $A(a)$ is credulously accepted in the P-SAF, we expect that $\po$ can win in a terminated dialogue under the credulous semantics.
\end{example}


To ensure credulous acceptance, all possible opponent nodes must be accounted for. But if such a parent node is already in the dialogue tree, then deploying it will not help the opponent win the dialogues. To avoid this, we define a dialogue tree to be \emph{non-redundant}.

\begin{definition}
\label{def:non-re}
     A focused dialogue tree $\mT(\delta)$ is \emph{non-redundant} iff for any two nodes $N_1 = (\beta, [\f, \tL, \id_1])$  and $N_2 = (\beta, [\f, \tL, \id_2])$ with $\tL \in \{\po, \op\}$ and $N_1 \neq N_2$, if $N_1$ is in a potential argument $\mT_1^{s}$ and $N_2$ is in a potential argument $\mT_2^{s}$, then $\mT_1^{s} \neq \mT_2^{s}$.
\end{definition}

In Definition~\ref{def:non-re}, when comparing two arguments, we compare their respective proof trees. Here, we only consider the formula and the tag of each node in the tree, disregarding the label and identifier of the node.

The following theorem establishes credulous soundness for admissible semantics.

\begin{restatable}{theorem} {thmcredulous} \label{thm:adm}
 Let $\delta$ be a dialogue for a formula $\phi \in \mL$. If there is a dialogue tree $\mT(\delta_i)$ drawn from a focused sub-dialogue $\delta_i$ of $\delta$ such that it is defensive and non-redundant, then 
  \begin{itemize}
      \item $\delta$ is admissible-successful; 
      \item $\phi$ is credulously accepted under $\adm$ in $\mAF_ \delta$ drawn from $\delta$ (supported by $\mDE(\mT(\delta_i))$.
\end{itemize}
\end{restatable}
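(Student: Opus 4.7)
The plan is to construct an admissible extension of $\mAF_{\delta}$ containing an argument with conclusion $\phi$, using the set $\mS$ of all proponent potential arguments obtained from $\mT(\delta_i)$. By Definition~\ref{def:dia-tree-DLAF} (item 1) the root of $\mT(\delta_i)$ holds $\phi$ and is labelled $\po$; so $\mS$ contains a potential argument whose conclusion is $\phi$, and by Lemma~\ref{lem:potential-arg} this corresponds to a genuine P-SAF argument for $\phi$. Hence $\phi \in \Cons(\mS)$, and it remains to check that $\mS$ is admissible.

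For conflict-freeness, I would argue by contradiction. If some subset of $\mS$ were to attack an argument $A \in \mS$, then by Definition~\ref{def:ab-att} the union of their supports together with (a subset of the support of, or the conclusion of) $A$ is inconsistent. All supports of arguments in $\mS$ consist of fact-leaves labelled $\po$, hence lie in $\mDE(\mT(\delta_i))$, and the conclusion of $A$ lies in $\cnb(\mDE(\mT(\delta_i)))$. This would yield an inconsistency witnessed entirely within $\mDE(\mT(\delta_i))$, contradicting the second clause of Definition~\ref{def:defensive-tree} (together with closure of $\cnb$ on the defence set). The lemma just before Example~\ref{ex:a-succ} already records this consequence, and I would simply invoke it.

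For the defence property, let $\mX \subseteq \Arg_{\delta}$ attack some $A \in \mS$. Because the dialogue is complete, every such attack manifests in $\mT(\delta)$ — and, after passing to the focused sub-dialogue $\delta_i$, in $\mT(\delta_i)$ — as opponent children of nodes of $A$. By the last-word clause of the definition, every such opponent potential argument is itself attacked by proponent children sharing an identifier, i.e.\ by a (collective) proponent counter-attack from $\mS$. Thus $\mS$ defends $A$. Non-redundancy is used here to rule out the pathology of Example~\ref{ex:infinite-credulous}: it guarantees that the sub-tree witnessing the counter-attack is an argument genuinely distinct from the attacked proponent argument and therefore yields a bona fide member of $\mS$ attacking $\mX$, rather than a repetition that reopens the dispute.

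Finally, to keep the proof clean, I would invoke the correspondence principle between focused dialogue trees and the abstract dialogue trees of~\cite{loanho_2024} (Appendix~\ref{app:proof-soundness}): the focused sub-dialogue $\delta_i$ yields a focused tree which maps to an abstract admissible dialogue tree, and the abstract soundness result there transfers via the correspondence to the concrete setting of P-SAFs, giving both admissible-success of $\delta$ and credulous acceptance of $\phi$ under $\adm$ with defending set $\mDE(\mT(\delta_i))$. The main obstacle I expect is verifying that the three structural conditions (focused, defensive, non-redundant) match exactly the hypotheses of the abstract soundness theorem — in particular that collective attacks in the P-SAF are faithfully represented by same-identifier sibling children in the dialogue tree, which is what makes the mapping from $\mS$ to an admissible set well-defined.
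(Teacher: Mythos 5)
Your proposal is correct and follows essentially the same route as the paper: the paper likewise passes to a focused sub-dialogue, applies the correspondence principle to obtain an abstract dialogue tree, and checks exactly the three facts you check (the last-word property gives defence of every attack, the second clause of defensiveness gives conflict-freeness of the proponent arguments, and non-redundancy supplies the remaining admissibility condition) before transferring the conclusion back via the results of~\cite{loanho_2024}. The only slight divergence is your reading of non-redundancy --- the paper uses it to guarantee that no potential argument labels both a proponent and an opponent node, which is what admissibility of the abstract dialogue tree requires, rather than to certify distinctness of the counter-attacking argument --- but since you also defer to the correspondence principle, this does not affect the argument.
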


The proof of this theorem is in Appendix~\ref{app:proof-soundness}.

We can define a notion of \emph{preferred-successful dialogue} with a formula accepted under $\prf$ in the P-SAF framework drawn from the dialogue. Since every admissible set (of arguments) is necessarily contained in a preferred set (see~\cite{Dung95,Nielsen2007}), and every preferred set is admissible by definition, trivially a dialogue is preferred-successful iff it is admissible-successful. The following theorem is analogous to Theorem~\ref{thm:adm} for $\prf$ semantics.

\begin{restatable} {theorem} {thmpreferred} 
\label{thm:prf-stb}
Let $\delta$ be a dialogue for a formula $\phi \in \mL$. If there is a dialogue tree $\mT(\delta_i)$ drawn from a focused sub-dialogue $\delta_i$ of $\delta$ such that it is defensive and non-redundant, then $\delta$ is preferred-successful and $\phi$ is credulously accepted under $\prf$ in $\mAF_ \delta$ drawn from $\delta$ (supported by $\mDE(\mT(\delta_i))$.    
\end{restatable}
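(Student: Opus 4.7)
The plan is to reduce this statement directly to Theorem~\ref{thm:adm}, exploiting the remark the authors themselves make in the paragraph preceding the theorem: every admissible set of arguments is contained in some preferred set, and every preferred set is by definition admissible. So the heavy lifting has already been done in the proof of the admissible case, and what remains is a transfer argument.

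First, I would apply Theorem~\ref{thm:adm} to the given focused sub-dialogue $\delta_i$, obtaining that $\delta$ is admissible-successful and that $\phi$ is credulously accepted under $\adm$ in $\mAF_{\delta}$, witnessed by an admissible extension $\mE$ consisting of the proponent arguments supported by $\mDE(\mT(\delta_i))$. Next, I would invoke the standard abstract-argumentation fact (applicable to SAFs with collective attacks, cf.\ the semantics section and~\cite{Nielsen2007}) that every admissible extension can be extended to an inclusion-maximal admissible extension, i.e.\ a preferred one. Thus there exists $\mE^{\prime} \in \Exts_{\prf}(\mAF_{\delta})$ with $\mE \subseteq \mE^{\prime}$, whence $\phi \in \Cons(\mE) \subseteq \Cons(\mE^{\prime})$. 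This yields credulous acceptance of $\phi$ under $\prf$. By the definition of a preferred-successful dialogue (mirroring the admissible-successful case), $\delta$ itself is preferred-successful.

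The only point that requires a little care is to make sure that the witness described in the theorem, namely ``supported by $\mDE(\mT(\delta_i))$'', is preserved under the passage from $\adm$ to $\prf$. Since the preferred extension $\mE^{\prime}$ extending $\mE$ contains the proponent arguments already supported by $\mDE(\mT(\delta_i))$, these same arguments witness the credulous acceptance of $\phi$ in $\mE^{\prime}$, so the claim as stated holds verbatim. I do not anticipate any genuine obstacle here: the entire content of the proof lies in Theorem~\ref{thm:adm}, and the step from $\adm$ to $\prf$ is a one-line application of Zorn's Lemma (or the analogous existence result for maximal admissible sets) in the P-SAF setting.
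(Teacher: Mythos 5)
Your proposal is correct and follows essentially the same route as the paper: the paper's own proof is a one-line reduction to Theorem~\ref{thm:adm} via the fact that every admissible set is contained in a preferred one (and that preferred-successful is defined to coincide with admissible-successful), which is exactly your argument. Your extra care about preserving the witness $\mDE(\mT(\delta_i))$ is a sound addition the paper leaves implicit.
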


\begin{proof} [Sketch]
The proof of this theory follows the fact that every preferred dialogue tree is an admissible dialogue tree. Thus, the proof of this theorem is analogous to those of Theorem~\ref{thm:adm}.
\end{proof}

\begin{remark}
    We can similarly define a notion of \emph{stable dialogue trees} for a formula accepted under $\stb$ in the P-SAF. Since stable and preferred sets coincide, trivially a dialogue tree is stable iff it is defensive and non-redundant. Thus we can use the result of Theorem~\ref{thm:prf-stb} for stable semantics.
\end{remark}

\subsubsection{Computing grounded acceptance}
We present winning conditions for a \textit{groundedly successful dialogue} to determine grounded acceptance of a given formula. 
The conditions require that whenever $\op$ could advance any evidence, $\po$ still wins.
This requirement is incorporated in dialogue trees being defensive.
Note that credulously successful dialogues for computing credulous acceptance also require dialogue trees to be defensive (see in Theorem~\ref{thm:adm}).
However, the credulously successful dialogues cannot be used for computing the grounded acceptance, as shown by Example~\ref{ex:infinite-credulous}.
In Example~\ref{ex:infinite-credulous}, it would be incorrect to infer from the depicted credulously successful dialogue that $A(a)$ is groundedly accepted as the grounded extension is empty. Note that the dialogue tree for $A(a)$ is infinite.
From this observation, it follows that the credulously successful dialogues are not sound for computing grounded acceptance. Since all dialogue trees of a formula that is credulously accepted but not groundedly accepted can be infinite,
we could detect this situation by checking if constructed dialogue trees are infinite. This motivates us to consider "\textit{finite}" dialogue trees as a winning condition.

The following theorem establishes the soundness of grounded acceptance.
\begin{restatable} {theorem} {thmground}   
\label{thm:ground}
Let $\delta$ be a dialogue for a formula $\phi \in \mL$. If there is a dialogue tree $\mT(\delta_i)$ drawn from a focused sub-dialogue $\delta_i$ of $\delta$ such that it is defensive and finite, then
  \begin{itemize}
    \item $\delta$ is groundedly-successful;
      \item $\phi$ is groundedly accepted under grounded semantics in $\mAF_ \delta$ drawn from $\delta$ (supported by $\mDE(\mT(\delta_i))$.
\end{itemize}
\end{restatable}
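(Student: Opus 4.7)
The plan is to follow the same overall structure as the proof of Theorem~\ref{thm:adm}, but to replace the non-redundancy argument by a well-foundedness argument that upgrades admissible acceptance to grounded acceptance. The guiding intuition is that a finite defensive dialogue tree is the dialogue-level analogue of Dung's inductive characterization of the grounded extension as the least fixed point of the characteristic function $F(\mX) = \{A : \mX \text{ defends } A\}$.

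First, I would invoke the correspondence principle from~\cite{loanho_2024} (discussed in Appendix~\ref{app:pre}) to view $\mT(\delta_i)$ as an abstract dialogue tree whose nodes are the potential arguments of Lemma~\ref{lem:potential-arg}. Next, from defensiveness --- last-word together with the consistency condition of Definition~\ref{def:defensive-tree} --- I would reproduce the argument used in the proof of Theorem~\ref{thm:adm} to conclude that the set $\mS$ of proponent arguments in $\mAF_\delta$ whose supports sit inside $\mDE(\mT(\delta_i))$ is conflict-free and defends itself, so in particular $\mS$ is admissible and contains the proponent argument rooted at the root of $\mT(\delta_i)$, whose conclusion is $\phi$.

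To upgrade admissibility to grounded acceptance, I would show that every proponent argument $A \in \mS$ belongs to $F^n(\emptyset)$ for some finite $n$. The induction is on the number of nested attack-layers underneath $A$ in $\mT(\delta_i)$, i.e., the length of the longest alternating chain $A = A_0, B_1, A_1, B_2, \ldots$ in which each $B_i$ is an opponent potential argument attacking $A_{i-1}$ and each $A_i$ is a proponent potential argument counter-attacking $B_i$. In the base case, proponent arguments with no surviving opponent attackers lie in $F(\emptyset)$ by last-word. For the inductive step, each opponent argument attacking $A$ is, by defensiveness, counter-attacked by proponent arguments with strictly fewer attack-layers underneath them, which by hypothesis already lie in $F^{n-1}(\emptyset)$; hence $A$ is defended by $F^{n-1}(\emptyset)$ and so lies in $F^n(\emptyset) \subseteq \grd$. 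Both conclusions of the theorem then follow: $\delta$ is groundedly-successful via the exhibited $\mT(\delta_i)$, and $\phi$ is groundedly accepted in $\mAF_\delta$ because the argument for $\phi$ enters $\grd$ at some finite iteration.

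The main obstacle I expect is handling collective attacks cleanly: when a set $\{A^{(1)}, \ldots, A^{(k)}\}$ of proponent arguments jointly counter-attacks an opponent argument $B$, the inductive hypothesis must be applied simultaneously to every $A^{(j)}$, and each must be verified to sit strictly lower in the chosen measure. Here finiteness of $\mT(\delta_i)$ is essential, since it guarantees that the alternating chain length is bounded by the tree depth and K\"onig-style termination supplies the required decreasing ordinal measure automatically, sidestepping the subtlety that a single inference edge inside a potential argument does not by itself decrease the attack-layer count.
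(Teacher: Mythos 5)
Your proposal is correct and follows essentially the same route as the paper: the paper likewise reduces to the abstract dialogue tree via the correspondence principle, reuses the admissibility argument from Theorem~\ref{thm:adm} for the defensiveness part, and then proves (as its Lemma~\ref{lem:inf-groundedset}) exactly your inductive claim that finiteness of the tree places every proponent argument in some finite iterate $\mF^{n}(\emptyset)$ of the characteristic function, hence in the grounded extension. Your induction on nested attack-layers is the paper's induction on the height $2h$ of the finite abstract dialogue tree, including the same treatment of collective counter-attacks by applying the hypothesis to all jointly attacking proponent subtrees at once.
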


The proof of this theorem is in Appendix~\ref{app:proof-soundness}.

\subsubsection{Computing sceptical acceptance}

Inspired by~\cite{DUNG2007642}, to determine the sceptically acceptance of an argument for $\phi$, we verify the following:
(1) There exists an admissible set of arguments $S$ that includes the argument for $\phi$;
(2) For each argument $A$ attacking $S$, there exists no admissible set of arguments containing $A$.
These steps can be interpreted through the following winning conditions for a \emph{sceptical successful dialogue} to compute the sceptical acceptance of $\phi$:
\begin{enumerate}
    \item $\po$ wins the game by ending the dialogue,
    \item none of $\op$ wins by the same line of reasoning.
\end{enumerate}
This perspective allows us to introduce a notion of \emph{ideal dialogue trees}.

\begin{definition}
\label{def:tree-ideal}
     A defensive and non-redundant dialogue tree $\mT(\delta)$ is \emph{ideal} iff none of the opponent arguments obtained from $\mT(\delta)$ belongs to an admissible set of potential arguments in $ \mAF_ \delta$ drawn from $\mT(\delta)$.
\end{definition}

The following result sanctions the soundness of sceptical acceptance.

\begin{restatable} {theorem}{thmsceptical}
\label{thm:scep}
Let $\delta$ be a dialogue for a formula $\phi \in \mL$. If there is a dialogue tree $\mT(\delta)$ drawn from $\delta$ such that it is ideal, then
\begin{itemize}
    \item $\delta$ is sceptically-successful;
    \item $\phi$ is sceptically accepted under $\sem$ in $\mAF_ \delta$ drawn from $\delta$ (supported by $\mDE(\mT(\delta))$, where $\sem \in \{\adm, \prf, \stb\}$.
\end{itemize} 
\end{restatable}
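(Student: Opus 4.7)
The plan is to handle this in two stages: first extract credulous acceptance from the defensive-and-non-redundant half of ideality via Theorem~\ref{thm:adm}, and then leverage the extra ideality clause to upgrade credulous acceptance to sceptical acceptance via a maximality argument.

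For the first stage, since $\mT(\delta)$ is ideal, by Definition~\ref{def:tree-ideal} it is in particular defensive and non-redundant, so Theorem~\ref{thm:adm} delivers an admissible set $S \subseteq \Arg_\delta$ of proponent arguments that contains an argument $A_\phi$ for $\phi$ and is supported by $\mDE(\mT(\delta))$. This already gives credulous acceptance under $\adm$, $\prf$ and $\stb$, and it is $S$ that we will show lies inside every extension.

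For the second stage, I would fix an arbitrary $\mE \in \Exts_{\sem}(\mAF_\delta)$ with $\sem \in \{\prf,\stb\}$ and prove $A_\phi \in \mE$. The key observation is that, because $\delta$ is complete (no relevant utterance is missing), every attacker of an argument in $S$ is realised in $\mT(\delta)$ as a potential opponent argument. Ideality then forbids any such opponent argument from belonging to an admissible set; since $\mE$ is itself admissible, $\mE$ contains no attacker of $S$, so $\mE$ does not attack $S$. Symmetrically, any attack of $S$ on $\mE$ would be counter-attacked by the admissible $\mE$, producing an attack from $\mE$ on $S$ and contradicting what we have just shown. Hence $S \cup \mE$ is conflict-free, and every outside attack on $S \cup \mE$ is defended by $S$ or by $\mE$, so $S \cup \mE$ is admissible. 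Inclusion-maximality of preferred (and hence stable) extensions forces $S \subseteq \mE$, yielding $A_\phi \in \mE$ and $\phi \in \Cons(\mE)$. The $\sem = \adm$ case reduces to the preferred case by embedding any admissible extension into a preferred one and observing the argument transports. The sceptically-successful status of $\delta$ then follows directly: by its very definition as the dialogical counterpart of $\phi$ being in every extension, the acceptance result we have just proved gives it.

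The main obstacle is the transition from ``no opponent potential argument of $\mT(\delta)$ lies in an admissible set'' to ``no argument in $\mE$ attacks $S$'': this genuinely requires the completeness assumption on $\delta$ together with the correspondence principle between dialogue trees and the abstract dialogue trees of~\cite{loanho_2024} recalled in Appendix~\ref{app:pre}, to guarantee that every attacker of $S$ in $\mAF_\delta$ is indeed captured as an opponent potential argument in $\mT(\delta)$. Once this is in place, the Fundamental-Lemma-style verification that $S \cup \mE$ is admissible and the maximality step are essentially routine, closely paralleling the sceptical proof procedure of~\cite{DUNG2007642}.
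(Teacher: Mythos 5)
Your proposal is essentially correct in substance but takes a genuinely different route from the paper. The paper's proof never argues directly about extensions of $\mAF_\delta$: it partitions $\mT(\delta)$ into focused subtrees, pushes each through the correspondence principle (Theorem~\ref{thm:def-abs}) to obtain an admissible abstract dialogue tree in the sense of~\cite{loanho_2024}, checks that ideality forces the extra condition that no opponent argument in that abstract tree admits an admissible abstract dialogue tree, and then concludes by citing Definition~\ref{def:analogous-Def9} and Corollary~\ref{cor:analogous-Cor1} (which in turn rest on Theorem~3.4 of~\cite{DUNG2007642}). You instead unpack that black box: you take the admissible set $S$ of proponent arguments delivered by Theorem~\ref{thm:adm}, use ideality to show no admissible $\mE$ can contain a full attacking set against $S$ (every potential argument of $\mAF_\delta$ is $\po$- or $\op$-labelled, defensiveness rules out attacks from within $S$, and ideality excludes every $\op$-argument from every admissible set), and then run the Fundamental-Lemma-plus-maximality argument to force $S\subseteq\mE$ for every preferred (hence stable) extension. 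What your approach buys is self-containment and an explicit view of why the DMT-style condition is sound; what the paper's approach buys is modularity, reuse of the already-proved correspondence machinery, and a cleaner derivation of the first bullet ($\delta$ being sceptically-successful), which is \emph{defined} in terms of abstract dialogue trees (Definition~\ref{def:analogous-Def9}) and which your proposal obtains only by hand-waving ``by its very definition.''

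Two caveats. First, your treatment of $\sem=\adm$ (``embedding any admissible extension into a preferred one'') does not actually work: the empty set is admissible and contains no argument for $\phi$, so sceptical acceptance quantified over \emph{all} admissible extensions is vacuously unattainable; this defect is, however, already present in the paper's own statement and proof, which silently switches to ``all preferred extensions'' when invoking Corollary~\ref{cor:analogous-Cor1}. Second, when handling collective attacks you should be explicit that an attacking set $\mX$ against $S$ that is not contained in $S$ must contain at least one $\op$-labelled potential argument, and that it suffices for that single member to be excluded from $\mE$ by ideality; as written, ``$\mE$ contains no attacker of $S$'' glosses over the set-to-argument nature of $\Att_\delta$.
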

The proof of this theorem is in Appendix~\ref{app:proof-soundness}.

\subsection{Completeness Results}
\label{sec:completeness}
We now present completeness. 
In this work, dialogues viewed as dialectical proof procedures are sound but not always complete in general.
The reason is that the dialectical proof procedures might enter a non-terminating loop during the process of argument constructions, which leads to the incompleteness wrt the admissibility semantics.
To illustrate this, we refer to Example 1 using logic programming in~\cite{ThangDP22} for an explanation.
We also provide another example using \datalogPM.

\begin{example} Consider a query $q_6 = P(a)$ to a \datalogPM KB $\mK_6 = (\mR_6 , \mC_6 , \mF_6)$ where
\begin{align*}
    \mR_6 = & \{r_1: P(x) \rightarrow Q(x), r_2: Q(x) \rightarrow P(x) \} \\
    \mC_6 = & \{ P(x) \land R(x) \rightarrow \bot \} \\
    \mF_6 = & \{P(a) , R(a)\}
\end{align*}
The semantics of the corresponding P-SAF $\mAF_4$ are determined by the arguments illustrated in Figure~\ref{fig:infinite-loop}. The result should state that "$P(a)$ is a possible answer" as the argument $B_1$ for $P(a)$ is credulously accepted under the admissible sets $\{B_1\}$  and $\{B_2\}$ of $\mAF_4$. 
 But the dialectical proof procedures fail to deliver the admissible set $\{ B_1 \}$ wrt $\mAF_4$
as they could not overcome the non-termination of the process to construct an argument $B_1$ for $P(a)$ due to the “infinite loop”. 
    
\end{example}
\begin{figure}
    \centering
    \includegraphics[width=0.25\linewidth]{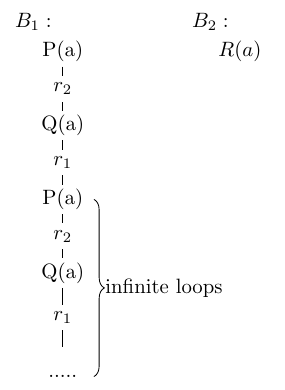}
    \caption{Arguments of $\mAF_4$}
    \label{fig:infinite-loop}
\end{figure}



%
%

Intuitively, since the dialogues as dialectical proof procedures (implicitly) incorporate the computation of arguments top-down, the process of argument construction should be finite (also known as finite tree-derivations in the sense of Definition~\ref{def:ab-arg}) to achieve the completeness results. Thus, we restrict the attention to decidable logic with cycle-restricted conditions that its corresponding P-SAF framework produces arguments to be computed finitely in a top-down fashion. For example, given a \datalogPM KB $\mK = (\mR, \mC, \mF)$, the \emph{dependency graph} of the KB  as defined in~\cite{HechamBC17}  consists of the vertices representing the atoms and the edges from an atom $u$ to $v$ iff $v$ is obtained from $u$ (possibly with other atoms) by the application of a rule in $\mR$. The intuition behind the use of the dependency graph is that no infinite tree-derivation exists if the dependency graph of KB is acyclic. By restricting such acyclic dependency graph condition, the process of argument construction in the corresponding  P-SAF of the KB $\mK$ will be finite, which leads to the completeness of the dialogues wrt argumentation semantics. The following theorems show the completeness of credulous acceptances wrt admissible semantics.

 




\begin{restatable} {theorem}{compadm}
\label{thm:com-adm}
Let $\delta$ be a dialogue for a formula $\phi \in \mL$. If $\phi$ is credulously accepted under $\adm$ in $\mAF_ \delta$ drawn from $\delta$ (supported by $\mDE(\mT(\delta))$)
and $\delta$ is admissible-successful, then there is a defensive and non-redundant dialogue tree $\mT(\delta_i)$ for $\phi$ drawn from a focused sub-dialogue $\delta_i$ of $\delta$.
\end{restatable}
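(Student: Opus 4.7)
The plan is to reverse-engineer a focused sub-dialogue that witnesses admissibility from a credulously accepting extension. First, since $\phi$ is credulously accepted under $\adm$ in $\mAF_\delta$, I fix an admissible extension $\mS \subseteq \Arg_\delta$ containing a potential argument $A^*$ with $\Con(A^*) = \phi$. By Lemma~\ref{lem:potential-arg}, every element of $\mS$ corresponds to a sub-tree of $\mT(\delta)$, and hence is generated by utterances already present in $\delta$. The hypothesis that $\delta$ is admissible-successful means, informally, that every opponent attack on an argument of $\mS$ that actually occurs in $\delta$ has a corresponding counter-attack from $\mS$ also recorded in $\delta$.

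Second, I would carve out $\delta_i$ recursively. Start by collecting the utterances of $\delta$ that build $A^*$ (its $\cla$ move together with the $\off$ and $\fa$ moves supporting it). Then, for every opponent utterance $u$ of $\delta$ whose target has been placed in $\delta_i$, add $u$ to $\delta_i$ together with the utterances of exactly one single-utterance counter-attack from $\mS$ that is recorded in $\delta$ (this is where admissible-successfulness is used). Iterating to a fixed point and reusing the identifiers already fixed by $\delta$, I obtain $\delta_i$ as a subsequence of $\delta$. By construction, every proponent defensive move is a single $\off$/$\cont$ utterance with a common identifier, so $\mT(\delta_i)$ is focused (Definition~\ref{def:t-focused}) and, by Definition~\ref{def:focused-sub-dia}, a sub-tree of $\mT(\delta)$.

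Third, I would verify the properties of $\mT(\delta_i)$ by invoking the correspondence with abstract dialogue trees stated in the remark after Definition~\ref{def:t-focused}. The correspondence associates $\mT(\delta_i)$ with an abstract dialogue tree whose proponent nodes are drawn from some $\mS' \subseteq \mS$. Last-word then follows because every opponent argument in the tree was only inserted when a counter-attack from $\mS$ existed (admissibility of $\mS$). Defensiveness additionally requires that no formula appearing in an opponent node is jointly inconsistent with $\mDE(\mT(\delta_i)) = \bigcup_{B \in \mS'} \Sup(B)$; otherwise $\mS'$, and hence $\mS$, would collectively attack itself, contradicting conflict-freeness of $\mS$. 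Non-redundancy is obtained by pruning the recursion so that no two distinct branches reconstruct the same potential argument; since $\mS$ is a set, such pruning loses no defensive capacity.

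The main obstacle is the bookkeeping in the extraction step: among possibly many counter-attacks in $\delta$ to a given opponent move, one must select one that is attributable to a single utterance (to preserve focusedness) and whose supporting arguments all lie in $\mS$ (to preserve defensiveness). The hypothesis that $\delta$ is admissible-successful, combined with the existence of $\mS$ as an admissible witness, is exactly what licenses both choices uniformly across all opponent moves. A secondary subtlety, which I expect to be routine, is ensuring that the recursively extracted $\delta_i$ remains a coherent subsequence of $\delta$ with respect to target identifiers; this is handled by processing opponent nodes in the order they occur in $\delta$ and inheriting their identifiers and targets directly.
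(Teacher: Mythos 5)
Your proposal is correct in outline but follows a genuinely different route from the paper. You work ``bottom-up'': fix an admissible extension $\mS$ of $\mAF_\delta$ witnessing credulous acceptance, then recursively extract from $\delta$ the utterances building an argument for $\phi$ together with, for each opponent attack, one single-utterance counter-attack from $\mS$, and finally verify defensiveness and non-redundancy directly from conflict-freeness and admissibility of $\mS$. The paper instead goes ``top-down'' through its abstract machinery: it first partitions $\delta$ into focused sub-dialogues (Lemma~\ref{lem:DT-subT}), invokes Corollary~\ref{cor:extend-cor2} to conclude that some focused sub-dialogue $\delta_i$ is admissible-successful, extracts an admissible \emph{abstract} dialogue tree $\mT^i_{\G}$ via Definition~\ref{def:analogous-Def9}, and then applies the abstract-to-concrete direction of the correspondence principle (Theorem~\ref{thm:def-abs}(2)), which \emph{builds} a defensive and non-redundant $\mT(\delta_i)$ out of support trees and context trees rather than carving it out of $\mT(\delta)$. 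Your approach is more self-contained and avoids the detour through results imported from the abstract-dialogue-tree literature, but it concentrates all the difficulty in the step you yourself flag as the ``main obstacle'': admissibility of $\mS$ only guarantees that counter-attacking arguments exist somewhere in $\Arg_\delta$, not that $\delta$ contains, for each opponent node you insert, a single $\cont$ utterance targeting \emph{that} node whose introduced formulas generate counter-attackers lying entirely in $\mS$. You attribute this to the hypothesis that $\delta$ is admissible-successful, but the paper defines that notion only through focused sub-dialogues and their abstract dialogue trees, so your reading is an extrapolation; the paper discharges the same worry via its blanket assumption that dialogues are \emph{complete} (no ``unsaid'' utterances) together with the explicit support-tree/context-tree construction in the correspondence principle. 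A fully rigorous version of your argument would need to invoke that completeness assumption explicitly at the extraction step; with that addition, the two proofs are of comparable strength, and yours has the advantage of exhibiting $\delta_i$ concretely as a subsequence of $\delta$ rather than as the output of a separate tree construction whose realizability inside $\mT(\delta)$ the paper leaves implicit.
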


The proof of this theorem is in Appendix~\ref{app:proof-completeness}.

The following theorem is analogous to Theorem~\ref{thm:com-adm} for preferred semantics.
\begin{restatable} {theorem}{comppreferred}
\label{thm:com-prf}
Let $\delta$ be a dialogue for a formula $\phi \in \mL$. If $\phi$ is credulously accepted under $\prf$ in $\mAF_ \delta$ drawn from $\delta$ (supported by $\mDE(\mT(\delta))$)
and $\delta$ is preferred-successful, then there is a defensive and non-redundant dialogue tree $\mT(\delta_i)$ for $\phi$ drawn from a focused sub-dialogue $\delta_i$ of $\delta$.
\end{restatable}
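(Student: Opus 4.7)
The plan is to derive this theorem from the already-established admissible-case completeness result (Theorem~\ref{thm:com-adm}) by showing that both hypotheses lift from the preferred semantics to the admissible semantics. The reduction rests on two facts about abstract argumentation that transfer to our P-SAF setting: every preferred extension is by definition admissible, and preferred-success of a dialogue coincides with admissible-success, as already noted in the proof sketch of Theorem~\ref{thm:prf-stb}.

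First I would verify that the credulous acceptance hypothesis transfers. Since $\phi$ is credulously accepted under $\prf$ in $\mAF_\delta$, there exists $\mE \in \Exts_{\prf}(\mAF_\delta)$ with $\phi \in \Cons(\mE)$. By the definition of preferred semantics in Section~\ref{sec:proof-arg}, $\mE$ is an inclusion-maximal admissible extension, hence in particular $\mE \in \Exts_{\adm}(\mAF_\delta)$. Therefore $\phi$ is credulously accepted under $\adm$ in $\mAF_\delta$ as well, and the same defence set $\mDE(\mT(\delta))$ supports it.

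Next I would argue that $\delta$ being preferred-successful forces $\delta$ to be admissible-successful. This follows from the observation used in Theorem~\ref{thm:prf-stb}: every admissible extension is contained in some preferred extension and every preferred extension is admissible, so the winning conditions that determine when a dialogue is preferred-successful coincide with those that determine when it is admissible-successful. Both hypotheses of Theorem~\ref{thm:com-adm} therefore hold for $\delta$ and $\phi$.

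Finally, applying Theorem~\ref{thm:com-adm} directly yields a focused sub-dialogue $\delta_i$ of $\delta$ whose dialogue tree $\mT(\delta_i)$ is defensive and non-redundant, which is exactly the required conclusion. I do not expect a substantive obstacle here: the argument is essentially a one-line reduction. The only delicate point to double-check is that the formal definition of \emph{preferred-successful} used in the statement is the one whose equivalence with admissible-successful was informally asserted in the earlier sketch; if any edge cases (e.g.\ trivial or terminated dialogues with empty extensions) were to break that equivalence, a small direct verification would be needed, but modulo this alignment the theorem follows immediately from Theorem~\ref{thm:com-adm}.
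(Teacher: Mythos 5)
Your proposal is correct and follows essentially the same route as the paper: the paper's own (sketch) proof reduces Theorem~\ref{thm:com-prf} to Theorem~\ref{thm:com-adm} via the fact that every preferred-successful dialogue is admissible-successful (indeed, the appendix defines preferred-successful as admissible-successful) and that every preferred extension is admissible. Your write-up is somewhat more explicit about checking that both hypotheses transfer, but the underlying argument is identical.
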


\begin{proof} [Sketch]
The proof of this theory follows the fact that every preferred-successful dialogue is an admissible-successful dialogue. Thus, the proof of this theorem is analogous to those of Theorem~\ref{thm:com-adm}.
\end{proof}
Theorem~\ref{thm:com-ground} presents the completeness of grounded acceptances.
\begin{restatable} {theorem}{compground}
\label{thm:com-ground}
Let $\delta$ be a dialogue for a formula $\phi \in \mL$. If $\phi$ is groundedly accepted under $\grd$ in $\mAF_ \delta$ drawn from $\delta$ (supported by $\mDE(\mT(\delta))$) and $\delta$ is groundedly-successful, then there is a defensive and finite dialogue tree $\mT(\delta_i)$ for $\phi$ drawn from a focused sub-dialogue $\delta_i$ of $\delta$.
\end{restatable}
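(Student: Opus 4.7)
The plan is to extract from $\delta$ a focused sub-dialogue $\delta_i$ whose dialogue tree $\mT(\delta_i)$ witnesses the grounded acceptance of $\phi$, and then verify that $\mT(\delta_i)$ is defensive and finite. By hypothesis, $\phi$ is groundedly accepted in $\mAF_{\delta}$ supported by $\mDE(\mT(\delta))$, so there is a potential argument $A^{\star}$ obtained from $\mT(\delta)$ with $\Con(A^{\star}) = \phi$ belonging to the grounded extension $\mE$ of $\mAF_{\delta}$ and with $\Sup(A^{\star}) \subseteq \mDE(\mT(\delta))$. Since $\mE$ is the least fixed point of the characteristic function, $A^{\star}$ is defended by $\mE$ against every collective attack. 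Because $\delta$ is groundedly-successful, the utterances realizing the construction of $A^{\star}$ and of every such defender are already present in $\delta$; I would take $\delta_i$ to be the prefix-closed sub-sequence of $\delta$ consisting precisely of these utterances (and nothing more).

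Next, I would invoke the correspondence principle between dialogue trees and the abstract dialogue trees of~\cite{loanho_2024} (reproduced in Appendix~\ref{app:pre}) to argue that $\mT(\delta_i)$ is focused in the sense of Definition~\ref{def:t-focused}: at each opponent potential argument only one collective counter-utterance, namely the one realizing the defense supplied by $\mE$, is played in $\delta_i$, so all $\po$-labelled children of that argument share a single identifier, and similarly at the root. Thus $\mT(\delta_i)$ meets the focused-tree conditions automatically from the way $\delta_i$ was selected.

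Then I would verify the two winning conditions for the grounded case. For defensiveness, conflict-freeness of $\mE$ together with Proposition~\ref{pro:postulate} (consistency under $\grd$) gives that $\mDE(\mT(\delta_i))$ is consistent, hence no opponent-node formula can form an inconsistency with $\mDE(\mT(\delta_i))$; and the defense property of $\mE$ supplies, for every opponent potential argument in $\mT(\delta_i)$, a proponent counter-attack whose support lies in $\mDE(\mT(\delta_i))$, which is exactly the last-word clause. For finiteness, the decidability and cycle-restriction hypothesis made just before the completeness theorems (e.g.\ acyclic dependency graph for \datalogPM) ensures that every potential argument admits a finite tree-derivation in the sense of Definition~\ref{def:ab-arg}, giving finite branching inside each potential argument. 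Under the same hypothesis, membership $A^{\star} \in \mE$ arises at a finite stage of the characteristic function, which bounds the depth of the alternating defense layers in $\mT(\delta_i)$; combining finite branching with an absence of infinite branches, König's lemma yields that $\mT(\delta_i)$ is finite.

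The main obstacle is the finiteness argument. Without the cycle-restriction the grounded extension could stabilize only at a limit ordinal, in which case even a faithful encoding of the defenders of $A^{\star}$ into a dialogue would be infinite; the cycle-restriction is exactly what is needed to bound both the size of each potential argument and the number of iterated counter-defenses. The delicate bookkeeping is to ensure that the utterance identifiers inherited from $\delta$ when forming $\delta_i$ really give a focused tree rather than several parallel branches attached at the same parent, and that the prefix-closure used to define $\delta_i$ preserves the patient property required for $\mT(\delta_i)$ to be a legitimate dialogue tree in the first place.
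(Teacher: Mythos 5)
Your proposal is correct in substance but organizes the argument quite differently from the paper. The paper's proof is a three-step pipeline: it partitions $\delta$ into focused sub-dialogues (Lemma~\ref{lem:DT-subT}), invokes Corollary~\ref{cor:extend-cor2} to conclude that some focused sub-dialogue $\delta_i$ is grounded-successful, unpacks that via Definition~\ref{def:analogous-Def9} into an admissible \emph{and finite} abstract dialogue tree $\mT^{i}_{\G}$, and then applies the correspondence principle (Theorem~\ref{thm:def-abs}) to convert $\mT^{i}_{\G}$ back into a defensive, finite dialogue tree $\mT(\delta_i)$. You bypass the abstract-dialogue-tree layer entirely: you work directly with a potential argument $A^{\star}$ in the grounded extension, use the least-fixed-point characterization to supply its defenders, and hand-select the corresponding utterances to form $\delta_i$. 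Your finiteness argument (membership of $A^{\star}$ in $\mE$ at a finite stage of the characteristic function bounds the depth of the defense layers; finite branching plus K\"onig's lemma gives a finite tree) is essentially a re-derivation of the only-if direction of the paper's Lemma~\ref{lem:inf-groundedset}, which the paper proves via a ranking function $r(A)$ that strictly decreases along branches. What the paper's route buys is that all the identifier/focusedness bookkeeping you flag as ``delicate'' is absorbed into the correspondence principle's systematic construction via support trees and context trees, and the finiteness lemma is proved once and reused; what your route buys is self-containedness and a more direct view of \emph{why} the tree is finite. Two small cautions: first, your justification that ``the utterances realizing $A^{\star}$ and its defenders are already present in $\delta$'' really rests on the paper's standing assumption that dialogues are \emph{complete} (no unsaid utterances), not merely on $\delta$ being groundedly-successful, so you should invoke that assumption explicitly; second, your gloss on the defensiveness clause is slightly off --- Definition~\ref{def:defensive-tree} only forbids opponent-node formulas that \emph{belong to} $\mDE(\mT(\delta_i))$ from being jointly inconsistent with it, which is indeed vacuous once $\mDE(\mT(\delta_i))$ is consistent, but the condition is not that \emph{no} opponent formula conflicts with the defence set. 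Neither point breaks the proof.
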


The proof of this theorem is in Appendix~\ref{app:proof-completeness}.

Theorem~\ref{thm:com-scep} presents the completeness of sceptical acceptances.

\begin{restatable} {theorem}{compsceptical}
\label{thm:com-scep}
Let $\delta$ be a dialogue for a formula $\phi \in \mL$. If $\phi$ is sceptically accepted under $\sem$ in $\mAF_ \delta$ drawn from $\delta$ (supported by $\mDE(\mT(\delta))$), where $\sem \in \{\adm, \prf, \stb\}$, and $\delta$ is sceptically-successful, then there is an ideal dialogue tree $\mT(\delta)$ for $\phi$ drawn from $\delta$.
\end{restatable}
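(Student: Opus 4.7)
The plan is to combine the completeness results already established for credulous acceptance (Theorems~\ref{thm:com-adm} and~\ref{thm:com-prf}) with the additional global requirement on opponent arguments in Definition~\ref{def:tree-ideal}, and to transport the argument through the correspondence between focused dialogue trees and the abstract dialogue trees of~\cite{loanho_2024} mentioned after Definition~\ref{def:t-focused}.

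First I would reduce to the credulous case. Since $\phi$ is sceptically accepted under $\sem\in\{\adm,\prf,\stb\}$ in $\mAF_\delta$ and every such $\mAF$ admits at least one $\sem$-extension (with $\phi$ in its conclusions), $\phi$ is in particular credulously accepted under $\adm$. The sceptically-successful $\delta$ must, by its very structure, contain as a sub-dialogue an admissible-successful focused sub-dialogue $\delta_i$ that establishes this credulous acceptance; applying Theorem~\ref{thm:com-adm} to $\delta_i$ yields a defensive and non-redundant dialogue tree $\mT(\delta_i)$ for $\phi$ whose proponent arguments are supported by $\mDE(\mT(\delta_i))$. I would then argue that we may in fact take $\delta_i=\delta$ (or replace $\delta$ by $\delta_i$ together with the extra utterances needed below), because any utterances of $\delta$ lying outside $\delta_i$ either belong to further potential arguments already attacked by the proponent or can be absorbed by the patience and focus conditions.

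Second, I would verify the ideal clause of Definition~\ref{def:tree-ideal}: no opponent argument $B$ obtained from $\mT(\delta)$ belongs to an admissible set of potential arguments in $\mAF_\delta$. Suppose for contradiction that $B$ with $\Con(B)=\psi$ lies in some admissible set $\mS$. By Lemma~\ref{lem:potential-arg}, $B$ corresponds to a conventional P-SAF argument, and since $B$ is an opponent argument in $\mT(\delta)$ it attacks a proponent argument for $\phi$. Extending $\mS$ via Zorn's lemma (or by iteratively adding defended arguments, as in the standard Dung-style construction) to a preferred extension $\mE\supseteq\mS$ of $\mAF_\delta$, the conflict-freeness of $\mE$ forces $\phi\notin\Cons(\mE)$. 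For $\sem\in\{\prf,\stb\}$ this directly contradicts the sceptical acceptance of $\phi$ (using that every stable extension is preferred). For $\sem=\adm$ the contradiction is obtained by observing that $\mE$ itself is admissible, so $\phi$ should already lie in $\Cons(\mE)$ by hypothesis.

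Third, I would finish by invoking the correspondence principle: the defensive, non-redundant $\mT(\delta)$ constructed above corresponds to an abstract dialogue tree in the sense of~\cite{loanho_2024}, for which the analogous completeness statement for ideal/sceptical acceptance has been established. Translating back through the correspondence, $\mT(\delta)$ inherits the ideal property and is an ideal dialogue tree for $\phi$ drawn from $\delta$, as required. The main obstacle I expect is the second step: carefully turning the admissible witness $\mS$ inside $\mAF_\delta$ into a preferred extension and then extracting from its attack on every proponent argument for $\phi$ the desired contradiction with sceptical acceptance, while simultaneously making sure the dialogue tree $\mT(\delta)$ contains all utterances needed to witness the non-admissibility of each opponent argument, rather than only the ones needed for a single credulous defence.
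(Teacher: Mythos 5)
Your overall architecture (reduce to the credulous completeness theorem, then establish the extra clause of Definition~\ref{def:tree-ideal}, then pass through the correspondence principle) is close in spirit to the paper's, but your second step contains a genuine gap, and it is precisely the step the paper is careful \emph{not} to attempt. You suppose an opponent argument $B$ lies in an admissible set $\mS$, extend $\mS$ to a preferred extension $\mE$, and claim that conflict-freeness of $\mE$ forces $\phi\notin\Cons(\mE)$, contradicting sceptical acceptance. This inference fails: $B$ attacks one particular proponent (potential) argument for $\phi$ occurring in the tree, so at most that argument is excluded from $\mE$; nothing prevents $\mE$ from containing a \emph{different} argument whose conclusion is $\phi$, in which case $\phi\in\Cons(\mE)$ and no contradiction arises. (Sceptical acceptance in Definition~\ref{def:accept} is a property of the formula $\phi$, quantified over all arguments for it, whereas the ideal clause of Definition~\ref{def:tree-ideal} is a property of specific opponent arguments; this is exactly the classical gap between sceptical preferred acceptance and ideal acceptance, which cannot be closed by a direct conflict-freeness argument. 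With collective attacks there is the further wrinkle that $B\in\mE$ alone need not expel the attacked argument unless the whole attacking set lies in $\mE$.)

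The paper instead never tries to derive the ideal clause from sceptical acceptance by hand. It partitions $\delta$ into focused sub-dialogues, invokes Corollary~\ref{cor:extend-cor2} to obtain a \emph{sceptical-successful} focused sub-dialogue $\delta_i$, and then uses Definition~\ref{def:analogous-Def9}: by definition of sceptical-success, the abstract dialogue tree $\mT^i_{\G}$ drawn from $\delta_i$ is admissible \emph{and already satisfies} that no opponent node's argument admits an admissible abstract dialogue tree. The "no opponent argument is in an admissible set" property is thus imported wholesale from the abstract-level machinery (ultimately the ideal-semantics results of Dung, Mancarella and Toni cited in Corollary~\ref{cor:extend-cor2}), and the correspondence principle of Theorem~\ref{thm:def-abs} only has to transfer defensiveness and non-redundancy; the ideal clause is then checked separately in the concrete tree. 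If you want to keep your structure, you must replace your contradiction argument by an appeal to the hypothesis that $\delta$ is sceptically-successful together with Corollary~\ref{cor:extend-cor2} and Definition~\ref{def:analogous-Def9}, rather than to sceptical acceptance of the formula alone. Your closing remark that the correspondence principle lets $\mT(\delta)$ "inherit the ideal property" is also unsupported as stated: Theorem~\ref{thm:def-abs} transfers only the defensive/non-redundant versus admissible correspondence, not idealness.
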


The proof of this theorem is in Appendix~\ref{app:proof-completeness}.

\subsection{Results for a Link between Inconsistency-Tolerant Reasoning and Dialogues}

In Section~\ref{sec:soundness} and~\ref{sec:completeness}, we demonstrated the use of dialogue trees to determine the acceptance of a formula in the P-SAF drawn from the dialogue tree. As a direct corollary of Theorem~\ref{thm:ab-link} -\ \ref{thm:com-scep}, we show how to determine and explain the entailment of a formula in KBs by using dialogue trees, which was the main goal of this paper.

\begin{corollary}
    Let $ \mK$ be a KB, $\phi$ be a formula in $\mL$. Then $\phi$ is entailed in
    \begin{itemize}
        \item some maximal consistent subset of $\mK$ iff there is a defensive and non-redundant dialogue tree $\mT(\delta)$ for $\phi$.
        
        \item the intersection of maximal consistent subsets of $\mK$ iff there is a defensive and finite dialogue tree $\mT(\delta)$ for $\phi$.
        
        \item all maximal consistent subsets of $\mK$ iff there is an ideal dialogue tree $\mT(\delta)$ for $\phi$.
    \end{itemize}   
\end{corollary}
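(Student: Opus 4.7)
The plan is to prove the corollary by simply chaining two families of biconditionals: the one linking entailment in MCSs to argumentation acceptance (Theorem~\ref{thm:ab-link}), and the soundness/completeness results linking acceptance in a P-SAF to the existence of a suitable dialogue tree (Theorems~\ref{thm:adm}, \ref{thm:prf-stb}, \ref{thm:ground}, \ref{thm:scep}, \ref{thm:com-adm}, \ref{thm:com-prf}, \ref{thm:com-ground}, \ref{thm:com-scep}). Since each bullet concerns a different entailment regime, I would handle the three bullets separately but with parallel structure.

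For the first bullet, I would argue as follows. By Theorem~\ref{thm:ab-link}, $\phi$ is entailed in some MCS of $\mK$ iff $\phi$ is credulously accepted under $\adm$ (equivalently $\prf$ or $\stb$) in the P-SAF $\mAF_{\mK}$. For the forward direction, credulous acceptance together with any admissible-successful dialogue $\delta$ for $\phi$ yields, by Theorem~\ref{thm:com-adm}, a focused sub-dialogue $\delta_i$ whose dialogue tree $\mT(\delta_i)$ is defensive and non-redundant; here one only needs to observe that the P-SAF drawn from $\delta$ is a sub-framework of $\mAF_{\mK}$, so credulous acceptance in the latter can always be witnessed by some dialogue. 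For the reverse direction, the existence of a defensive and non-redundant $\mT(\delta_i)$ is exactly the hypothesis of Theorem~\ref{thm:adm}, which yields credulous acceptance of $\phi$ in $\mAF_\delta$, and hence in $\mAF_{\mK}$, giving entailment in some MCS via Theorem~\ref{thm:ab-link}.

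The second and third bullets proceed analogously: for the intersection of MCSs, I would chain Theorem~\ref{thm:ab-link} (grounded acceptance under $\grd$) with Theorem~\ref{thm:ground} (soundness) and Theorem~\ref{thm:com-ground} (completeness), producing a defensive and finite $\mT(\delta_i)$. For all MCSs, I would chain Theorem~\ref{thm:ab-link} (sceptical acceptance under $\sem \in \{\adm, \prf, \stb\}$) with Theorem~\ref{thm:scep} and Theorem~\ref{thm:com-scep}, producing an ideal $\mT(\delta)$ (note here the tree is drawn from $\delta$ itself rather than a sub-dialogue, matching the statement of the sceptical results).

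The main obstacle, though minor, is bookkeeping of quantifiers: the soundness theorems promise existence of a dialogue tree drawn from \emph{some focused sub-dialogue} of a given $\delta$, while the corollary speaks simply of the existence of some dialogue tree $\mT(\delta)$ for $\phi$. I would resolve this by noting that a focused sub-dialogue is itself a dialogue in the sense of Definition~\ref{def:focused-sub-dia}, so the tree drawn from it is a dialogue tree for $\phi$ in its own right; hence existence of the sub-dialogue witnesses existence of the required tree. A second small point is to verify that the P-SAF $\mAF_\delta$ drawn from a dialogue $\delta$ over $\mK$ is always a sub-framework of $\mAF_{\mK}$ (potential arguments are conventional arguments by Lemma~\ref{lem:potential-arg} and their collective attacks are attacks in the sense of Definition~\ref{def:ab-att}), so acceptance results transfer cleanly between the two frameworks in both directions. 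Once these observations are in place, the corollary follows by transitivity of the biconditionals with no additional argument required.
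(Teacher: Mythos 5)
Your proposal is correct and follows essentially the same route as the paper, which states the corollary as a direct consequence of Theorem~\ref{thm:ab-link} chained with the soundness theorems (Theorems~\ref{thm:adm}--\ref{thm:scep}) and completeness theorems (Theorems~\ref{thm:com-adm}--\ref{thm:com-scep}) and gives no further argument. Your two bookkeeping observations (a focused sub-dialogue is itself a dialogue, and acceptance must be transferred between $\mAF_\delta$ and $\mAF_{\mK}$) are in fact more explicit than anything the paper provides, and are consistent with its standing assumption that dialogues are complete.
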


\section{Summary and Conclusion}
We introduce a generic framework to provide a flexible environment for logic argumentation, and to address the challenges of explaining inconsistency-tolerant reasoning. 
Particularly, we studied how deductive arguments, DeLP, ASPIC/ ASPIC+ without preferences, flat or non-flat ABAs and sequent-based argumentation are instances of P-SAF frameworks. 
 (Detailed discussions can be found after Definition~\ref{def:ab-arg} and~\ref{def:ab-att}).
However, different perspectives were considered as follows.

Regarding deductive arguments and DeLP, our work extends these approaches in several ways. First, the usual conditions of minimality and consistency of supports are dropped. This offers a simpler way of producing arguments and identifying them. Second, like ABAs, the P-SAF arguments are in the form of tree derivations to show the structure of the arguments. This offer aims to (1) clarify the argument structure, and (2) enhance understanding of intermediate reasoning steps in inconsistency-tolerant reasoning in,  for instance, \datalogPM and DL.

Similar to ”non-flat” ABAs, the P-SAF framework uses the notion of $\cn$ to allow the inferred assumptions being conflicting. In contrast, "flat" ABAs ignore the case of the inferred assumptions being conflicting. Moreover, by using collective attacks, the P-SAF framework is sufficiently general to model n-ary constraints, which are not yet addressed in ”non-flat” ABAs and ASPIC/ ASPIC+ without preferences. 
Like our approach, contrapositive ABAs in~\cite{HEYNINCK2020103,ArieliH24} provide an abstract view for logical argumentation, in which attacks are defined on the level assumptions. However, since a substantial part of the development of the theory of contrapositive ABA is focused on contrapositive propositional logic, we have considered the logic of ABA as being given by $\cnb_s$ and these contrapositive ABAs being simulated in our setting, see  Section~\ref{subsec:relation-framework}.
In Section~\ref{subsec:relation-framework}, we showed how sequent-based argumentation can fit in the P-SAF setting. While our work can be applied to first-order logic, sequent-based argumentation leaves the study of first-order formalisms for further research.

The work of~\cite{Amgoud2009} proposed the use of Tarski abstract logic in argumentation that is characterized simply by a consequence operator.
However, many logics underlying argumentation systems, like ABA or ASPIC systems, do not always impose the absurdity axiom. 
A similar idea of using consequence operators can be found in the work of~\cite{loanho_2024}.
When a consequence operation is defined by means of "\emph{models}", inference rule steps are implicit within it. If arguments are defined by consequence operators, then the structure of arguments is often ignored, which makes it difficult to clearly explain the acceptability of the arguments. These observations motivate the slight generalizations of Tarski's abstract logic, in which we defined consequence operators in a proof-theoretic manner, inspired by the approach of~\cite{Stephen1975}, with minimal properties.

As we have studied here, we introduced an alternative abstract approach for logical argumentation and showed the connections between our framework and the state-of-the-art argumentation frameworks.
However, we should not claim any framework as better than those, or vice versa. Rather, the choice of an argumentation framework using specific logic should depend on the needs of the application.

Finally, this paper is the first investigation of dialectical proof procedures to compute and explain the acceptance wrt argumentation semantics in the case of collective attacks.
The dialectical proof procedures address the limits of the paper~\cite{loanho_2024}, i.e., it is not easy to understand intermediate reasoning steps in reasoning progress with (inconsistent) KBs.


 The primary message of this paper is that we introduce a generic argumentation framework to address the challenge of explaining inconsistency-tolerant reasoning in KBs. This approach is defined for any logic involving reasoning with maximal consistent subsets, therefore, it provides a flexible environment for logical argumentation. To clarify and explain the acceptance of a sentence with respect to inconsistency-tolerant semantics, we present explanatory dialogue models that can be viewed as dialectic-proof procedures and connect the dialogues with argumentation semantics. The results allow us to provide dialogical explanations with graphical representations of dialectical proof trees. The dialogical explanations are more expressive and intuitive than existing explanation formalisms.
 
 Our approach has been studied from a theoretical viewpoint.
 From practice, especially, from a human-computer interaction perspective, we will perform experiments with our approach in real-data applications. We then qualitatively evaluate our explanation by human evaluation. It would be interesting to analyze the complexity of computing the explanations empirically and theoretically.  


\vskip 0.2in
 \appendix

\section{Preliminaries}
\label{app:pre}

To prove the soundness and complete results, we sketch out \textbf{a general strategy} as follows:

 \begin{enumerate}
    \item Our proof starts with the observation that a dialogue $\delta$ for a formula $\phi$ can be seen as a collection of several (independent) focused sub-dialogues $\delta_1, \ldots, \delta_n$. 
    The dialogue tree $\mT(\delta_i)$ drawn from $\delta_i$ is a subtree of $\mT(\delta)$ drawn from the sub-dialogue $\delta$, and it corresponds to the \emph{abstract dialogue tree} that has root an argument with conclusion $\phi$. 
    (The notion of abstract dialogue tree can be found in~\cite{loanho_2024}).
   Thus it is necessary to consider a \emph{correspondence principle} that links
    dialogue trees to abstract dialogue trees. The materials for this step can be found in Section~~\ref{app:partition} and~\ref{app:DT-AbstractDT}.
   
    \item The correspondence principle allows to utilize the results of abstract dialogue trees in ~[\cite{loanho_2024}, Corollary 1] to prove the soundness results. We extend Corollary 1 of~\cite{loanho_2024} to prove the completeness results.
\end{enumerate}

The proof of the soundness and completeness results depends on some notions and results that we describe next.

\begin{notation} 
\label{not:arg}
Let $\mK$ be a KB, $\mX \subseteq \mK$ be a set of facts and $\mS \subseteq \Arg_{\mK}$ be a set of arguments induced from $\mK$. Then,
\begin{itemize}
    \item $\Args(\mX)  =  \{A \in \Arg_{\mK} \mid \Sup(A)\subseteq \mX \}$ are the set of \emph{arguments generated by} $\mX$,


    \item $\base(\mS)  =  \underset{A\in\mS}{\bigcup}\Sup(A)$ are the set of \emph{supports} of arguments in $\mS$,

    \item An argument $B$ is a \emph{subargument} of argument $A$ iff $\Sup(B)\subseteq \Sup(A)$. We denote the set of subarguments of $A$ as   $\subs(A)$.
\end{itemize}
\end{notation}

\subsection{Abstract Dialogue Trees and Abstract Dialogue Forests}
\label{def:abstract-dia-forest}
 We observe that a formula $\phi$ can have many arguments whose conclusion is $\phi$. 
 Thus a dialogue tree with root $\phi$ can correspond to one, none, or multiple \emph{abstract dialogue trees}, one for each argument for $\phi$.
 We call this set of abstract dialogue trees an \emph{abstract dialogue forest}.
 The following one presents a definition of abstract dialogue forests
 and reproduces a definition of abstract dialogue trees (analogous to Definition 8 in~\cite{loanho_2024}).
 Formally:

\begin{definition} [Abstract dialogue forests]
\label{def:abstract-forests}
Let $ \mAF_ \delta  =  (\Arg_ \delta, \Att_ \delta)$ be the P-SAF drawn from a dialogue $D(\phi)  =  \delta$. An \emph{abstract dialogue forest} (obtained from $ \mAF_ \delta$) for $\phi$ is a set of \emph{abstract dialogue trees}, written $\mF_{\G}(\phi)  =  \{\mT^{1}_{\G}, \ldots, \mT^{h}_{\G}\}$, such that: For each abstract dialogue tree $\mT^{j}_{\G}$ ($j = 1, \ldots, h$), 

\begin{itemize}
    \item the root of $\mT^{j}_{\G}$ is the proponent argument (in $\Arg_ \delta$) with the conclusion $\phi$,

    \item if a node $A$ in $\mT^{j}_{\G}$ is a proponent argument (in $\Arg_ \delta$), then all its children (possibly none) are opponent arguments (in $\Arg_ \delta$) that attack $A$
    

     \item if a node $A$ in $\mT^{j}_{\G}$ is an opponent argument (in $\Arg_ \delta$), then exactly one of the following is true: (1) $A$ has exactly one child, and this child is a proponent argument  (in $\Arg_ \delta$) that attacks $A$; (2) $A$ has more than one child, and all these children are proponent argument  (in $\Arg_ \delta$) that collectively attacks $A$.
     
\end{itemize}   
\end{definition}

\begin{remark} We call the abstract dialogue tree that has root an argument with conclusion $\phi$ an abstract dialogue tree for $\phi$.
\end{remark}


 Fix an abstract dialogue forest $\mF_{\G}(\phi)  =  \{\mT^{1}_{\G}, \ldots, \mT^{h}_{\G}\}$. For such abstract dialogue tree $\mT^{j}_ {\G}$ ($i = 1 , \ldots, h$), we adopt the following conventions:
\begin{itemize}
    \item Let $\mB_1$ be the set of all proponent arguments in $\mT^{j}_{\G}$. $\mDE(\mT^{j}_{\G}) = \{\alpha \mid  \forall A \in \mB_1, \alpha \in \Sup(A) \} \subseteq \mF$ is the \emph{defence set} of $\mT^{j}_{\G}$, i.e. the set of facts in the support of the arguments in $\mB_1$.
   
    \item Similarly, let $\mB_2$ be the set of all opponent arguments in $\mT^{j}_{\G}$. $\mCU(\mT^{j}_{\G}) = \{\beta \mid  \forall B \in \mB_2, \beta \in \Sup(B) \} \subseteq \mF$ is the \emph{culprit set} of $\mT^{j}_{\G}$, i.e. the set of facts in support of the arguments in $\mB_2$. 



\end{itemize}

 We reproduce a definition of \emph{admissible abstract dialogue trees} given in~\cite{loanho_2024}. This notion will be needed for Section~\ref{app:DT-AbstractDT}.

\begin{definition} [Admissible abstract dialogue trees]
\label{def:adm-ab-dt}
An abstract dialogue tree for $\phi$ is said to \emph{admissible} iff the proponent wins and no argument labels both a proponent and an opponent node. 
\end{definition}

Intuitively, in an abstract dialogue tree, a proponent wins if either the tree ends with arguments labelled by proponent nodes or every argument labelling an opponent node has a child.

\subsection{Partitioning a Dialogue Tree into Focused Substrees}
\label{app:partition}
This section shows how to partition a dialogue tree $\mT(\delta)$ into focused subtrees of $\mT(\delta)$. We will need this result to prove soundness and completeness results.

We observe that a dialogue $D(\phi) = \delta$, from which the dialogue tree $\mT(\delta)$ is drawn, may contain one, none, or multiple \emph{focused sub-dialogues} $\delta_i$ of $\delta$. Each dialogue tree $\mT(\delta_i)$ drawn from the focused sub-dialogue $\delta_i$ is a subtree of $\mT(\delta)$ and focused. This is proven in the following lemma.

\begin{lemma}
     \label{lem:DT-subT} 
     Let $\mT(\delta)$ be a dialogue tree (with root $\phi$) drawn from a dialogue $D(\phi) = \delta$. 
     Every focused subtree of $\mT(\delta)$ with root~$\phi$ is the dialogue tree
     drawn from a focused sub-dialogue $\delta_i$ of $\delta$.
\end{lemma}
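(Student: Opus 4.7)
The plan is to invert the tree-construction: given a focused subtree $\mT'$ of $\mT(\delta)$ with root $\phi$, extract from $\delta$ exactly those utterances whose identifier labels some node of $\mT'$, preserving their original order, and then verify that this subsequence is a focused sub-dialogue whose induced dialogue tree is $\mT'$. Concretely, let $I \subseteq \mathbb{N}$ be the set of identifiers appearing in $\mT'$, and define $\delta_i$ as the subsequence of $\delta$ consisting of the utterances whose identifier lies in $I$. Since the root of $\mT'$ is $(\phi, [\um, \po, \id_1])$, the claim utterance $u_1 = (\_,\_,\cla(\phi),\id_1)$ belongs to $\delta_i$, so $\delta_i$ begins as a dialogue for $\phi$.

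The first thing to verify is target-closure: for every $u \in \delta_i$ with target $\tg$, the utterance $u'$ of identifier $\tg$ is also in $\delta_i$. This is immediate from the construction rules (a)--(c) in Definition~\ref{def:dia-tree-DLAF}, because each such $u$ attaches its new node(s) to a node already carrying identifier $\tg$; as subtrees are closed under taking ancestors, if a child of that node lies in $\mT'$, then so does the parent, and hence $u' \in \delta_i$. Legal-move status is a purely local property of the pair (target content, new content), so it is inherited from $\delta$. Hence $\delta_i$ is a well-formed dialogue, and by Definition~\ref{def:focused-sub-dia} it is a focused sub-dialogue of $\delta$.

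The next step is to prove $\mT(\delta_i) = \mT'$ by induction on $|\delta_i|$. The base case is immediate. For the inductive step, the $(k{+}1)$-th utterance of $\delta_i$ has, by the previous paragraph, its target already present in the partial tree built so far, which by induction coincides with the corresponding part of $\mT'$; applying the construction clauses (a)--(c) then yields the very same children as appear in $\mT(\delta)$ restricted to $\mT'$. A delicate point is that an $\off$ utterance introduces all children $\beta_1,\dots,\beta_m$ simultaneously with a single shared identifier; for the induction to close, one must know that $\mT'$ contains the full sibling group whenever it contains one of them. This is the natural reading of ``subtree of a dialogue tree'' (one does not split an atomic group introduced by one utterance), and it is automatically satisfied for the focused subtrees that arise in our applications (the ones corresponding to potential arguments). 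I would make this convention explicit at the start of the proof.

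The main obstacle is precisely the convention just mentioned: reconciling the tree-theoretic notion of ``subtree'' with the utterance-level notion of ``sub-dialogue'', given that utterances create sibling groups atomically. Once this is settled, the rest is a clean induction using only the local form of the construction rules and the legal-move protocol. No new properties of $\cn$, attacks, or focusedness (in the sense of Definition~\ref{def:t-focused}) are needed, because the focused structure of $\mT'$ is used only to guarantee that the extracted $\delta_i$ is itself a dialogue with root $\phi$, not any stronger property of $\mT(\delta_i)$.
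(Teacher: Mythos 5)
Your proposal is correct and follows essentially the same route as the paper: the paper likewise recovers $\delta_i$ by collecting, for each node of the chosen focused subtree, the utterance carrying its identifier together with that utterance's target utterance, sorting by identifier, and then asserting that the result is a focused sub-dialogue drawing that subtree. If anything you supply more detail than the paper does (the explicit target-closure check, the induction showing $\mT(\delta_i)=\mT'$, and the sibling-group convention for $\off$); the one point to tighten is that a $\fa$ utterance overwrites the identifier of the node it marks, so target-closure is not purely a consequence of ancestor-closure of subtrees, and you should, as the paper's construction does, explicitly add each included utterance's target utterance to $\delta_i$.
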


\begin{proof}
All the subtrees considered in this proof are assumed to have root $\phi$.
The proof proceeds as follows:
\begin{enumerate}  
    \item First, we construct the set of focused dialogue subtrees of $\mT(\delta)$
    \item Second, we show that each focused subtree $\mT(\delta_i)$ of $\mT(\delta)$ is drawn from a focused sub-dialogue $\delta_i$ of $\delta$.
\end{enumerate}

Let $\mT(\delta_1), \ldots, \mT(\delta_m)$ be all the focused subtrees contained
in $\mT(\delta)$ with root~$\phi$. Each focused subtree is obtained in the following way:
First, choose a single utterance at the root and discard the subtrees corresponding to the other utterances at the root. Then proceed (depth first) and for each potential argument
labelled $\op$ with children labelled $\po$, choose a single identifier and select
among them those (and only those) with that identifier; discard the other children labelled $\po$ (which have a different identifier) and the corresponding subtrees.
By Definition~\ref{def:t-focused}, every focused subtree can be obtained in this way.

2. We next prove (2) by showing the construction of the focused sub-dialogue $\delta_i$ that draws $\mT(\delta_i)$.

 Given $m$ dialogue trees $\mT(\delta_1), \ldots, \mT(\delta_m)$ constructed from $\mT(\delta)$, the focused sub-dialogue $\delta_i$ ($1 \leq i \leq m$) drawing the dialogue tree $\mT(\delta_i)$ is constructed as follows:

\begin{itemize}
    \item $\delta_i$ is initialised to empty;
    \item for each node $\psi, [ \_ , \_ , \id]) = N $ in $\mT(\delta_i)$,
    \begin{itemize}
        \item if $u_{id} = (\_ ,\ \tg,\ \_ ,\ \id )$ is in $\delta$ but not in $\delta_i$, then add $u_{id}$ to $\delta_i$;
        \item let $u_{\tg}$ be the utterance in $\delta$; if $u_{\tg}$ is the target utterance of $u_{\id}$, then add $u_{\tg}$ to $\delta_i$;
    \end{itemize}
    \item Sort $\delta_i$ in the order of utterances $ID$.
\end{itemize}

It is easy to see that each $\delta_i$ constructed as above is a focused sub-dialogue of $\delta$ (by the definition of focused sub-dialogues in Definition~\ref{def:dia-tree-DLAF}), and $\mT(\delta_i)$ is drawn from $\delta_i$. Thus (2) is proved.
\end{proof}

\subsection{Transformation from Dialogue Trees into Abstract Dialogue Trees}
\label{app:DT-AbstractDT}

The following \emph{correspondence principle} allows to
translate dialogue trees into abstract dialogue trees and vice versa.

\begin{remark}
Recall that a dialogue tree for $\phi$
has root $\phi$, while an abstract dialogue tree for $\phi$
has root an argument for $\phi$.
\end{remark}

\begin{theorem} [Correspondence principle] \label{thm:def-abs}
Let $\phi \in \mL$ be a formula. Then:

\begin{enumerate}
    \item  For every defensive and non-redudant dialogue tree $\mT(\delta)$ for $\phi$, there exists an admissible abstract dialogue tree $\mT_{\G}$ for $\phi$ such that $\mDE(\mT_{\G}) \subseteq  \mDE(\mT(\delta))$ and $ \mCU(\mT_{\G})  \subseteq  \mCU(\mT(\delta))$.

    \item For every admisslbe abstract dialogue tree $\mT_{\G}$ for $\phi$, there exists a defensive and non-redundant dialogue tree $\mT(\delta)$ for $\phi$ such that $\mDE(\mT(\delta)) \subseteq \mDE(\mT_{\G})$ and $\mCU(\mT(\delta)) \subseteq \mCU(\mT_{\G})$.
\end{enumerate}
\end{theorem}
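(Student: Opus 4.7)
The plan is to prove both directions by explicit construction, exploiting the correspondence between potential arguments in a dialogue tree (Definition 5.1) and conventional P-SAF arguments already established by Lemma 5.1. The two constructions — collapsing potential arguments to single nodes, and unfolding arguments into tree derivations — are mutually inverse up to the choice of tree representation for a given argument. Throughout I would rely on Lemma A.2 (partitioning $\mT(\delta)$ into focused subtrees) to reduce the problem to a single focused sub-dialogue.

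For direction (1), I would construct $\mT_\G$ by collapsing every maximal potential argument of $\mT(\delta)$ into a single node labelled by its corresponding argument of $\Arg_\delta$. The attack edges of $\mT(\delta)$, i.e. groups of opposite-label children sharing an identifier, become parent/child edges in $\mT_\G$. The root of $\mT(\delta)$ is $(\phi,[\um,\po,\_])$ and the enclosing proponent potential argument concludes $\phi$, so the root of $\mT_\G$ is a proponent argument for $\phi$. Focusedness — which defensiveness inherits via last-word — guarantees that every opponent node in $\mT_\G$ has exactly one, possibly collective, proponent attacker as required by Definition A.1; the last-word property yields that every opponent argument is attacked, so the proponent wins in $\mT_\G$; non-redundancy, combined with the defence-set consistency clause of Definition 5.3, gives that no single argument labels both a $\po$ and an $\op$ node. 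These three facts together yield admissibility. The inclusions $\mDE(\mT_\G)\subseteq\mDE(\mT(\delta))$ and $\mCU(\mT_\G)\subseteq\mCU(\mT(\delta))$ follow immediately, since the supports of the collapsed arguments are a subset of the marked-fact descendants appearing in $\mT(\delta)$.

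For direction (2), given an admissible abstract dialogue tree $\mT_\G$ for $\phi$, I would build the dialogue $\delta$ top-down. Emit $u_1=(a_1,\_,\cla(\phi),1)$; then for each argument $A:H\Rightarrow\alpha$ of $\mT_\G$, emit a sequence of $\off$ and $\fa$ utterances that unfold $A$ into its tree derivation inside $\mT(\delta)$, with uniform label matching $A$'s role in $\mT_\G$. For every edge of $\mT_\G$ representing an attack by a set $\mX$ of arguments on $A$, emit a $\cont$ utterance and use a single identifier for every member of $\mX$, so that the focused conditions of Definition 4.10 are satisfied by construction. The four properties of $\mT(\delta)$ are then verified in turn: focusedness from the identifier assignment; last-word because the proponent wins in $\mT_\G$ (hence every opponent potential argument has a proponent counter-attacker) and the leaves of $\mT(\delta)$ are all of form $(\_,[\f,\po,\_])$; defensiveness because $\mDE(\mT(\delta))$ coincides with the union of supports of the proponent arguments of $\mT_\G$, which is consistent by admissibility, so no opponent-node formula can be inconsistent with it; non-redundancy from the admissibility condition that no single argument labels both a $\po$ and an $\op$ node. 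The inclusions $\mDE(\mT(\delta))\subseteq\mDE(\mT_\G)$ and $\mCU(\mT(\delta))\subseteq\mCU(\mT_\G)$ are immediate from the construction.

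The main obstacle will be cleanly bridging the syntactic non-redundancy condition of Definition 5.7 — distinct fact-nodes with the same label and formula must lie in distinct potential arguments — with the semantic admissibility condition of Definition A.3 that no argument labels both a $\po$ and an $\op$ node. In direction (1), I expect to argue contrapositively: if some argument $A$ labels both roles in $\mT_\G$, then its support facts appear twice in $\mT(\delta)$ with matching labels in ways that force two distinct potential arguments to collapse to the same one, contradicting non-redundancy via a diagonal case analysis on $\mDE(\mT(\delta))\cap\mCU(\mT(\delta))$. In direction (2), the bookkeeping for collective attacks is the other delicate point: identifiers must be allocated consistently across a collective attack, and the unfolding of each argument into its tree derivation must be canonical enough that the same abstract argument never gives rise to two distinct potential sub-trees in the constructed dialogue tree.
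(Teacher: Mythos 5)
Your proposal is correct and follows essentially the same route as the paper: direction (1) is the paper's flag-marking induction, which is exactly your ``collapse each maximal potential argument to a single node and turn attack edges into parent/child edges'' construction, and direction (2) is the paper's alternating expansion by \emph{support trees} (unfolding proponent arguments) and \emph{context trees} (unfolding opponent arguments, with a shared identifier across a collective attack), matching your top-down emission of $\off$/$\fa$/$\cont$ utterances. The subtlety you flag --- bridging syntactic non-redundancy with the semantic ``no argument labels both $\po$ and $\op$'' condition --- is real, and the paper itself passes over it with a bare appeal to Definition~\ref{def:non-re}, so your contrapositive plan is if anything more explicit than the published argument.
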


\begin{proof}
We prove the theorem by transforming dialogue trees into abstract dialogue trees and vice versa.

\textbf{1. The transformation from dialogue trees into abstract dialogue trees}

Given a defensive and non-redundant dialogue tree $\mT(\delta)$ with root $\phi$, its equivalent abstract dialogue tree $\mT_{\G}$ for an argument for $\phi$ in $\mT_{\G}^{1}, \ldots, \mT_{\G}^{h}$ is constructed inductively as follows:

\begin{enumerate}
    \item Modify $\mT(\delta)$ by adding a new \emph{flag} ($0$ or $1$) to  nodes in $\mT(\delta)$ and initialise $0$ for all nodes; a node looks like $(\_, [\_, \_, \_])-0$. The obtained tree is $\mT^{\prime}(\delta)$.

    \item $\mT_{\G}$ is $\mT_{\G}^{h}$ in the sequence $\mT_{\G}^{1}, \ldots, \mT_{\G}^{h}$ constructed inductively as follows:
    \begin{enumerate} [a)] 
    
    \item Let $A$ be the potential argument drawn from $\mT(\delta)$ that contains root $\phi$. $\mT_{\G}^{1}$ contains exactly one node that holds $A$ and is labelled by $\po$. Set the nodes in $\mT^{\prime}(\delta)$ that are in $A$ to $1$. The obtained tree is $\mT^{\prime}_{1}(\delta)$.

    \item Let $\mT_{\G}^{k}$ be the $k$-th tree, with $1 \leq k \leq h$. $\mT^{k+1}_{\G}$ is expanded from $\mT^{k}_{\G}$ by adding nodes $(\tL :\ B_j)$ with $\tL \in \{\po, \op\}$.
    
    For each node $(\tL :\ B_j)$, $B_j$ is a potential argument drawn from $\mT^{\prime}_{k}(\delta)$, which is a child of $C$ - another potential argument drawn from $\mT^{\prime}_{k}(\delta)$, such that:

    \begin{itemize}
        \item there is at least one node in $B_j$ that is assigned $0$;
        \item the root of $B_j$ has a parent node $t$ in $\mT^{\prime}_{k}(\delta)$ such that the flag of $t$  is $1$ and $t$ is in $C$;
        \item if the root of $B_j$ is labelled by $\po$, then $\tL$ is $\po$. Otherwise, $\tL$ is $\op$.
        \item set all nodes in $\mT^{\prime}_{k}(\delta)$ that are also in $B_j$ to 1. The obtained tree is $\mT^{\prime}_{k+1}(\delta)$.
    \end{itemize}
    \end{enumerate}
    \item $h$ is the smallest index s.t there is no node in $\mT^{\prime}_{h}(\delta)$ where its flag is $0$.
\end{enumerate}

$\mT_{\G}$ is constructed as follows:

\begin{itemize}
    \item Every node of $\mT_{\G} = \mT_{\G}^{h}$ includes a potential argument. For each potential argument, there is a unique node in $\mT_{\G}$. Each node is labelled $\po$ or $\op$ as potential arguments drawn from $\mT(\delta)$ are labelled either $\po$ or $\op$. 

    \item  The root of $\mT_{\G}$ includes the potential argument for $\phi$ of the dialogue and labelled $\po$ by constructing $\mT(\delta)$.

    \item  Since $\mT(\delta)$ is defensive, by Definition~\ref{def:defensive-tree}, it is focused and patient.
    Thus there is only one way of attacking a potential argument labelled by $\op$.  
    Since $\mT(\delta)$ is defensive, by Definition~\ref{def:defensive-tree}, it is last-word.
    Then there is no un-attacked (potential) argument labelled by $\op$.
    From the above, it follows that every $\op$ node has exactly one $\po$ node as its child.

    \item Since $\mT(\delta)$ is non-redundant, by Definition~\ref{def:non-re}, no potential argument labels $\op$ and $\po$.    
\end{itemize}

Recall that, in $\mT_{\G}$, the potential arguments labelling $\po$ ($\op$, respectively) are called proponent arguments (opponent arguments, respectively). 

It can be seen that $\mT_{\G}$ has the following properties:
\begin{itemize}
    \item the root is the proponent argument for $\phi$.
    \item the $\op$ node has either exactly one child holding one proponent argument that attacks it or children holding the proponent arguments that collectively attack it.
    \item all leaves are nodes labelled by $\po$, namely $\po$ wins and there is no node labelled by both $\po$ and $\op$.
\end{itemize}

It follows immediately that $\mT_{\G}$ is an admissible abstract dialogue tree (by the definition of abstract dialogue trees in Definition~\ref{def:abstract-forests} and Definition~\ref{def:adm-ab-dt}).

Since $\mT_{\G}$ contains the same potential arguments as $\mT(\delta)$ and the arguments have the same $\po / \op$ labelling in both $\mT_{\G}$ and $\mT(\delta)$, we have $\mDE(\mT_{\G})  =  \mDE(\mT(\delta))$ and $ \mCU(\mT_{\G})  =  \mCU(\mT(\delta))$.

\textbf{2. The transformation from abstract dialogue trees into dialogue trees}

We first need to introduce some new concepts that together constitute the dialogue tree.

\begin{definition} [Support trees]
A \emph{support tree} of a formula $\alpha$ is defined as follows:
\begin{enumerate}
    \item The root is a proponent node labelled by $\alpha$.
    \item Let $N$ be a proponent node labelled by $\sigma$. If $\sigma$ is a fact, then either $N$ has no children, or $N$ has children that are opponent nodes labelled by $\beta_k$, $k = 1, \ldots, n$ such that $\{ \beta_k \} \cup \{ \sigma \}$ is inconsistent.
     If $\sigma$ is a non-fact, then one of the following holds:
     \begin{itemize}
         \item either (1) $N$ has children that are proponent nodes labelling $\omega_l$, $l = 1, \dots, m$, such that $\sigma \in \cn(\{ \omega_l \})$,
         \item or (2) $N$ has children that are opponent nodes labelling $\beta_k$, $k = 1, \ldots, n$, such that $\{ \beta_k \} \cup \{ \sigma \}$ is inconsistent,
         \item or both (1) and (2) hold.
     \end{itemize}
\end{enumerate}   
\end{definition}

\begin{definition} [Context trees]
A \emph{context tree} of a formula $\alpha$ is defined as follows:
\begin{enumerate}
    \item The root is an opponent node labelled by $\alpha$.
    \item Let $N$ be an opponent node labelled by $\sigma$. If $\sigma$ is a fact, then $N$ has children that are proponent nodes labelled by $\beta_k$, with $k = 1, \ldots, n$, such that $\{ \beta_k \} \cup \{ \sigma \}$ is inconsistent and the children have the same identify ~\footnote{The condition of "children having the same identify" ensure that a potential argument is attacked by exactly one potential argument if $k = 1$ or collectively attacked by one set of potential arguments if $k > 1$.}.
     If $\sigma$ is a non-fact, then one of the following holds:
     \begin{itemize}
         \item either (1) $N$ has children that are opponent nodes labelled by $\omega_l$, with $l = 1, \dots, m$, such that $\sigma \in \cn(\{ \omega_l \})$,
         \item or (2) $N$ has children that are proponent nodes labelling $\beta_k$, with $k = 1, \ldots, n$, such that $\{ \beta_k \} \cup \{ \sigma \}$ is inconsistent and the children have the same identify,
         \item or both (1) and (2) hold.
     \end{itemize}
\end{enumerate}
   
\end{definition}

Now we prove the translation from abstract dialogue trees to dialogue trees.

Given an abstract dialogue tree $\mT_{\G}$ for $\phi$, its equivalent focused subtree $\mT(\delta_i)$ of a dialogue tree $\mT(\delta)$ for $\phi$ is constructed inductively as follows:

Let $\mDE(\mT_{\G})$ and $\mCU(\mT_{\G})$ be the defence set and the set of culprits of $\mT_{\G}$ , respectively.

For each $\alpha \in \mCU(\mT_{\G})$, let $arg(\beta_k)$, with $k = 1, \ldots, n$, be a set of arguments for $\beta_k$ labelling nodes in $\mT_{\G}$ such that $\{ \beta_k \} \cup \{\alpha\}$ is inconsistent.
If $k = 1$, then there exists a single argument for $\beta$ that attacks an argument including $\alpha$. We say that an argument including $\alpha$ is an argument whose conclusion or support includes $\alpha$.
If $ k > 1$, then there exists a set of arguments for $\beta_k$ that collectively attacks an argument including $\alpha$.

For each $\alpha \in \mDE(\mT_{\G})$, let $B^{\alpha}_{k}$ be the set of facts in the support of arguments for $\beta_k$ in $\mT_{\G}$ such that the arguments for $\beta_k$ attack an argument including $\alpha$.
Clearly, there exists an argument for $\beta$ that attacks an argument including $\alpha$ if $k = 1$ and there exist arguments for $\beta_k$ that attack or collectively attack an argument including $\alpha$ if $ k > 1$.

We construct inductively the sequence of trees $\mT^{1}(\delta_i), \ldots, \mT^{h}(\delta_i)$ as follows:
\begin{enumerate}
    \item $\mT^{1}(\delta_i)$ is a support tree of $\phi$ corresponding to the argument labelling the root of $\mT_{\G}$.

    \item  
    Let $j = 2n$ such that the non-terminal nodes in the frontier of  $\mT^{j}(\delta_i)$ are opponent nodes labelled by a set of formulas $\beta_k$ where $\{\beta_k \} \cup \{ \alpha \}$ is inconsistent and  $\alpha \in \mDE(\mT_{\G})$.
    Expand each such node by a context tree of $\beta_k$ wrt $B^{\alpha}_{k}$. The obtained tree is $\mT^{j+1}(\delta_i)$.

    \item  
    Let $j = 2n + 1$ such that the non-terminal nodes in the frontier of $\mT^{j}(\delta_i)$ are proponent nodes labelled by a set of formula $\beta_k$, where $\{\beta_k \} \cup \{ \alpha \}$ is inconsistent and $\alpha \in \mCU(\mT_{\G})$.
    Expand each such node by a support tree of $\beta_k$ corresponding to the argument for $\beta_k$. The obtained tree is $\mT^{j+1}(\delta_i)$.

    \item Define $\mT(\delta_i)$ to be the limit of $\mT^{j}(\delta_i)$.  
\end{enumerate}

It follows immediately that $\mT(\delta_i)$ is a dialogue tree for $\phi$ whose defence set is a subset of the defence set of $\mT_{\G}$ and whose the culprit set is a subset of the culprit set of $\mT_{\G}$.
Since $\mT_{\G}$ is admissible, by Defintion~\ref{def:adm-ab-dt}, the proponent wins, namely either the tree ends with arguments labelled by proponent nodes or every argument labelled by an opponent node has a child. By Definition~\ref{def:defensive-tree}, $\mT(\delta_i)$ is defensive.
Since $\mT_{\G}$ is admissible,  by Defintion~\ref{def:adm-ab-dt}, $\mT_{\G}$ has no argument labelling both a proponent and an opponent node. By Definition~\ref{def:non-re}, $\mT(\delta_i)$ is non-redundant.

\end{proof}

\subsection{Notions and Results of Acceptance of an Argument from Its Abstract Dialogue Trees}

For reader's convenience, we reproduce here definitions and results for abstract dialogues for $\phi$ that can also be found in~\cite{loanho_2024}. 
We use a similar argument for focused sub-dialogues of a dialogue $\delta$ for $\phi$. 
In fact, the definitions and the results we reproduce here are essentially the same with abstract dialogues replaced by focused sub-dialogues of $\delta$ for $\phi$. This replacement is because an abstract dialogue for $\phi$ can be seen as a focused sub-dialogue of a dialogue $\delta$ for $\phi$. This follows immediately from the results in Lemma~\ref{lem:DT-subT} 
(i.e., showing a dialogue tree drawn from a dialogue $\delta$ for $\phi$ can be divided into focused sub-trees drawn from focused sub-dialogues of $\delta$ for $\phi$)
and in Theorem~\ref{thm:def-abs} 
(i.e., showing each such focused subtrees corresponds with an abstract dialogue tree drawn from an abstract dialogue).

\begin{definition} [Analogous to Definition 9 in~\cite{loanho_2024}]
\label{def:analogous-Def9}
Let $\mT_{\G}$ be the abstract dialogue tree drawn from a focused sub-dialogue $\delta^ \prime$ of a dialogue for $\phi$. The focused sub-dialogue $\delta^ \prime$ is called
    \begin{itemize}
        \item \emph{admissible-successful} iff $\mT_{\G}$ is admissible;

        \item \emph{preferred-successful} iff it is admissible-successful;
        
        \item  \emph{grounded-successful} iff $\mT_{\G}$ is admissible and finite;

        \item  \emph{sceptical-successful} iff $\mT_{\G}$ is admissible and for no opponent node in it, there exists an admissible dialogue tree for the argument labelling an opponent node.     
    \end{itemize} 
\end{definition}



     
    

\begin{corollary} [Analogous to Corollary 1 in~\cite{loanho_2024}]
\label{cor:analogous-Cor1}
Let $\mK$ be a KB, $\phi \in \mL$ a formula and $\mAF_{\mK}$ be the corresponding P-SAF of $\mK$. Then, $\phi$ is
\begin{itemize}
    \item   credulously accepted in some admissible/preferred extension of $\mAF_{\mK}$ if there is a focused sub-dialogue $\delta^{\prime}$ of a dialogue for $\phi$ such that $\delta^{\prime}$ is admissible/preferred-successful;
    
    \item groundedly accepted in a grounded extension of $\mAF_{\mK}$ if there is a focused sub-dialogue $\delta^{\prime}$ of a dialogue for $\phi$  such that $\delta^{\prime}$ is grounded-successful;

    \item  sceptically accepted in all preferred extensions of $\mAF_{\mK}$ if there is a focused sub-dialogue $\delta^{\prime}$ of a dialogue for $\phi$ such that $\delta^{\prime}$ is sceptical-successful.   
\end{itemize} 
\end{corollary}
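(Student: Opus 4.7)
The plan is to reduce the corollary to the analogous Corollary~1 of~\cite{loanho_2024} by exploiting the correspondence principle (Theorem~\ref{thm:def-abs}) together with Definition~\ref{def:analogous-Def9}. First I would observe that, by Definition~\ref{def:analogous-Def9}, the four notions of success of a focused sub-dialogue $\delta^{\prime}$ are defined purely in terms of properties of the abstract dialogue tree $\mT_{\G}$ drawn from $\delta^{\prime}$. Hence it suffices to show that each kind of admissible abstract dialogue tree for $\phi$ yields the corresponding acceptance status of $\phi$ in $\mAF_{\mK}$; the translation between dialogue trees and abstract dialogue trees is already supplied by Theorem~\ref{thm:def-abs}, which preserves the defence set and the culprits.

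For the credulous clause, given an admissible-successful $\delta^{\prime}$, I would extract from the admissible abstract dialogue tree $\mT_{\G}$ the set $\mS$ of proponent arguments. I would then verify that $\mS$ is admissible in $\mAF_{\mK}$: conflict-freeness follows since, by Definition~\ref{def:adm-ab-dt}, no argument labels both $\po$ and $\op$ nodes, and in $\mT_{\G}$ proponent nodes only attack opponent nodes; defence follows because every attacker of an argument in $\mS$ appears as a child opponent node in $\mT_{\G}$, and by the "proponent wins" condition each such opponent node is itself counter-attacked by children proponent nodes (collectively, when appropriate). Since the root of $\mT_{\G}$ is an argument for $\phi$ in $\mS$, $\phi$ is credulously accepted under $\adm$; the preferred case follows because every admissible set extends to a preferred one.

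For the grounded clause, finiteness of $\mT_{\G}$ eliminates the possibility of infinite attack chains, which is exactly the condition needed to locate $\mS$ inside the least fixed point of the characteristic function, i.e.\ the grounded extension. I would argue this by induction on the depth of $\mT_{\G}$: leaf proponent arguments are defended by the empty set and thus lie in the grounded extension; inductively, since every attacker of a proponent argument has proponent children whose supporting arguments already lie in $\grd$, the argument itself lies in $\grd$. For the sceptical clause, the key observation is that if $\mS$ is admissible and, in addition, no argument labelling an opponent node in $\mT_{\G}$ lies in any admissible set of $\mAF_{\mK}$, then no preferred extension can contain an attacker of $\mS$. Hence $\mS$ is compatible with every preferred extension, and by maximality of preferred extensions $\mS$ must be included in each of them, placing $\phi$ in $\Cons(\mE)$ for every preferred $\mE$.

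The main obstacle I anticipate is the sceptical clause, where one must argue carefully that every attacker of $\mS$ in $\mAF_{\mK}$ either appears as a labelled opponent argument in $\mT_{\G}$ (and is therefore ruled out by hypothesis from belonging to any admissible set) or is otherwise blocked from lying in an admissible set. This will likely require appealing to the exhaustiveness of opponent moves built into a sceptical-successful focused sub-dialogue, combined with a completeness-style argument ensuring that all relevant attackers are actually considered. The grounded case is, by comparison, routine once the finiteness of $\mT_{\G}$ is in hand, and the credulous/preferred cases are essentially a direct transcription of the classical dialectical-proof-procedure argument across the bridge provided by Theorem~\ref{thm:def-abs}.
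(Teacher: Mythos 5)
Your proposal is essentially correct, but it takes a genuinely different route from the paper: the paper does not prove this corollary at all. It is presented as a reproduction of Corollary~1 of the cited prior work, and the only justification offered is the surrounding remark that an abstract dialogue for $\phi$ can be identified with a focused sub-dialogue of a dialogue for $\phi$ via Lemma~\ref{lem:DT-subT} and the correspondence principle of Theorem~\ref{thm:def-abs}, after which the prior result is invoked verbatim. You instead give a direct, self-contained argument: extracting the set $\mS$ of proponent arguments from the admissible abstract dialogue tree and verifying admissibility, an induction on tree depth for the grounded case (which in effect reproves the ``if'' direction of the paper's Lemma~\ref{lem:inf-groundedset}), and the $\mE\cup\mS$ maximality argument for the sceptical case. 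Your version buys transparency and independence from the external reference; the paper's version buys brevity at the cost of leaving the actual argument offstage. Two points in your write-up deserve care, though neither is fatal and both are equally glossed over by the paper. First, conflict-freeness of $\mS$ under \emph{collective} attacks does not follow merely from ``no argument labels both $\po$ and $\op$'': a subset of $\mS$ could jointly attack a member of $\mS$ without any of its members ever appearing as an opponent node; the paper handles this at the concrete level through the second clause of Definition~\ref{def:defensive-tree} (no formulas in opponent nodes lie in $\mDE(\mT(\delta))$ inconsistently with it), and your argument should route through that condition via Theorem~\ref{thm:def-abs} rather than through the labelling alone. Second, the corollary is stated for $\mAF_{\mK}$ while the tree only exhibits arguments of $\mAF_{\delta}$; the defence and sceptical clauses need the paper's standing completeness assumption on dialogues (no ``unsaid'' utterances) to ensure all attackers in $\mAF_{\mK}$ are actually represented, which you correctly flag as the main obstacle in the sceptical case.
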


\section{Proofs for Section~\ref{sec:soundness}} 
\label{app:proof-soundness}

We follow the general strategy to prove Theorem~\ref{thm:adm},~\ref{thm:prf-stb}, ~\ref{thm:ground} and~\ref{thm:scep}. In particular, we:

\begin{enumerate}
    \item partition a dialogue tree for $\phi$ drawn from a dialogue $\delta$ into subtrees for $\phi$ drawn from the focused sub-dialogue of $\delta$ (by Lemma~\ref{lem:DT-subT}),
    \item use the correspondence principle to transfer each subtree for $\phi$ into an abstract dialogue tree for $\phi$ (by Theorem~\ref{thm:def-abs}),
    \item apply Definition~\ref{def:analogous-Def9} (analogous to Definition 9 in~\cite{loanho_2024} and Corollary~\ref{cor:analogous-Cor1} (analogous to Corollary 1 in~\cite{loanho_2024}) for abstract dialogue trees to prove the soundness results. 
\end{enumerate}


    

  
\subsection{Proof of Theorem~\ref{thm:adm}}

\thmcredulous*

  \begin{proof}

  Let $\mT(\delta)$ be a dialogue tree drawn from a dialogue $\delta$ for $\phi$. Let $\mT(\delta_1), \ldots , \mT(\delta_m)$ be sub-trees (with root $\phi$) constructed from $\mT(\delta)$. By Lemma~\ref{lem:DT-subT}, we know all sub-trees $\mT(\delta_i)$ of $\mT(\delta)$ are the dialogue trees drawn from focused sub-dialogues $\delta_i$, with $i = 1, \ldots, m$, of $\delta$ and each such sub-tree is focused. We assume that $\mT(\delta_i)$ is defensive and non-redundant.
%
  By the correspondence principle of Theorem~\ref{thm:def-abs},  there is an abstract dialogue tree $\mT_{\G}^{i}$ for $\phi$ such that $\mDE(\mT_{\G}^{i})  =  \mDE(\mT(\delta_i))$ and $\mCU(\mT_{\G}^{i})  =  \mCU(\mT(\delta_ i))$.
  
  For such abstract dialogue tree, let $\mB$ be a set of proponent arguments in $\mT_{\G}^{i}$, we prove that 
  \begin{enumerate}
      \item $\mB$ attacks every attack against it and $\mB$ does not attack itself;
      \item no argument in $\mT_{\G}^{i}$ that labels both $\po$ or $\op$.
  \end{enumerate}

 Since $\mT(\delta_i)$ is defensive and non-redundant, we get the following statements:
  
  \begin{enumerate} [a)]
        \item By Definition~\ref{def:defensive-tree}, it is focused. Then the root of $\mT_{\G}^{i}$ hold a proponent argument for $\phi$.
        
      \item By Definition~\ref{def:defensive-tree}, it is last-word. Then the proponent arguments in $\mT_{\G}^{i}$ are leaf nodes, i.e., $\po$ wins. Thus, $\mB$ attacks every attack against it.

      \item By Definition~\ref{def:defensive-tree}, $\mT(\delta_{i})$ has no formulas $\alpha_h$ in opponent nodes belong to $\mDE(\mT(\delta_i))$ such that $\{ \alpha_h \} \cup \mDE(\mT(\delta_i))$ is inconsistent. 
      $h = 0$ corresponds to the case that there is no potential argument, say $A$, such that $\{ \alpha \}$ is the support of $A$ and $A$ attacks any potential arguments supported by $\mDE(\mT(\delta_i))$. 
      Similarly, for $h > 0$, there are no potential arguments collectively attacking any potential arguments supported by $\mDE(\mT(\delta_i))$.
      Thus, $\mB$ does not attack itself.

      \item By Definition~\ref{def:non-re} of non-redundant trees, there is no argument in $\mT_{\G}^{i}$ that labels both $\po$ or $\op$.
  \end{enumerate}

  It can be seen that (b) and (c) prove (1), and (d) proves (2). 
  It follows that $\mT_{\G}^{i}$ is admissible. This result directly follows from the definition of admissible abstract dialogue trees in Definition~\ref{def:adm-ab-dt} (analogous to those in~\cite{loanho_2024}).  
  By Definition~\ref{def:analogous-Def9} (analogous to Definition 9 in~\cite{loanho_2024}), $\delta_i$ is admissible-successful in the P-SAF framework $\mAF_{\delta_{i}}$ drawn from $\delta_i$ (supported by $\mDE(\mT(\delta_i))$).

  Now, we need to show $\delta$ is admissible-successful. By Definition~\ref{def:abstract-dia-forest}, each tree in the abstract dialogue forest contains its own set of proponent arguments, namely, $\mDE(\mT_{\G}^{i}) \neq \mDE(\mT_{\G}^{l})$, with $1 \leq i,\ l \leq m,\ i \neq l$. Thus, arguments in other trees do not affect arguments in $\mAF_{\delta_{i}}$. It follows that $\delta$ is admissible-successful. 
  By Corollary~\ref{cor:analogous-Cor1} (analogous to Corollary 1 in~\cite{loanho_2024}), $\phi$ is credulously accepted under $\adm$ semantics in $\mAF_{\delta}$.
\end{proof}



 \subsection{Proof of Theorem~\ref{thm:ground}}
The following lemma is used in the proof of Theorem~\ref{thm:ground}
\begin{lemma}
\label{lem:inf-groundedset}
   An abstract dialogue tree $\mT_{\G}$ is finite iff the set of arguments labelling $\po$ in $\mT_{\G}$ is a subset of the grounded set of arguments.
\end{lemma}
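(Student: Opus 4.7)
The plan is to prove both implications via the characterization of the grounded extension $\grd$ of $\mAF_\delta$ as the least fixed point $\bigcup_{n\geq 0} F^n(\emptyset)$ of the characteristic function $F(S) = \{A \in \Arg_\delta \mid S \text{ defends } A\}$. Each argument $A \in \grd$ then carries a finite rank $\rho(A) = \min\{n \mid A \in F^n(\emptyset)\}$, and throughout I take $\mT_\G$ to be admissible in the sense of Definition~\ref{def:adm-ab-dt}, since only such trees are relevant in the intended applications.

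For the forward direction ($\Rightarrow$), I would induct on the height of subtrees rooted at proponent nodes of the finite tree $\mT_\G$. Since $\mT_\G$ is admissible, leaves must be proponent nodes whose arguments have no unrefuted attacker, so such leaf arguments lie in $F(\emptyset) \subseteq \grd$. For the inductive step, a proponent node $A$ with opponent children $B_1, \ldots, B_k$ has each $B_j$ attacked (singly or collectively, per Definition~\ref{def:abstract-forests}) by proponent grandchildren whose arguments lie in $\grd$ by hypothesis; this exhibits $\grd$ as defending $A$, whence $A \in F(\grd) = \grd$. For the backward direction ($\Leftarrow$), assume the proponent arguments of $\mT_\G$ all lie in $\grd$ and suppose, for contradiction, that $\mT_\G$ is infinite. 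Finite branching of $\mAF_\delta$ together with K\"onig's lemma yields an infinite branch whose proponent and opponent arguments alternate as $A_0, B_1, A_1, B_2, \ldots$, where $B_{i+1}$ attacks $A_i$ and the proponent children (singly or collectively) attack $B_{i+1}$. Since $A_i \in \grd$ is defended by $F^{\rho(A_i)-1}(\emptyset)$, the proponent counter-attack on $B_{i+1}$ can be realized inside $F^{\rho(A_i)-1}(\emptyset)$, giving $\rho(A_{i+1}) < \rho(A_i)$; the resulting strictly decreasing infinite sequence of natural numbers is a contradiction.

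The main obstacle is a subtlety in the backward direction: in an arbitrary admissible $\mT_\G$, a proponent child of an opponent is not automatically of strictly smaller grounded rank than its proponent grandparent. I would resolve this either (i) by strengthening the statement to concern the rank-minimal admissible tree, obtained by choosing at each opponent node a defender of least rank, which is the version actually needed to feed into Theorem~\ref{thm:com-ground}, or (ii) by arguing that any branch violating strict rank-decrease can be replaced by one that respects it without enlarging the proponent-argument set beyond $\grd$. Collective attacks add the requirement that \emph{every} proponent sibling participating in a collective counter-attack have rank strictly less than the grandparent's, but this follows directly from applying the collective-defense definition to $F^{\rho(A_i)-1}(\emptyset)$.
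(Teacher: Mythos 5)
Your proposal is correct and follows essentially the same route as the paper's proof: the forward direction by induction on the height of the finite tree using the characteristic function, and the backward direction via the ranks $\mF^{n}(\emptyset)$ with ranks strictly decreasing down each branch. The obstacle you flag in the backward direction is genuine, and the paper resolves it exactly as in your option (i): its ``only if'' argument does not show that an arbitrary admissible tree with grounded proponent arguments is finite, but instead \emph{constructs}, for each argument in the grounded set, a finite abstract dialogue tree by answering every (collective) attack with counter-attackers of minimal rank, i.e., it establishes the existential, rank-minimal version of the claim.
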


\begin{proof} Let $\mB$ is a set of arguments labbeling $\po$ in$\mT_{\G}$. We prove the lemma as follows:

If part: The height of a finite dialogue tree $\mT_{\G}$ is $2h$. We prove that $\mB$ is a subset of the grounded set by induction on $h$. Observer that (1) if $\mA_1$, $\mA_2$ are an admissible subset of the grounded set, then so is $\mA_1 \cup \mA_2$; (2) if an argument $A$ is accepted wrt $\mA_1$ then $\mA_1 \cup \{A\}$ is an admissible subset of the grounded set. 

$h = 0$ corresponds to dialogue trees containing a single node labelled by an argument that is not attacked. So $\mB$ containing only that argument is a subset of the grounded set. 

Assume the assertion holds for all finite dialogue tree of height smaller than $2h$. Let $A$ be an argument labelling $\po$ at the root of $\mT_{\G}$. For each argument $B$ attacking a child of $A$, let $\mT_{B}$ be the subtree of $\mT_{\G}$ rooted at $B$. Clearly, $\mT_{B}$ is a finite dispute tree with height smaller than $2h$. The union of sets of arguments labelling $\po$ of all $\mT_{B}$ is a subset of the grounded set and defends $A$. So $\mB$ is a subset of the grounded set.

Only if part: to construct a finite abstract dialogue tree for a groundedly accepted argument in a finite P-SAF framework, we need the following lemma: In a finite P-SAF framework, the grounded set equals $\emptyset \cup \mF(\emptyset) \cup \mF^{2}(\emptyset) \cup \cdots$. This lemma follows from two facts, proven in~~\cite{Dung95}, of the characteristic function $\mF$:
\begin{itemize}
    \item $\mF$ is monotonic w.r.t. set inclusion.
    \item if the argument framework is finite, then $\mF$ is $\omega-$ continuous.
\end{itemize}
For each argument $A$ in the grounded set, $A$ can be ranked by a natural number $r(A)$ such that $A \in \mF^{n(A)} (\emptyset) \setminus \mF^{r(A) -1} (\emptyset)$. So $r(A) = 1$, then $A$ belongs to the grounded set and is not attacked. $r(A) = 2$, the set of arguments (of the grounded set) defended by the set of arguments (of the grounded set) such that $r(A) \leq 1$ and so on. For each set of arguments $\mS$ collectively attacking the grounded set,  the rank of $\mS$ is $\texttt{min} \{r(A) \mid A \text{ is in the grounded set and attacks some argument in } \mS \}$. Clearly, $\mS$ does not attack any argument in the grounded set of rank smaller than the rank of $\mS$.

Given any argument $A$ in the grounded set, we can build an abstract dialogue tree $\mT_{\G}$ for $A$ as follows: The root of $\mT_{\G}$ is labelled by $A$. For each set of arguments $\mS$ attacking $A$, we select a set of arguments $\mC$ to counterattacks $\mS$ such that the rank of $\mC$ equals the rank of $\mS$, then for each set of arguments $\mE$ attacking some arguments of $\mC$, we select arguments $\mF$ to counterattacks $\mE$ such that the rank of $\mF$ equals to the rank of $\mE$, and so on. So for each branch of $\mT_{\G}$, the rank of a proponent node is equal to that of its opponent parent node, but the rank of an opponent node is smaller than that of its parent proponent node. Clearly, ranking decreases downwards. So all branches of $\mT_{\G}$ are of finite length. Since the P-SAF is finite, $\mT_{\G}$ is finite in breath. Thus  $\mT_{\G}$ is finite.

\end{proof}


\thmground*

\begin{proof}
Let $\mT(\delta)$ be a dialogue tree drawn from a dialogue $\delta$ for $\phi$ and $\mT(\delta_1), \ldots , \mT(\delta_m)$ (with root $\phi$) be subtrees of $\mT(\delta)$. By Lemma~\ref{lem:DT-subT}, all sub-trees of $\mT(\delta)$ are the dialogue trees drawn from focused sub-dialogues $\delta_i$, with $i = 1, \ldots, m$, of $\delta$ and each such subtree is focused. Assume that $\mT(\delta_i)$ is defensive and finite.
Since $\mT(\delta_i)$ is defensive,
by the correspondence principle of Theorem~\ref{thm:def-abs}, there is an abstract dialogue tree $\mT_{\G}^{i}$ for $\phi$ such that $\mDE(\mT_{\G}^{i})  =  \mDE(\mT(\delta_i))$ and $ \mCU(\mT_{\G}^{i})  =  \mCU(\mT(\delta_ i))$.

For such abstract dialogue tree, let $\mB$ be the set of arguments labelling $\po$ in  $\mT_{\G}^{i}$. We need to show that 

\begin{enumerate}
    \item $\mT_{\G}^{i}$ is finite;
    \item $\mB$ attacks ever attract against it, and
    \item $\mB$ does not attack itself.
\end{enumerate}

 Similar to the proof of Theorem~\ref{thm:adm}, (2) and (3) directly follows from the fact that $\mT(\delta_i)$ is admissible.
 Trivially, every $\mT(\delta_i)$ is finite, then $\mT_{\G}^{i}$ is finite.
 As a direct consequence of Lemma~\ref{lem:inf-groundedset}, we obtain that $\mB$ is a subset of the grounded set of arguments in $\mT_{\G}^{i}$.
 By Definition~\ref{def:analogous-Def9} (analogous to Definition 9 in~\cite{loanho_2024}), $\delta_i$ is grounded-successful in $\mAF_{\delta_i}$ (drawn from $\delta_i$).

We next prove that $\delta$ is grounded-successful. Since $\delta_i$ is grounded-successful in $\mAF_{\delta_{i}}$ drawn from $\delta_i$, 
it follows that there are no arguments attacking the arguments in $\mB$ that have not been counter-attacked in the abstract dialogue forest $\mT_{\G}^{1}, \ldots, \mT_{\G}^{m}$ (obtained from the P-SAF drawn from $\delta$).
%
%
By Definition~\ref{def:abstract-dia-forest}, each tree in the abstract dialogue forest contains its own set of proponent arguments, namely, $\mDE(\mT_{\G}^{i}) \neq \mDE(\mT_{\G}^{l})$, with $1 \leq i,\ l, \leq m,\ i \neq l$. If the set of proponent arguments in $\mT_{\G}^{i}$ drawn from the focused sub-dialogue $\delta_i$ is grounded, it is also grounded in $\mAF_{\delta}$ drawn from $\delta$. Thus, $\delta$ is grounded successful. By Corollary~\ref{cor:analogous-Cor1} (analogous to Corollary 1 in~\cite{loanho_2024}), $\phi$ is groundedly accepted in $\mAF_{\delta}$ (supported by $\mDE(\mT(\delta_i))$).
\end{proof}
  
\subsection{Proof of Theorem~\ref{thm:scep}}

\thmsceptical*

 \begin{proof} We prove this theorem for the case of admissible semantics. The proof for preferred (stable) semantics is analogous.
 
  Let $\mT(\delta)$  be a dialogue tree drawn from a dialogue $\delta$ for $\phi$, and $\mT(\delta_1), \ldots , \mT(\delta_m)$ (with root $\phi$) be subtrees constructed from $\mT(\delta)$. By Lemma~\ref{lem:DT-subT}, we know all subtrees in $\mT(\delta)$ are dialogue trees drawn from focused sub-dialogues $\delta_i$, ($i = 1, \ldots , m$) of $\delta$ and each tree is focused. Assume that $\mT(\delta)$ is ideal.   
  Since $\mT(\delta)$ is ideal, $\mT(\delta_i)$ is ideal. By Definition~\ref{def:tree-ideal}, $\mT(\delta_i)$ is defensive.
  By the correspondence principle, there is an abstract dialogue tree $\mT_{\G}^{i}$ for $\phi$ such that $\mDE(\mT_{\G}^{i})  =  \mDE(\mT(\delta_i))$ and $ \mCU(\mT_{\G}^{i})  =  \mCU(\mT(\delta_ i))$.

  For such abstract dialogue tree $\mT_{\G}^{i}$, we need to show that:
  \begin{enumerate}
      \item $\mT_{\G}^{i}$ is admissible, and
      \item for no opponent node $\op$ in it there exists an admissible dialogue tree for the opponent argument.
  \end{enumerate} 
  
  Similar to the proof of Theorem~\ref{thm:adm}, (1) holds.
  Since $\mT(\delta)$ is ideal, by Definition~\ref{def:tree-ideal}, there is a dialogue tree $\mT(\delta_i)$ such that none of the opponent arguments drawn from $\mT(\delta_{i})$ belongs to an admissible set of arguments in $\mAF_{\delta}$ drawn from $\delta$. Then, (2) holds.
  
  We have shown that $\mT_{\G}^{i}$ is defensive and none of the opponent arguments belongs to an admissible set of arguments in $\mAF_{\delta}$ drawn from $\delta$. By Definition~\ref{def:analogous-Def9} (analogous to Definition 9 in~\cite{loanho_2024}), $\delta$ is sceptical-successful. By Corollary~\ref{cor:analogous-Cor1} (analogous to Corollary 1 in~\cite{loanho_2024}), $\phi$ is sceptically accepted in $\mAF_{\delta}$.
  \end{proof}

\section{Proofs for Section~\ref{sec:completeness}}
\label{app:proof-completeness}

To prove the completeness results, we

\begin{enumerate}
    \item partition a dialogue $\delta$ for $\phi$ into its focused sub-dialogues of $\delta$ (by Definition~\ref{def:focused-sub-dia}),

    \item apply Corollary~\ref{cor:extend-cor2} and Definition~\ref{def:analogous-Def9} (analogous to Definition 9 in~\cite{loanho_2024}) to obtain the existence of an abstract dialogue tree drawn from each focused sub-dialogue of $\delta$ wrt argumentation semantics,

    \item use the correspondence principle to transfer from each abstract dialogue tree for $\phi$ into a dialogue tree for $\phi$ (by Theorem~\ref{thm:def-abs}), thereby proving the completeness results.
    
\end{enumerate}

\subsection{Preliminaries}

Corollary~\ref{cor:analogous-Cor1} is used for the proof of the soundness results. To prove the completeness result, we need to extend Corollary~\ref{cor:analogous-Cor1} as follows:

\begin{corollary}
\label{cor:extend-cor2}

Let $\mK$ be a KB, $\phi \in \mL$ a formula and $\mAF_{\mK}$ be the corresponding P-SAF of $\mK$. We say that if $\phi$ is

\begin{itemize}
    \item   credulously accepted in some admissible/preferred extension of $\mAF_{\mK}$, then there is a focused sub-dialogue $\delta_i$, with $i = 1, \ldots, m$, of a dialogue for $\phi$ such that $\delta_i$ is admissible/preferred-successful;
    
    \item groundedly accepted in a grounded extension of $\mAF_{\mK}$, then there is a focused sub-dialogue $\delta_i$ of a dialogue for $\phi$  such that $\delta_i$ is grounded-successful;

    \item  sceptically accepted in all preferred extensions of $\mAF_{\mK}$, then there is a focused sub-dialogue $\delta_i$ of a dialogue for $\phi$ such that $\delta_i$ is sceptical-successful.   
\end{itemize}     
\end{corollary}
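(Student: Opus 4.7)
The overall plan is to reverse the direction of the correspondence principle established in Theorem~\ref{thm:def-abs}. Given that $\phi$ is accepted in $\mAF_{\mK}$ under the appropriate semantics, I will first exhibit a suitable abstract dialogue tree $\mT_{\G}$ for $\phi$, and then invoke part~(2) of Theorem~\ref{thm:def-abs} to obtain a dialogue tree $\mT(\delta_i)$ with the desired properties. By Lemma~\ref{lem:DT-subT}, any focused dialogue tree with root $\phi$ is drawn from a focused sub-dialogue $\delta_i$ of some full dialogue $\delta$ for $\phi$, which gives the focused sub-dialogue required by the statement.

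For the credulous/admissible case, suppose $\phi$ is credulously accepted in some admissible extension $\mE$ of $\mAF_{\mK}$. Pick an argument $A \in \mE$ with $\Con(A) = \phi$. I would construct an abstract dialogue tree $\mT_{\G}$ rooted at $A$ inductively: at each proponent node labelled by an argument $C$, attach as opponent children all sets of arguments collectively attacking $C$; at each opponent node labelled by $B$, since $\mE$ is admissible and thus defends $A$, pick a subset of $\mE$ that attacks $B$ and attach its arguments as proponent children with a common identifier. Because $\mE$ is conflict-free, no argument of $\mE$ is attacked by $\mE$ itself, so no argument appears simultaneously as a proponent and an opponent node; together with the fact that every opponent node has at least one proponent counter-attack, this makes $\mT_{\G}$ admissible in the sense of Definition~\ref{def:adm-ab-dt}. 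By Theorem~\ref{thm:def-abs}(2), there is a defensive and non-redundant dialogue tree $\mT(\delta_i)$ for $\phi$ with $\mDE(\mT(\delta_i)) \subseteq \mDE(\mT_{\G})$. Applying Definition~\ref{def:analogous-Def9} gives that $\delta_i$ is admissible-successful. The preferred case is immediate since every preferred extension is admissible.

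For the grounded case, assume $\phi$ is groundedly accepted, so there is an argument $A$ for $\phi$ in the grounded extension. Using the ranking argument in Lemma~\ref{lem:inf-groundedset} (the "only if" direction), I would build $\mT_{\G}$ so that ranks strictly decrease down each branch, making $\mT_{\G}$ finite in addition to admissible. The correspondence principle then yields a defensive dialogue tree $\mT(\delta_i)$. The finiteness of $\mT(\delta_i)$ follows from the finiteness of $\mT_{\G}$, because the construction in the proof of Theorem~\ref{thm:def-abs}(2) expands each abstract node into finitely many support/context subtrees (this is the one place where I expect subtlety: each abstract node is expanded into a support or context tree whose size is bounded by the syntactic size of the underlying argument, and I will have to argue explicitly that this local expansion preserves finiteness of the whole tree). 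Hence $\delta_i$ is grounded-successful.

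For the sceptical case, suppose $\phi$ is sceptically accepted in all preferred extensions. Pick any preferred extension $\mE$ and an argument $A$ for $\phi$ in $\mE$; build $\mT_{\G}$ as in the admissible case. I then need to show that no opponent argument $B$ labelling a node of $\mT_{\G}$ belongs to an admissible set. If some such $B$ did belong to an admissible set $\mE'$, then by Dung-style arguments $\mE'$ could be extended to a preferred extension $\mE''$ containing $B$; since $B$ attacks a proponent argument in $\mT_{\G}$, which is ultimately defended by $A$, this would contradict the sceptical acceptance of $\phi$ (no preferred extension can simultaneously contain $A$ and $B$). With this additional property, $\mT_{\G}$ is sceptical-admissible, and Theorem~\ref{thm:def-abs}(2) delivers an ideal dialogue tree $\mT(\delta_i)$, so $\delta_i$ is sceptical-successful. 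The main obstacle I foresee is exactly this preferred-extension extension step in the sceptical case and the finite-expansion argument in the grounded case; the admissible and preferred cases should be largely a routine application of the correspondence principle.
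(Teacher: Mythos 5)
Your proposal follows the same skeleton as the paper's proof: from acceptance of $\phi$ obtain an abstract dialogue tree with the appropriate property (admissible, admissible and finite, or admissible with no admissibly-defensible opponent), then push it through part~(2) of the correspondence principle (Theorem~\ref{thm:def-abs}) and Lemma~\ref{lem:DT-subT} to land on a successful focused sub-dialogue, concluding via Definition~\ref{def:analogous-Def9}. The difference is in how the first step is discharged. The paper does not construct the abstract dialogue trees at all: it cites Lemma~1 and Theorem~1 of Thang--Dung--Hung and Theorem~3.4 of Dung--Mancarella--Toni, asserting that these binary-attack results "generalize to collective attacks." You instead build the trees explicitly from the witnessing extension $\mE$ (and, for the grounded case, reuse the ranking construction from the only-if direction of Lemma~\ref{lem:inf-groundedset}, which the paper proves but deploys only in the soundness section). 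Your route is more self-contained and makes the dependence on the collective-attack generalization visible rather than implicit; the paper's route is shorter but leans on unproved transfers from the literature. One step of yours is too quick, and it is precisely the step the collective-attack setting makes delicate: you claim that conflict-freeness of $\mE$ forces no argument to label both a proponent and an opponent node. Under collective attacks an argument $B \in \mE$ can be a member of a set $\mX \not\subseteq \mE$ that collectively attacks some $C \in \mE$ without $\mE$ attacking itself, so $B$ could in principle appear as an opponent node while also serving as a proponent. You would need to argue that the opponent children and the defending subsets can be chosen so as to avoid this (or weaken the target property to match what Definition~\ref{def:adm-ab-dt} actually requires); the paper hides the same issue inside its citation. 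Your sceptical-case argument (extend an admissible set containing an opponent $B$ to a preferred extension and derive a contradiction with sceptical acceptance) is the standard fundamental-lemma argument and matches what the cited Theorem~3.4 provides, so that case is fine modulo the same caveat.
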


 \begin{proof} 
 Let $\mT(\delta)$ be a dialogue tree drawn from a dialogue $\delta$ for $\phi$ and $\mT(\delta_1), \ldots , \mT(\delta_m)$ (with root $\phi$) be subtrees of $\mT(\delta)$. By Lemma~\ref{lem:DT-subT}, all sub-trees of $\mT(\delta)$ are the dialogue trees drawn from focused sub-dialogues $\delta_i$, with $i = 1, \ldots, m$, of $\delta$ and each such subtree is focused.
 By the correspondence principle, there is an abstract dialogue tree $\mT_{\G}^{i}$ for an argument $A$ with conclusion $\phi$, or simply, an abstract dialogue $\mT_{\G}^{i}$ for $\phi$, which corresponds to the dialogue tree $\mT(\delta_i)$ for $\phi$.
 
 It is clear that there is a focused sub-dialogue $\delta_i$  such that it is admissible-successful if $\phi$ is credulously accepted in some admissible/preferred extension of $\mAF_ \mK$.
 The argument given in Definition 2 and Lemma 1 of~\cite{ThangDH09} for binary attacks generalizes to collective attacks, implying that $A$ is accepted in some admissible extension iff $\mT_{\G}^{i}$ is admissible which in turn holds iff $\po$ wins and no argument labels both a proponent and an opponent node.  By Definition~\ref{def:analogous-Def9}, $\delta_i$ is admissible-successful iff $\mT_{\G}^{i}$ is admissible. Thus, the statement is proved.
 
  $\delta_i$ is preferred-successful if $\delta_i$ is admissible-successful.
  This result directly follows from the results of~\cite{Dung95} that states that an extension is preferred if it is admissible. Thus, if $\phi$ is credulously accepted in some preferred extension, then $\delta_i$ is preferred-successful. 
  
  The other statement follows in a similar way as a straightforward generalization of Theorem 1 of~\cite{ThangDH09} for the "grounded-successful" semantic; Definition 3.3 and Theorem 3.4 of~\cite{DUNG2007642} for the "sceptical-successful" semantic.
\end{proof}

\subsection{Proof of Theorem~\ref{thm:com-adm}}

\compadm*

\begin{proof} 
Given $\delta$ be a dialogue for a formula $\phi \in \mL$,
we assume that $\phi$ is credulously accepted under $\adm$ in $\mAF_{\delta}$ drawn from $\delta$ and $\delta$ is admissible-successful.
We prove that there exists a dialogue tree drawn from the sub-dialogue of $\delta$ such that the dialogue tree is defensive and non-redundant.



Let $\delta_1, \ldots, \delta_m$, where $i = 1, \ldots, m$, be sub-dialogues of $\delta$ and each sub-dialogue is focused.
Since $\phi$ is credulously accepted under $\adm$ in $\mAF_{\delta}$ drawn from the dialogue $\delta$, by Corollary~\ref{cor:extend-cor2}, the focused sub-dialogue $\delta_{i}$ is admissible-successful.
By Definition~\ref{def:analogous-Def9} (analogous to Definition 9 in~\cite{loanho_2024}), there is an admissible abstract dialogue tree $\mT^{i}_{\G}$ for $\phi$ drawn from the admissible-successful dialogue $\delta_i$.
By the correspondence principle, there exists a dialogue tree $\mT(\delta_i)$ for $\phi$ that corresponds to the abstract dialogue tree $\mT_{\G}^{i}$ for $\phi$ such that  $\mDE(\mT_{\G}^{i})  =  \mDE(\mT(\delta_i))$ and $ \mCU(\mT_{\G}^{i})  =  \mCU(\mT(\delta_ i))$.
Since $\mT_{\G}^{i}$ is admissible, $\mT(\delta_i)$ is defensive and non-redundant. Thus, the statement is proved.    
\end{proof}

\subsection{Proof of Theorem~\ref{thm:com-ground}}

\compground*

\begin{proof}
Given $\delta$ be a dialogue for a formula $\phi \in \mL$, we assume that $\phi$ is groundedly accepted under $\grd$ in $\mAF_{\delta}$ drawn from $\delta$ and $\delta$ is groundedly-successful. We prove that there exists a dialogue tree drawn from the sub-dialogue of $\delta$ such that the dialogue tree is defensive and finite.

Let $\delta_1, \ldots, \delta_m$, where $i = 1, \ldots, m$, be sub-dialogues of $\delta$ and each sub-dialogue is focused.
Since $\phi$ is groundedly accepted under $\grd$ in $\mAF_{\delta}$ drawn from the dialogue $\delta$, by Corollary~\ref{cor:extend-cor2}, the focused sub-dialogue $\delta_{i}$ is grounded-successful.
By Definition~\ref{def:analogous-Def9} (analogous to Definition 9 in~\cite{loanho_2024}), there is an admissible and finite abstract dialogue tree $\mT^{i}_{\G}$ for $\phi$ drawn from the grounded-successful dialogue $\delta_i$.
By the correspondence principle, there is a dialogue tree $\mT(\delta_i)$ for $\phi$ that corresponds to the abstract dialogue tree $\mT_{\G}^{i}$ for $\phi$ such that  $\mDE(\mT(\delta_i)) = \mDE(\mT_{\G}^{i})$ and $\mCU(\mT(\delta_ i)) = \mCU(\mT_{\G}^{i})$.
Since $\mT_{\G}^{i}$ is admissible, $\mT(\delta_i)$ is defensive. Since $\mT_{\G}^{i}$ is finite, $\mT(\delta_i)$ is finite.  Thus, the statement is proved.
\end{proof}

\subsection{Proof of Theorem~\ref{thm:com-scep}}

\compsceptical*

\begin{proof} We prove this theorem for the case of admissible semantics. The proof for preferred (stable) semantics is analogous.

Given $\delta$ be a dialogue for a formula $\phi \in \mL$, we assume that $\phi$ is credulously accepted under $\adm$ in $\mAF_{\delta}$ drawn from $\delta$ and $\delta$ is admissible-successful. 
We prove that there is an ideal dialogue tree $\mT(\delta)$ for $\phi$ drawn from the dialogue $\delta$.

Let $\delta_1, \ldots, \delta_m$, where $i = 1, \ldots, m$,  be focused sub-dialogues of $\delta$.
%
Since $\phi$ is sceptically accepted under $\adm$ in $\mAF_{\delta}$ drawn from the dialogue $\delta$, by Corollary~\ref{cor:extend-cor2}, the focused sub-dialogue $\delta_{i}$ is sceptical-successful.
By Definition~\ref{def:analogous-Def9} (analogous to Definition 9 in~\cite{loanho_2024}), there is an abstract dialogue tree $\mT^{i}_{\G}$ for $\phi$ drawn from the sceptical-successful dialogue $\delta_i$ such that $\mT^{i}_{\G}$ is admissible and for no opponent node in it there exists an admissible abstract dialogue tree for the argument labelling an opponent node.
Since $\mT^{i}_{\G}$ is admissible, by the correspondence principle, there is a dialogue tree $\mT(\delta_i)$ for $\phi$ that corresponds to the abstract dialogue tree $\mT_{\G}^{i}$ for $\phi$ such that  $\mDE(\mT(\delta_i)) = \mDE(\mT_{\G}^{i})$ and $\mCU(\mT(\delta_ i)) = \mCU(\mT_{\G}^{i})$.

For such the dialogue tree $\mT(\delta_i)$, we need to show that:
\begin{enumerate}
    \item $\mT(\delta_i)$ is defensive and non-redundant,
    \item none of the opponent arguments obtained from $\mT(\delta_i)$ belongs to an admissible set of potential arguments in $\mAF_{\delta_i}$ drawn from $\mT(\delta_i)$.
\end{enumerate}

Since the abstract dialogue $\mT_{\G}^{i}$ is admissible, (1) holds. 
We have that, for no opponent node in $\mT_{\G}^{i}$, there exists an admissible abstract dialogue tree for the argument labelling by $\op$. From this, we obtain that (2) holds.
By Definition~\ref{def:tree-ideal}, $\mT(\delta_i)$ is ideal. Thus, $\mT(\delta)$ is ideal. Thus, the statement is proved.


\end{proof}

\vskip 0.2in

\bibliographystyle{amsplain}

\end{document}